\documentclass{colt2017}
\usepackage{times}
\usepackage{comment}
\usepackage{graphicx}
\usepackage{enumitem}
\graphicspath{{images/} }
\makeatletter
\def\temp{dvips.def}
\ifx\Gin@driver\temp
\def\Ginclude@graphics#1{\def\temp{#1}---image \expandafter\strip@prefix\meaning\temp---}
\fi
\makeatother

\usepackage{epsf}
\usepackage{amsmath}
\usepackage{amssymb}
\usepackage{color}

\newcommand{\ignore}[1]{}

\ignore{
\newtheorem{theorem}{Theorem}

\newtheorem{lemma}[theorem]{Lemma}

}

\newcommand{\FF}{{\cal F}}

\newcommand{\R}{\mathbb{R}}

\newcommand{\N}{\mathbb{N}}

\newcommand{\De}{{\rm De}}
\newcommand{\As}{{\rm As}}

\newcommand{\DE}{{\rm DE}}
\newcommand{\AS}{{\rm AS}}
\newcommand{\lca}{{\rm lca}}

\newcommand{\OPT}{{\rm OPT}}
\newcommand{\Ray}{{\rm Ray}}

\newcommand{\G}{{\cal G}}
\newcommand{\I}{{\cal I}}

\newcommand{\imply}{{\Rightarrow}}

\newcounter{comment}\setcounter{comment}{1}

\newcommand{\qed}{\hfill\ensuremath{\square}}
\renewcommand{\qed}{}
\definecolor{mygray}{gray}{0.5}

\newcommand{\DanaC}[1]{{\color{red} (Comment \arabic{comment}: #1)}\stepcounter{comment}}
\newcommand{\DanaF}[2]{{\color{blue} #1 {{\color{mygray} (old: #2)}}}}
\newcommand{\DanaD}[1]{{\color{blue} #1}}
\newcommand{\DanaFD}[2]{#1}

\renewcommand{\DanaD}[1]{#1}

\coltauthor{\Name{Nader H. Bshouty} \Email{bshouty@cs.technion.ac.il}\\
 \addr Technion
 \AND
 \Name{Dana Drachsler-Cohen} \Email{ddana@cs.technion.ac.il}\\
 \addr Technion
 \AND
 \Name{Martin Vechev} \Email{martin.vechev@inf.ethz.ch}\\
 \addr ETH Zurich
 \AND
 \Name{Eran Yahav} \Email{yahave@cs.technion.ac.il}\\
 \addr Technion
 }
\begin{document}

\title{Learning Disjunctions of Predicates}


%
\maketitle
\begin{abstract} Let $\FF$ be a set of boolean functions. We present an algorithm for learning $\FF_\vee:=\{\vee_{f\in S}f\ |\ S\subseteq \FF\}$ from membership queries. Our algorithm asks at most $|\FF|\cdot \OPT(\FF_\vee)$ membership queries where $\OPT(\FF_\vee)$ is the minimum worst case number of membership queries for learning $\FF_\vee$. When $\FF$ is a set of halfspaces over a constant dimension space or a set of variable inequalities, our algorithm runs in polynomial time. 

The problem we address has practical importance in the field of program synthesis, where the goal is to synthesize a program that meets some requirements. Program synthesis has become popular especially in settings aiming to help end users. In such settings, the requirements are not provided upfront and the synthesizer can only learn them by posing membership queries to the end user. Our work enables such synthesizers to learn the exact requirements while bounding the number of membership queries.
\end{abstract}

\section{Introduction}
Learning from membership queries \citep{A87} has flourished due to its many applications in group testing \citep{DH00,DH06}, blood testing \citep{D43}, chemical leak testing, chemical reactions \citep{AC08}, electrical short detection, codes, multi-access channel communications \citep{BG07}, molecular biology, VLSI testing, AIDS screening, whole-genome shotgun sequencing \citep{ABKRS04}, DNA physical mapping \citep{GK98} and game theory \citep{P02}. For a list of many other applications, see~\cite{DH00,ND00,BGV05,DH06,Ci13,BG07}. Many of the new applications present new models and new problems\DanaFD{. One of these is programming by example (PBE), a popular setting of program synthesis \citep{Polozov:2015,Barowy:2015,FlashExtract:14,Gulwani:2011,BitManipulation:2010}. PBE has gained popularity because it enables end users to describe their intent to a program synthesizer via the intuitive means of input--output examples. The common setting of PBE is to synthesize a program based on a typically small set of user-provided examples, which are often an under-specification of the target program \citep{Polozov:2015,Barowy:2015,FlashExtract:14,Gulwani:2011}. As a result, the synthesized program is not guaranteed to fully capture the user's intent. Another (less popular) PBE approach is to limit the program space to a small (finite) set of programs and ask the user membership queries while there are non-equivalent programs in the search space \citep{BitManipulation:2010}. A natural question is whether one can do better than the latter approach without sacrificing the correctness guaranteed by the former approach. In this paper, we answer this question for a class of specifications (i.e., formulas) that captures a wide range of programs.
}{and in many of those applications, such as ..., the function being learned can be a composition of any predicates in a predefined set~\cite{}.}

\DanaFD{We study the problem of learning a disjunctive (or dually, a conjunctive) formula describing the user intent through membership queries. To capture a wide range of program specifications, the formulas are over arbitrary, predefined predicates.}{
In this paper, we study the learnability of the conjunction or disjunction of predicates.}
In our setting, the end user is the teacher that can answer membership queries. This work enables PBE synthesizers to guarantee to the user that they have synthesized the correct program, while bounding the number of membership queries they pose, thus reducing the burden on the user.

\DanaFD{More formally, let}{Let} $\FF$ be a set of predicates (i.e., boolean functions). Our goal is to learn the class $\FF_\vee$ of any disjunction of predicates in $\FF$. We present a learning algorithm SPEX, which learns any function in $\FF_\vee$ with polynomially many queries. \DanaFD{We then show that given}{Assuming} some computational complexity conditions on the set of predicates, SPEX \DanaFD{}{also} runs in polynomial time.

\DanaFD{We demonstrate the above on}{We apply the above to} two classes. The first is the class of disjunctions (or conjunctions, whose learning is the dual problem) \DanaFD{over}{of} any set $H$ of halfspaces over a constant dimension. For this class, we show that SPEX runs in polynomial time\DanaFD{}{(for $H_\vee$ and $H_\wedge$)}. In particular, this \DanaFD{shows that learning}{learns} any convex polytope over a constant dimension when the sides are from a given set $H$ can be done in polynomial time. For the case where the dimension is not constant, we show that learning this class implies P=NP.
\DanaD{We note that there are other applications for learning halfspaces; for example,~\cite{Hegedus:1995,Zolotykh95,Abboud:1999,Abasi2014}}.

The second \DanaFD{class we consider}{set} is conjunctions over $\FF$, where $\FF$ is the set of variable \DanaFD{inequalities}{inequality}, i.e., predicates of the form $[x_i>x_j]$ over $n$ variables. If the set is acyclic ($\wedge \FF\not=0$), we show that learning can be done in polynomial time. If the set is cyclic ($\wedge \FF =0$), we show that learning is equivalent to the problem of enumerating all the maximal acyclic subgraphs of a directed graph\DanaFD{, which}{. This problem} is still an open problem \citep{ABC12,BCL13,W16}.

The second class has practical importance because it consists of formulas that can be used to describe time-series charts. Time charts are used in many domains including financial analysis \citep{EncPatterns}, medicine \citep{Chuah2007}, and seismology \citep{MoralesEsteban}. Experts use these charts to predict important events (e.g., trend changes in a stock price) by looking for \emph{patterns} in the charts. A lot of research has focused on common patterns and many platforms enable these experts to write a program that upon detecting a specific pattern alerts the user (e.g., some platforms for finance analysts are \href{http://www.metaquotes.net/en/metatrader5}{MetaTrader},
\href{http://www.metastock.com/}{MetaStock},
\href{http://amibroker.com/}{Amibroker}). Unfortunately, writing programs is a complex task for these experts, as they are not programmers.
To help such experts, we integrated SPEX in a synthesizer that interacts with a user to learn a pattern (i.e., a conjunctive formula over variable inequalities). SPEX enables the synthesizer to guarantee that the synthesized program captures the user intent, while interacting with him only through membership queries that are visualized in charts.

The paper is organized as follows. Section~\ref{sec:model} describes the model and class we consider. Section~\ref{sec:defres} provides the main definitions we require for the algorithm. Section~\ref{sec:alg} presents the SPEX algorithm, discusses its complexity, and describes conditions under which SPEX is polynomial. Sections~\ref{sec:halfs} and \ref{sec:vars} discuss the two classes we consider: halfspaces and variable inequalities. Finally, Section~\ref{sec:synthesis} shows the practical application of SPEX in program synthesis.

\section{The Model and\DanaFD{}{the} Class}\label{sec:model}

\DanaFD{Let $\FF$ be}{Given} a \DanaD{finite} set of boolean functions over a domain $X$ \DanaD{(possibly infinite)}. We consider the class of functions $\FF_\vee:=\{\vee_{f\in S}f\ |\ S\subseteq \FF\}$. Our model
\DanaFD{assumes}{Given} a {\it teacher} that has a {\it target function} $F \in \FF_\vee$ and a {\it learner} that knows $\FF$ but not the target function. The teacher can answer {\it membership queries} for the target function -- that is, \DanaFD{given}{when receiving} $x\in X$ (from the learner), the teacher returns $F(x)$.  The goal of the learner (the learning algorithm) is to find the target function with a minimum number of membership queries.

\paragraph{Notations} Following are a few notations used throughout the paper.
\DanaFD{}{Let}$\OPT(\FF_\vee)$ denotes the minimum worst case number of membership queries \DanaFD{required}{that is needed} to learn a function $F$ in $\FF_\vee$.
\DanaFD{}{Define}Given $F\in \FF_\vee$, we denote by $S(F)$ the set that consists of all the functions in~$F$.
\DanaD{Formally, we define $S(F)=S$, where $S$ is the unique subset of $\mathcal{F}$ such that $F=\bigvee_{f\in S}f$.}
For example, $S(f_1 \vee f_2)=\{f_1,f_2\}$. \DanaFD{From this, it immediately follows that}{Then,} $F_1\equiv F_2$ if and only if $S(F_1)=S(F_2)$. For a set of functions $S\subseteq \FF$, we denote $\vee S:=\vee_{f\in S}f$.
Lastly, $[{\cal S}(x)]$ denotes the boolean value of a logical statement. Namely, given a statement ${\cal S}(x):X\to \{T,F\}$ with a free variable $x$, \DanaFD{its}{the} boolean function $[{\cal S}(x)]:X \to \{0,1\}$ is defined as $[{\cal S}(x)]= 1$ if ${\cal S}(x)=T$, and $[{\cal S}(x)]=0$ otherwise.
For example, $[x \ge 2]=1$ if and only if the interpretation of $x$ is greater than $2$.

\section{Definitions and Preliminary Results}\label{sec:defres}
In this section, we provide the definitions used in this paper and show preliminary results. We begin by defining an equivalence relation over the set of disjunctions and defining the representatives of the equivalence classes. Thereafter, we define a partial order over the disjunctions and related notions (descendant, ascendant, and lowest/greatest common descendant/ascendant). We complete this section with the notion of a witness, which is central to our algorithm.

\subsection{\DanaFD{An Equivalence Relation Over $\FF_\vee$}{Partial Order \DanaF{over}of $\FF_\vee$}}\label{sec:defs}
In this section, we present an equivalence relation over $\FF_\vee$ and define the representatives of the equivalence classes. This enables us in later sections to focus on the representative elements from $\FF_\vee$.
Let $\FF$ be a set of boolean functions over the domain $X$. \DanaFD{The}{Define the} equivalence relation $=$ over $\FF_\vee$ \DanaFD{is defined as follows:}{where} two disjunctions $F_1,F_2\in \FF_\vee$ are equivalent ($F_1= F_2$) if $F_1$ is logically equal to $F_2$. In other words, they represent the same function \DanaFD{(from $X$ to $\{0,1\}$)}{$X\to \{0,1\}$}. We \DanaFD{}{will} write $F_1\equiv F_2$ to denote that $F_1$ and $F_2$ are identical\DanaFD{; that}{. That} is, they have the same representation. For example, consider $f_1,f_2:\{0,1\}\to \{0,1\}$ where $f_1(x)=1$ and $f_2(x)=x$. Then, $f_1\vee f_2 = f_1$ but $f_1 \vee f_2 \not\equiv f_1$.

We denote by $\FF_\vee^*$ the set of equivalence classes of $=$ and write each equivalence class as $[F]$, where $F\in\FF_\vee$. Notice that if $[F_1]=[F_2]$, then $[F_1\vee F_2]=[F_1]=[F_2]$. Therefore, for every $[F]$, we can choose the {\it representative element} to be $G_F:=\vee_{F'\in S}F'$ where $S\subseteq \FF$ is the maximum size set that satisfies $\vee S=F$. We denote by $G(\FF_\vee)$ the set of all representative elements. Accordingly, $G(\FF_\vee)=\{G_F\ |\ F\in\FF_\vee\}$.
\DanaFD{As an example, consider}{Before we proceed we give an example. Consider} the set $\FF$ consisting of four functions $f_{11},f_{12},f_{21},f_{22}:\{1,2\}^2\to \{0,1\}$ where $f_{ij}(x_1,x_2)=[x_i\ge j]$. There are $2^4=16$ elements in $\Ray^2_2:=\FF_\vee$ and five representative functions in $G(\FF_\vee)$: $G(\FF_\vee)=\{f_{11}\vee f_{12}\vee f_{21}\vee f_{22}$, $f_{12}\vee f_{22}$, $f_{12}$, $f_{22},0\}$ (where $0$ is the zero function).

The below listed facts follow\DanaFD{}{s} immediately from the above definitions:
\begin{lemma}\label{sfact} \DanaFD{Let $\FF$ be a set of boolean functions. Then,}{We have}
\begin{enumerate}[nosep,nolistsep]
\item The number of logically non-equivalent boolean functions in $\FF_\vee$ is $|G(\FF_\vee)|$.
\item For every $F\in\FF_\vee$\DanaFD{,}{we have} $G_F=F$.
\item \label{sfact3} For every $G\in G(\FF_\vee)$ and $f\in \FF\backslash S(G)$\DanaFD{,}{we have} $G\vee f\not=G$.
\item For every $F\in\FF_\vee$\DanaFD{,}{we have} $\vee S(F)\equiv F$.
\item If $G_1,G_2\in G(\FF_\vee)$, then $G_1=G_2$ if and only if $G_1\equiv G_2$.
\end{enumerate}
\end{lemma}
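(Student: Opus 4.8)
The plan is to reduce all five statements to a single structural fact: for every $F\in\FF_\vee$ the collection of subsets $\{S\subseteq\FF : \vee S = F\}$ has a unique maximum-size (indeed inclusion-maximal) member, namely $\FF_{\le F}:=\{f\in\FF : f(x)=1 \Rightarrow F(x)=1 \text{ for all } x\in X\}$. First I would establish this. The family is nonempty since $F\in\FF_\vee$, and it is closed under union: if $\vee S_1=F$ and $\vee S_2=F$, then $\vee(S_1\cup S_2)=(\vee S_1)\vee(\vee S_2)=F$. Because $\FF$ is finite, the union of all members is itself a member containing every other member, so it is simultaneously the unique inclusion-maximal and the unique maximum-size set; call it $S_{\max}$. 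To see $S_{\max}=\FF_{\le F}$, note every $f\in S_{\max}$ satisfies $f\le\vee S_{\max}=F$ (pointwise domination), while conversely every $f\in\FF_{\le F}$ can be adjoined without changing the disjunction (since $F\vee f=F$), so $\FF_{\le F}$ lies in the family and hence $\FF_{\le F}\subseteq S_{\max}$. This identifies $S_{\max}=\FF_{\le F}$ and, in particular, shows $G_F$ is well defined with $S(G_F)=\FF_{\le F}$.

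With this in hand, parts 2--4 are immediate. Part 2 is exactly $\vee S_{\max}=F$, which holds by membership of $S_{\max}$ in the family. For part 3, if $G\in G(\FF_\vee)$ and $f\in\FF\setminus S(G)$, then $f\notin\FF_{\le G}$, so there is some $x$ with $f(x)=1$ and $G(x)=0$; hence $(G\vee f)(x)=1\ne 0=G(x)$, giving $G\vee f\ne G$. Part 4 is definitional once we read each element of $\FF_\vee$ as the formal disjunction over its defining subset: $S(F)$ is that subset, so $\vee S(F)$ and $F$ have identical representations and thus $\vee S(F)\equiv F$.

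Finally, parts 1 and 5 follow from the observation that the assignment $F\mapsto G_F$ factors through logical equivalence and inverts it. Since $S(G_F)=\FF_{\le F}$ depends only on the function computed by $F$, equivalent functions map to identical representatives, so $[F]\mapsto G_F$ is a well-defined map from $\FF_\vee^*$ to $G(\FF_\vee)$; it is surjective by definition of $G(\FF_\vee)$ and injective because $G_{F_1}=G_{F_2}$ forces $F_1=G_{F_1}=G_{F_2}=F_2$ by part 2. Hence it is a bijection and $|G(\FF_\vee)|$ counts the logically non-equivalent functions, which is part 1. For part 5, $\equiv$ trivially implies $=$; conversely if $G_1,G_2\in G(\FF_\vee)$ and $G_1=G_2$, then $S(G_1)=\FF_{\le G_1}=\FF_{\le G_2}=S(G_2)$, and equal defining subsets mean $G_1\equiv G_2$. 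The only real subtlety throughout is bookkeeping the distinction between logical equality $=$ and representational identity $\equiv$; the single genuinely substantive step is the union-closure argument establishing uniqueness of the maximal set, and I expect that to be the crux.
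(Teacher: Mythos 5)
Your proof is correct. Note that the paper offers no proof of this lemma at all -- it states that the facts ``follow immediately from the above definitions'' -- so there is no argument to compare against line by line; what your write-up adds is the one piece of genuine content the paper glosses over. Specifically, the paper's definition of the representative $G_F$ (``the maximum size set $S\subseteq \FF$ that satisfies $\vee S=F$'') tacitly assumes such a maximum-size set is unique, and your union-closure argument is exactly the justification that makes $G_F$ well defined; your identification $S(G_F)=\{f\in\FF \mid f\imply F\}$ is precisely the formula $G_F=\vee_{f\imply F}f$ that the paper asserts, again without proof, at the start of its section on finding immediate descendants (just before Lemma~\ref{DG}). So your route is consistent with the paper's intent and makes it rigorous. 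One small presentational point: when you argue that $\FF_{\le F}$ itself belongs to the family $\{S \mid \vee S = F\}$, the cleanest phrasing is that $S(F)\subseteq \FF_{\le F}$ gives $F\imply \vee\FF_{\le F}$, while $f\imply F$ for each $f\in\FF_{\le F}$ gives $\vee\FF_{\le F}\imply F$; your ``adjoin one element at a time'' formulation amounts to the same thing but leaves implicit what the elements are being adjoined to. Also be careful that finiteness of $\FF$, which your union argument needs, is indeed part of the paper's standing assumptions (Section~\ref{sec:model} takes $\FF$ finite, over a possibly infinite domain), so no additional hypothesis is being smuggled in.
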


\subsection{A Partial Order Over $\FF_\vee$}\label{sec:32}
In this section, we define a partial order over $\FF_\vee$ and present related definitions.
The partial order, denoted by $\imply$, is defined as follows:
$F_1\imply F_2$ if $F_1$ logically implies~$F_2$.
Consider the Hasse diagram $H(\FF_\vee)$ of $G(\FF_\vee)$ for this partial order. The maximum (top) element in the diagram is $G_{\max}:=~\vee_{f\in \FF}f$. The minimum (bottom) element is $G_{\min}:=\vee_{f\in \O}f$, i.e., the zero function. Figure~\ref{HasseRay22} shows an illustration of the Hasse diagram of $\Ray^2_2$ (from Section~\ref{sec:defs}). \DanaD{Figures~\ref{HasseClause32} and \ref{RAY23E} show other examples of Hasse diagrams: Figure~\ref{HasseClause32} shows the Hasse diagram of boolean variables, while Figure~\ref{RAY23E} shows an example that extends the example of $\Ray^2_2$.}

In a Hasse diagram, $G_1$ is a {\it descendant} (resp., {\it ascendent}) of $G_2$ if there is a (nonempty) downward path from $G_2$ to $G_1$ (resp., from $G_1$ to $G_2$), i.e., $G_1\imply G_2$ (resp., $G_2\imply G_1$) and $G_1\not=G_2$. $G_1$ is an {\it immediate descendant} of $G_2$ in $H(\FF_\vee)$ if $G_1\imply G_2$, $G_1\not=G_2$ and there is no $G$ such that $G\not= G_1$, $G\not=G_2$ and $G_1\imply G\imply G_2$. $G_1$ is an {\it immediate ascendant} of $G_2$ if $G_2$ is an immediate descendant of $G_1$.
We now show (all proofs for this section appear in Appendix~\ref{sec3proofs}):
\begin{lemma}\label{fact} Let $G_1$ be an immediate descendant of $G_2$ and $F\in\FF_\vee$.
If $G_1\imply F\imply G_2$, then $G_1=F$ or $G_2=F$.
\end{lemma}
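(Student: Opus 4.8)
The plan is to reduce the arbitrary element $F\in\FF_\vee$ to its canonical representative $G_F\in G(\FF_\vee)$ and then invoke the definition of immediate descendant, which is phrased only for nodes of the Hasse diagram. The one delicate point is exactly this mismatch: the hypothesis that $G_1$ is an immediate descendant of $G_2$ forbids an intermediate \emph{representative}, whereas $F$ itself need not be a representative. Normalizing $F$ to $G_F$ is what bridges the gap, and this replacement is legitimate because the partial order $\imply$ is logical implication, hence well defined on the equivalence classes of $=$.

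First I would recall that $F=G_F$ logically (Lemma~\ref{sfact}) and that $G_F\in G(\FF_\vee)$ by the very definition of the set of representatives. Since $=$ is logical equality and $\imply$ is logical implication, the chain $G_1\imply F\imply G_2$ transports verbatim to $G_1\imply G_F\imply G_2$: from $G_1\imply F$ together with $F=G_F$ we obtain $G_1\imply G_F$, and from $F=G_F$ together with $F\imply G_2$ we obtain $G_F\imply G_2$.

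Now $G_F$ is a node of the Hasse diagram $H(\FF_\vee)$ satisfying $G_1\imply G_F\imply G_2$. By the defining property of an immediate descendant, there is no node $G$ of $H(\FF_\vee)$ with $G\neq G_1$, $G\neq G_2$ and $G_1\imply G\imply G_2$; applying this to $G=G_F$ forces $G_F=G_1$ or $G_F=G_2$. Transporting this conclusion back through $F=G_F$ yields $F=G_1$ or $F=G_2$, as required. Here the two possible readings of $\neq$ in the definition agree, since for representatives logical equality coincides with identical representation by Lemma~\ref{sfact}, so no ambiguity arises.

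I expect the only genuine obstacle to be the conceptual one highlighted above, namely recognizing that the immediate-descendant condition quantifies over representatives in $G(\FF_\vee)$ while the lemma allows an arbitrary $F\in\FF_\vee$, so that $F$ must first be normalized to $G_F$. Once that observation is in place, every remaining step is a one-line consequence of $\imply$ being a well-defined order on equivalence classes.
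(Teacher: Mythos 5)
Your proof is correct and follows essentially the same route as the paper's: the paper's proof likewise passes from $F$ to its representative $G_F$, uses $F=G_F$ to get $G_1\imply G_F\imply G_2$, and concludes by the definition of immediate descendant. Your write-up merely spells out the normalization step and the well-definedness of $\imply$ on equivalence classes, which the paper leaves implicit.
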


We denote by $\De(G)$ and $\As(G)$ the sets of all the immediate descendants and immediate ascendants of \DanaFD{}{and neighbours of}$G$, respectively. We further denote by $\DE(G)$ and $\AS(G)$ the sets of all $G$'s descendants and ascendants, respectively.
For $G_1$ and $G_2$, we define their {\it lowest common ascendent} (resp., greatest common descendant) $G=\lca(G_1,G_2)$ (resp., $G=\gcd(G_1,G_2)$) to be the boolean function $G\in G(\FF_\vee)$ -- that is, the minimum (resp., maximum) element in $\AS(G_1)\cap \AS(G_2)$ (resp., $\DE(G_1)\cap \DE(G_2)$).
\DanaFD{}{Therefore, we can show:}Therefore, we can show Lemma~\ref{trivial}. Lemma~\ref{trivial} abbreviates ($G_1\imply G$ and $G_2\imply G$) to $G_1,G_2\imply G$ and ($G\imply G_1$ and $G\imply G_2$) to $G\imply G_1,G_2$.
\begin{lemma}\label{trivial}
Let $G_1,G_2\in G(\FF_\vee)$ and $F\in \FF_\vee$.
\begin{enumerate}[nosep,nolistsep]
\item If $G_1,G_2\imply F\imply \lca(G_1,G_2)$, then $F=\lca(G_1,G_2)$.
\item If $ \gcd(G_1,G_2)\imply F\imply G_1,G_2$, then $F=\gcd(G_1,G_2)$.
\end{enumerate}
\end{lemma}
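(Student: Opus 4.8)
The plan is to derive both parts from the single fact that $\lca$ and $\gcd$ are defined by an \emph{extremal} (universal) property, combined with Lemma~\ref{sfact}(2), which says every $F\in\FF_\vee$ is logically equal to a representative $G_F\in G(\FF_\vee)$. Since the two parts are order-duals of one another (reversing $\imply$ swaps ascendants with descendants and $\lca$ with $\gcd$), I would prove part~1 in full and obtain part~2 by the symmetric argument.

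For part~1, write $L=\lca(G_1,G_2)$. By definition $L$ is the minimum element of $\AS(G_1)\cap\AS(G_2)$, so it has two properties: it is a common ascendant, $G_1\imply L$ and $G_2\imply L$; and it is least, i.e. $L\imply H$ for every $H\in G(\FF_\vee)$ with $G_1\imply H$ and $G_2\imply H$. Now assume $G_1,G_2\imply F\imply L$. By Lemma~\ref{sfact}(2) we have $F=G_F$ with $G_F\in G(\FF_\vee)$, so from $G_1\imply F$ and $G_2\imply F$ we get $G_1\imply G_F$ and $G_2\imply G_F$; thus $G_F$ is a common ascendant of $G_1$ and $G_2$ lying in $G(\FF_\vee)$. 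Minimality of $L$ then yields $L\imply G_F=F$. Combining this with the hypothesis $F\imply L$ and the antisymmetry of $\imply$ (as an order on functions taken up to logical equality $=$) gives $F=L=\lca(G_1,G_2)$, as claimed.

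Part~2 is the exact dual: with $M=\gcd(G_1,G_2)$ the maximum element of $\DE(G_1)\cap\DE(G_2)$, the hypotheses $F\imply G_1$ and $F\imply G_2$ make $G_F$ a common descendant of $G_1,G_2$ in $G(\FF_\vee)$, so maximality of $M$ gives $G_F\imply M$, i.e. $F\imply M$; together with $M\imply F$ this forces $F=M$.

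The only substantive content is the extremal characterization of $\lca$ and $\gcd$ and the need to pass from $F$ to its representative $G_F$, since the minimum/maximum in the definition ranges over $G(\FF_\vee)$ while $F$ ranges over all of $\FF_\vee$; Lemma~\ref{sfact}(2) is precisely what bridges this gap. The one point to handle with care is the degenerate case where $G_F$ coincides with $G_1$ or $G_2$ (i.e. $G_1,G_2$ comparable), where the strict reading of ``ascendant''/``descendant'' forces one to interpret $\lca/\gcd$ through their least-upper-bound / greatest-lower-bound meaning; under that join/meet reading the case is automatic and needs no separate treatment. I would also remark that this statement is just the ``global'' counterpart of Lemma~\ref{fact}: the latter rules out a strict intermediate between \emph{immediate} neighbours, whereas here minimality/maximality rules out any proper element between $G_1,G_2$ and their join/meet.
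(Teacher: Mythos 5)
Your proof is correct and takes essentially the same route as the paper's: the paper likewise passes from $F$ to its representative $G_F$ (so that $G_F=F$ and $G_F\in G(\FF_\vee)$), observes $G_1,G_2\imply G_F\imply \lca(G_1,G_2)$, and concludes $G_F=\lca(G_1,G_2)$ directly from the minimality in the definition of $\lca$, handling part~2 by the dual remark. Your added care about the degenerate case where $G_F$ coincides with $G_1$ or $G_2$ (reading $\lca/\gcd$ as join/meet rather than via strict ascendants) is a detail the paper's terse proof glosses over, but it does not alter the argument.
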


\DanaFD{Lemma~\ref{trivial} leads us to Lemma~\ref{lca}:}{We now prove:}
\begin{lemma}\label{lca} Let $G_1,G_2\in G(\FF_\vee)$. Then, $\lca(G_1,G_2)=G_1\vee G_2$.

In particular, if $G_1,G_2$ are two distinct immediate descendants of $G$, then $G_1\vee G_2=G$.
\end{lemma}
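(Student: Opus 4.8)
The plan is to prove $\lca(G_1,G_2) = G_1 \vee G_2$ by first establishing that $G_1 \vee G_2$ is a legitimate element of $G(\FF_\vee)$, then showing it is a common ascendant of $G_1$ and $G_2$, and finally that it is the \emph{lowest} such ascendant. First I would set $F := G_1 \vee G_2$. Since $G_1, G_2 \in G(\FF_\vee) \subseteq \FF_\vee$, their disjunction is again a disjunction of predicates in $\FF$, so $F \in \FF_\vee$, and by Lemma~\ref{sfact} (part 2) we have $G_F = F$, i.e. $F$ is its own representative and hence $F \in G(\FF_\vee)$. The fact that $F$ is a common ascendant is immediate from the definition of $\imply$: since $G_1 \imply G_1 \vee G_2$ and $G_2 \imply G_1 \vee G_2$ (each disjunct logically implies the whole disjunction), we get $G_1, G_2 \imply F$, so $F \in \AS(G_1) \cap \AS(G_2)$.

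The crux is to show $F$ is the minimum element of $\AS(G_1) \cap \AS(G_2)$, which is exactly what $\lca(G_1,G_2)$ denotes. For this I would invoke Lemma~\ref{trivial} (part 1), which is tailor-made for the situation. Writing $L := \lca(G_1,G_2)$, we know $L$ is by definition a common ascendant, so $G_1 \imply L$ and $G_2 \imply L$, which forces $G_1 \vee G_2 \imply L$, i.e. $F \imply L$. On the other hand, since $L \in \AS(G_1) \cap \AS(G_2)$ and $F = G_1 \vee G_2$ is a common ascendant that $L$ dominates, we have the chain $G_1, G_2 \imply F \imply L$. But $L$ is the minimum element of $\AS(G_1)\cap\AS(G_2)$ and $F$ belongs to that set, so $L \imply F$ as well. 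Hence $F \imply L$ and $L \imply F$ give $F = L$, which is the claim. Alternatively, the hypothesis $G_1, G_2 \imply F \imply L = \lca(G_1,G_2)$ lets Lemma~\ref{trivial}(1) conclude $F = \lca(G_1,G_2)$ in one step, once we know $F \imply L$.

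For the ``in particular'' clause, suppose $G_1, G_2$ are two distinct immediate descendants of $G$. Then $G_1 \imply G$ and $G_2 \imply G$, so $G$ is a common ascendant, giving $G_1 \vee G_2 \imply G$ and thus $\lca(G_1,G_2) \imply G$ by the main claim. I would then apply Lemma~\ref{fact}: since $G_1$ is an immediate descendant of $G$ and $G_1 \imply G_1 \vee G_2 \imply G$, Lemma~\ref{fact} forces $G_1 \vee G_2 = G_1$ or $G_1 \vee G_2 = G$. The first alternative would mean $G_2 \imply G_1$; combined with $G_1 \imply G_1 \vee G_2 = G_1$ this is consistent only if $G_2 \imply G_1$, and a symmetric argument with $G_2$ as the immediate descendant rules out $G_1 \vee G_2 = G_2$ unless $G_1 \imply G_2$. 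Since $G_1 \neq G_2$ are distinct immediate descendants of the same element, neither can imply the other (an immediate descendant relationship leaves no room for an intermediate element), so both degenerate cases are excluded and we are left with $G_1 \vee G_2 = G$.

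The main obstacle I anticipate is the careful handling of the equivalence-versus-identity distinction (the $=$ versus $\equiv$ subtlety emphasized in Lemma~\ref{sfact}, part 5): one must confirm that $G_1 \vee G_2$, formed as a syntactic disjunction, actually lands on the representative element rather than merely some member of its equivalence class. This is precisely where Lemma~\ref{sfact}(2) does the work, guaranteeing $G_{F} = F$ for every $F \in \FF_\vee$, so that $F = G_1 \vee G_2$ is genuinely in $G(\FF_\vee)$ and the antisymmetry $F \imply L$, $L \imply F \Rightarrow F = L$ is valid within the partial order on representatives. The rest is a short antisymmetry argument once the right preliminary lemmas are lined up.
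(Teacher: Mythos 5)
Your overall route is the paper's own: from $G_1,G_2\imply G_1\vee G_2$ and $G_1\vee G_2\imply \lca(G_1,G_2)$ (the lca being a common ascendant), apply Lemma~\ref{trivial}(1) to $F=G_1\vee G_2\in\FF_\vee$ and conclude $F=\lca(G_1,G_2)$ in one step --- the ``alternative'' you mention at the end of your second paragraph is, verbatim, the paper's two-line proof. Your derivation of the ``in particular'' clause via Lemma~\ref{fact} (ruling out $G_1\vee G_2=G_1$ because $G_2\imply G_1$ would place $G_1$ strictly between $G_2$ and $G$, contradicting that $G_2$ is an \emph{immediate} descendant) is also correct; the paper leaves that part implicit.

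However, the step you single out as the crux is genuinely wrong: $F=G_1\vee G_2$ is \emph{not} in general an element of $G(\FF_\vee)$, and Lemma~\ref{sfact}(2) does not say it is. That lemma asserts $G_F=F$ with respect to the equivalence $=$ (logical equality), not the identity $\equiv$: it says $F$ is logically equal to its representative, not that the syntactic disjunction $F$ \emph{is} the representative. The paper warns about exactly this immediately after Lemma~\ref{lca}: $S(G_1\vee G_2)$ can be a proper subset of $S(\lca(G_1,G_2))$, so $G_1\vee G_2$ need not lie in $G(\FF_\vee)$ (Figure~\ref{HasseClause32}, right). Concretely, for $\FF=\{x_1,\,x_2,\,x_1\oplus x_2\}$ both $x_1$ and $x_2$ are representatives, yet $x_1\vee x_2$ is not, since the representative of its class is $x_1\vee x_2\vee(x_1\oplus x_2)$. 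Consequently your assertion that ``$F$ belongs to $\AS(G_1)\cap\AS(G_2)$, so $L\imply F$ by minimality'' is not licensed as written: minimality of the lca quantifies only over elements of $G(\FF_\vee)$, and your closing paragraph, which claims Lemma~\ref{sfact}(2) guarantees $F$ ``genuinely in $G(\FF_\vee)$,'' doubles down on the misreading. The repair is exactly what the paper's proof of Lemma~\ref{trivial} does: pass to the representative $G_F\in G(\FF_\vee)$, note $G_1,G_2\imply G_F\imply\lca(G_1,G_2)$, conclude $G_F=\lca(G_1,G_2)$ by minimality, and then $F=G_F$ under $=$ finishes the argument. So drop the claim $F\in G(\FF_\vee)$ and route everything through Lemma~\ref{trivial}(1) as stated; the rest of your write-up stands.
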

Note that this does not imply that $S(G_1\vee G_2)=S(G_1)\cup S(G_2)=S(\lca(G_1,G_2))$. In particular, $G_1\vee G_2$ is not necessarily in $G(\FF_\vee)$; see, for example, Figure~\ref{HasseClause32} (right).

Lemma~\ref{GG} follows from the fact that if $G_1$ is a descendant of $G_2$, then $G_1\imply G_2$, and therefore, $G_1\vee G_2=G_2$.

\begin{lemma}\label{GG}
If $G_1$ is a descendant of $G_2$, then $S(G_1)\subsetneq S(G_2)$.
\end{lemma}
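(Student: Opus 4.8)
The plan is to translate the desired containment of support sets into a statement about logical disjunction and then invoke the maximality built into the definition of a representative element. Since $G_1$ is a descendant of $G_2$, by definition $G_1\imply G_2$ and $G_1\not=G_2$, with both $G_1,G_2\in G(\FF_\vee)$. First I would record the observation quoted just before the lemma: because $G_1\imply G_2$, the disjunction $G_1\vee G_2$ is logically equal to $G_2$.

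Next I would rewrite this disjunction in terms of support sets. By Lemma~\ref{sfact} (item 4) each of $G_1,G_2$ equals the disjunction of its own support, so $G_1\vee G_2=\vee(S(G_1)\cup S(G_2))$. Combining with the previous step gives $\vee(S(G_1)\cup S(G_2))=G_2$.

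The main step is to exploit the maximality in the definition of $G(\FF_\vee)$: the representative $G_2$ is chosen so that $S(G_2)$ is the maximum-size subset of $\FF$ whose disjunction equals $G_2$. Since the union of any two subsets of $\FF$ each having disjunction $G_2$ again has disjunction $G_2$, this maximum subset in fact contains every such subset. As $S(G_1)\cup S(G_2)$ is a subset of $\FF$ with disjunction equal to $G_2$, I conclude $S(G_1)\cup S(G_2)\subseteq S(G_2)$, and hence $S(G_1)\subseteq S(G_2)$. I expect this maximality argument to be the only delicate point; the rest is bookkeeping with the definitions.

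Finally, to upgrade this to strict containment, I would argue by contradiction: if $S(G_1)=S(G_2)$, then by Lemma~\ref{sfact} (item 4) we would have $G_1=\vee S(G_1)=\vee S(G_2)=G_2$, contradicting $G_1\not=G_2$. Thus $S(G_1)\subsetneq S(G_2)$.
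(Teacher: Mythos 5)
Your proof is correct and takes essentially the same route as the paper, whose entire justification is the observation that $G_1\imply G_2$ forces $G_1\vee G_2=G_2$. You merely make explicit the two steps the paper leaves implicit: that by union-closure the maximum-size set $S(G_2)$ contains every subset of $\FF$ whose disjunction equals $G_2$ (so $S(G_1)\cup S(G_2)\subseteq S(G_2)$), and that equality of the supports would contradict $G_1\not=G_2$, giving strictness.
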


\DanaFD{Lemma~\ref{GG} enables us to show the following.}{We now show}
\begin{lemma}\label{gcd} Let $G_1,G_2\in G(\FF_\vee)$. Then, $S(G_1)\cap S(G_2)=S(\gcd(G_1,G_2))$.

In particular, if $G_1,G_2\in G(\FF_\vee)$, then $\vee(S(G_1)\cap S(G_2))\in G(\FF_\vee)$.

Also, if $G_1,G_2$ are two distinct immediate ascendants of~$G$, then $S(G_1)\cap S(G_2)=S(G)$.
\end{lemma}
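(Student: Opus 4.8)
The plan is to reduce everything to a single characterization of the representative index sets in terms of the partial order: for every $G\in G(\FF_\vee)$,
\[
S(G)=\{f\in\FF\ :\ f\imply G\}.
\]
The inclusion $\supseteq$ holds because $f\imply G$ is the same as $G\vee f=G$, so $f$ could be added to $S(G)$ without changing the disjunction, and by the maximality in the definition of the representative $f$ must already lie in $S(G)$; this is exactly the contrapositive of the third item of Lemma~\ref{sfact}. The inclusion $\subseteq$ is immediate, since every $f\in S(G)$ is a disjunct of $G=\vee S(G)$. I would record this identity first, as the rest of the argument is then essentially set algebra on index sets.

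Next I would set $T:=S(G_1)\cap S(G_2)$ and $H:=\vee T\in\FF_\vee$, and prove that $H=\gcd(G_1,G_2)$ and $S(H)=T$. First, every $f\in T$ satisfies $f\imply G_1$ and $f\imply G_2$, so $H\imply G_1$ and $H\imply G_2$; thus $H$ is a common descendant of $G_1$ and $G_2$ under $\imply$. Second, $H$ dominates every common descendant: if $F\in\FF_\vee$ satisfies $F\imply G_1$ and $F\imply G_2$, then each $f\in S(F)$ obeys $f\imply\vee S(F)\equiv F\imply G_i$ (using the fourth item of Lemma~\ref{sfact}), so $S(F)\subseteq T$ and hence $F\equiv\vee S(F)\imply\vee T=H$. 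Third, $H$ is itself a representative: from $H\imply G_1$ and $H\imply G_2$, any $f$ with $f\imply H$ satisfies $f\imply G_1$ and $f\imply G_2$, so $f\in T$; together with the trivial reverse inclusion this yields $\{f:f\imply H\}=T$, i.e.\ $T$ is the \emph{maximal} index set with disjunction $H$, so $H\in G(\FF_\vee)$ and $S(H)=T$. Being a common descendant that dominates all common descendants, $H$ is the greatest common descendant, whence $\gcd(G_1,G_2)=H$ and $S(\gcd(G_1,G_2))=T=S(G_1)\cap S(G_2)$. The displayed ``in particular'' claim, $\vee(S(G_1)\cap S(G_2))=H\in G(\FF_\vee)$, is precisely the third point above.

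For the final assertion, let $G_1,G_2$ be distinct immediate ascendants of $G$, so $G$ is an immediate descendant of each. Then $G\imply G_1$ and $G\imply G_2$, so $G$ is a common descendant and, by the previous paragraph, $G\imply\gcd(G_1,G_2)=H\imply G_1$. Since $G$ is an \emph{immediate} descendant of $G_1$, Lemma~\ref{fact} forces $H=G$ or $H=G_1$. The case $H=G_1$ would give $G_1=H\imply G_2$ with $G_1\neq G_2$, so $G_1$ is a strict descendant of $G_2$ sandwiched as $G\imply G_1\imply G_2$ with $G\neq G_1$, contradicting that $G$ is an immediate descendant of $G_2$ (again via Lemma~\ref{fact}). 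Hence $H=G$ and $S(G)=S(H)=S(G_1)\cap S(G_2)$.

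The only genuinely delicate step is the third point of the middle paragraph — showing that the syntactic object $\vee(S(G_1)\cap S(G_2))$ is \emph{already} a representative element, equivalently that intersecting two maximal index sets again yields a maximal index set. This is where the implication characterization of $S(G)$ does the real work; without it one cannot rule out that $S(G_1)\cap S(G_2)$ fails to be maximal, which is exactly the subtlety flagged after Lemma~\ref{lca} for \emph{unions} (where $S(G_1)\cup S(G_2)$ need not be maximal). Everything else is bookkeeping with $\imply$ and the sandwich Lemma~\ref{fact}.
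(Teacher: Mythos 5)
Your proof is correct, but it takes a genuinely different route from the paper's. The paper argues top-down, taking $G=\gcd(G_1,G_2)$ as given by its definition: Lemma~\ref{GG} yields $S(G)\subseteq S(G_1)\cap S(G_2)$; then the sandwich $G=\vee S(G)\imply \vee(S(G_1)\cap S(G_2))\imply G_1,G_2$ together with Lemma~\ref{trivial} forces $G=\vee(S(G_1)\cap S(G_2))$, and the maximality of $S(G)$ in the definition of a representative gives the reverse inclusion $S(G_1)\cap S(G_2)\subseteq S(G)$; the ``in particular'' and ``also'' clauses are left implicit there. You instead argue bottom-up from the characterization $S(G)=\{f\in\FF \mid f\imply G\}$ (your contrapositive of item~3 of Lemma~\ref{sfact}, essentially Lemma~\ref{DG} in disguise), construct $H=\vee(S(G_1)\cap S(G_2))$ directly, and verify that it is a representative, a common lower bound of $G_1,G_2$, and above every common lower bound. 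This is longer but buys two things the paper's proof does not: it establishes \emph{existence} of the meet --- the paper's definition of $\gcd$ as the maximum of $\DE(G_1)\cap \DE(G_2)$ silently assumes such a maximum exists, an assumption Lemma~\ref{trivial} inherits --- and it supplies an explicit argument (via Lemma~\ref{fact}) for the final assertion about two distinct immediate ascendants, which the paper never writes out. Your closing remark also correctly isolates the crux: maximality of the intersected index set is exactly what can fail for unions (the remark after Lemma~\ref{lca}), and your implication characterization is what rules that failure out for intersections. One cosmetic caveat: in the degenerate case $G_1\imply G_2$ your $H$ equals $G_1$, which is not literally in $\DE(G_1)$ under the paper's strict notion of descendant; this edge case is an artifact of the paper's definitions (it afflicts Lemma~\ref{trivial} equally) and cannot arise in the third clause, where $G_1$ and $G_2$ are necessarily incomparable, as your own contradiction argument shows.
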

Note that this does not imply that $G_1\wedge G_2=\gcd(G_1,G_2)$; see, for example, Figure~\ref{HasseClause32} (right).

\subsection{Witnesses}
Finally, we define the term \emph{witness}.
Let $G_1$ and $G_2$ be elements in $G(\FF_\vee)$. An element \DanaFD{$a\in X$}{$a$} is a {\it witness} for $G_1$ and $G_2$ if $G_1(a)\not= G_2(a)$. We now show two central lemmas.

\begin{lemma}\label{wit1} Let $G_1$ be an immediate descendant of $G_2$. If $a\in X$ is a witness for $G_1$ and $G_2$, then: \begin{enumerate}[nosep,nolistsep]
\item $G_1(a)=0$ and $G_2(a)=1$.
\item For every $f\in S(G_1)$, $f(a)=0$.
\item For every $f\in S(G_2)\backslash S(G_1)$, $f(a)=1$.
\end{enumerate}
\end{lemma}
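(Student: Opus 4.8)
The plan is to establish the three parts in order, since each one feeds into the next. For part~1, I would use that $G_1$ being an immediate descendant of $G_2$ gives $G_1\imply G_2$, i.e. logical implication. Hence $G_1(a)=1$ would force $G_2(a)=1$ and therefore $G_1(a)=G_2(a)$, contradicting the assumption that $a$ is a witness. Thus $G_1(a)=0$, and since $G_1(a)\neq G_2(a)$, we must have $G_2(a)=1$.

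Part~2 is then immediate. Since $G_1\in G(\FF_\vee)$ is a representative element, we may write $G_1=\vee_{f\in S(G_1)}f$. A disjunction that evaluates to $0$ at $a$ forces every disjunct to vanish at $a$, so $f(a)=0$ for all $f\in S(G_1)$.

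The main work, and the step I expect to be the real obstacle, is part~3. Here I would argue one disjunct at a time. Fix an arbitrary $g\in S(G_2)\backslash S(G_1)$ (this set is nonempty by Lemma~\ref{GG}) and consider $G':=G_1\vee g=\vee(S(G_1)\cup\{g\})\in\FF_\vee$. Since $G_1\imply G'\imply G_2$, Lemma~\ref{fact} (the immediate-descendant squeeze) forces $G'=G_1$ or $G'=G_2$. The first case is ruled out by the third part of Lemma~\ref{sfact}: because $g\in\FF\backslash S(G_1)$, we have $G_1\vee g\neq G_1$, hence $G'\neq G_1$. Therefore $G'=G_2$ as logical functions, i.e. $G_2=\vee(S(G_1)\cup\{g\})$. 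Evaluating at $a$ and using part~2 (all disjuncts from $S(G_1)$ are $0$ at $a$) together with part~1 ($G_2(a)=1$), the only remaining disjunct that can make the value $1$ is $g$, so $g(a)=1$. Since $g$ was arbitrary, this holds for every $f\in S(G_2)\backslash S(G_1)$.

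The one subtlety I would be careful about is the interplay between logical equality $=$ and syntactic identity $\equiv$: Lemma~\ref{fact} and the third part of Lemma~\ref{sfact} must be invoked at the level of logical equality, and the identity $G_1=\vee S(G_1)$ (valid because $G_1$ is a representative) is what lets me rewrite $G_1\vee g$ as $\vee(S(G_1)\cup\{g\})$. Once those are in place, part~3 reduces cleanly to the single-disjunct squeeze argument above.
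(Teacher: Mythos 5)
Your proof is correct and follows essentially the same route as the paper's: both derive parts 1 and 2 directly from $G_1\imply G_2$ and the disjunctive form of $G_1$, and both prove part 3 by forming $F=G_1\vee f$ for $f\in S(G_2)\backslash S(G_1)$, ruling out $F=G_1$ via item 3 of Lemma~\ref{sfact}, concluding $F=G_2$ from Lemma~\ref{fact}, and evaluating at $a$. Your added remarks (nonemptiness via Lemma~\ref{GG}, and the distinction between logical equality and syntactic identity) are sound but not needed beyond what the paper's argument already uses.
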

\begin{proof} Since $G_1\imply G_2$, it must be that $G_2(a)=1$ and $G_1(a)=0$. Namely, for every $f\in S(G_1)$, $f(a)=0$.
Let $f\in S(G_2)\backslash S(G_1)$. Consider $F=G_1\vee f$. By bullet {\it \ref{sfact3}} in Lemma~\ref{sfact}, $F\not=G_1$. Since $G_1\imply F\imply G_2$, by Lemma~\ref{fact}, $F=G_2$.
Therefore, $f(a)=G_1(a)\vee f(a)=F(a)=G_2(a)=1$.\qed
\end{proof}

\begin{lemma}\label{uniqwit} Let $\De(G)=\{G_1,G_2,\ldots,G_t\}$ be the set of immediate descendants of $G$. If $a$ is a witness for $G_1$ and $G$, then $a$ is not a witness for $G_i$ and $G$ for all $i>1$. That is, $G_1(a)=0$, $G(a)=1$, and $G_2(1)=\cdots=G_t(a)=1$.
\end{lemma}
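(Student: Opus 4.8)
The plan is to first pin down the behavior of the distinguished descendant $G_1$ at the witness point, and then argue by contradiction that no other immediate descendant can also be ``knocked down'' to $0$ at that same point, using the fact from Lemma~\ref{lca} that the disjunction of two distinct immediate descendants recovers $G$.

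First I would apply Lemma~\ref{wit1} to $G_1$ and $G$. Since $G_1$ is an immediate descendant of $G$ and $a$ is a witness for the pair, the lemma immediately yields $G_1(a)=0$ and $G(a)=1$. This disposes of the first two claimed equalities and fixes the relevant value $G(a)=1$ against which all the other descendants must be compared. Recalling the definition of witness, to show that $a$ is \emph{not} a witness for $G_i$ and $G$ (for $i>1$) it now suffices to show $G_i(a)=G(a)=1$, i.e.\ $G_i(a)=1$.

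Next I would fix an index $i>1$ and suppose, for contradiction, that $G_i(a)=0$. Since the $G_j$ in $\De(G)$ are listed as distinct elements, $G_1$ and $G_i$ are two \emph{distinct} immediate descendants of $G$, so Lemma~\ref{lca} applies and gives $G_1\vee G_i=G$ (as elements of $G(\FF_\vee)$, hence as logically equal boolean functions). Evaluating this identity at the point $a$ gives $G(a)=(G_1\vee G_i)(a)=G_1(a)\vee G_i(a)=0\vee 0=0$, which contradicts $G(a)=1$ established above. Therefore $G_i(a)=1$, and since this holds for every $i>1$ we obtain $G_2(a)=\cdots=G_t(a)=1$, completing the proof.

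I do not expect a genuine obstacle here; the entire argument is a two-line deduction once the right tool is identified. The only point that requires care is ensuring we invoke Lemma~\ref{lca} in its sharp form: it is the statement that $G_1\vee G_i$ equals $G$ \emph{exactly} (not merely implies $G$) for distinct immediate descendants that makes the evaluation $0\vee 0=0\ne 1=G(a)$ into a contradiction. One should also note that the distinctness hypothesis $i>1$ (so $G_i\ne G_1$) is precisely what is needed to legitimately apply that ``in particular'' clause of Lemma~\ref{lca}.
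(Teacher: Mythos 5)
Your proof is correct and matches the paper's own argument: both invoke Lemma~\ref{wit1} to get $G_1(a)=0$ and $G(a)=1$, then apply the ``in particular'' clause of Lemma~\ref{lca} ($G_1\vee G_i=G$ for distinct immediate descendants) and evaluate at $a$. The only cosmetic difference is that you phrase the last step as a contradiction, while the paper computes directly $1=G(a)=G_1(a)\vee G_i(a)=G_i(a)$.
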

\begin{proof} By Lemma~\ref{wit1}, $G(a)=1$ and $G_1(a)=0$. By Lemma~\ref{lca}, for any $G_i$, $i\ge 2$, we have $G=G_1\vee G_i$. Therefore, $1=G(a)=G_1(a)\vee G_i(a)= G_i(a)$.\qed
\end{proof}

\section{The Algorithm}\label{sec:alg}
In this section, we present our algorithm \DanaD{to learn a target disjunction over $\FF$}, called SPEX (short for \emph{specifications from examples}).
Our algorithm \DanaFD{relies on}{uses} the following results.
\begin{lemma}\label{Al01} Let $G'$ be an immediate descendant of $G$, \DanaFD{$a\in X$}{$a$} be a witness for $G$ and $G'$, and  $G''$ be a descendant of $G$.
\begin{enumerate}[nosep,nolistsep]
\item If $G''(a)$$=$$0$, $G''$ is a descendant of $G'$ or equal to $G'$. In particular, $S(G'')\subseteq S(G')$.
\item If $G''(a)$$=$$1$, $G''$ is neither a descendant of $G'$ nor equal to $G'$. In particular, $S(G'')\not\subset S(G')$.
\end{enumerate}
\end{lemma}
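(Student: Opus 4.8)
The plan is to reduce everything to the structural description of the witness $a$ given by Lemma~\ref{wit1}, and then to translate support-inclusion into the implication order via Lemma~\ref{GG}.

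First I would apply Lemma~\ref{wit1} to the immediate-descendant pair $G'\imply G$ with witness $a$. This yields the four facts I will use repeatedly: $G'(a)=0$, $G(a)=1$, every $f\in S(G')$ satisfies $f(a)=0$, and every $f\in S(G)\backslash S(G')$ satisfies $f(a)=1$. The crucial observation is that $a$ cleanly \emph{separates} the support of $G$: a function $f\in S(G)$ has $f(a)=0$ exactly when $f\in S(G')$. Next, since $G''$ is a descendant of $G$, Lemma~\ref{GG} gives $S(G'')\subsetneq S(G)$, so every $f\in S(G'')$ belongs to $S(G)$ and its value at $a$ is governed by the dichotomy above.

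For Part~1, assume $G''(a)=0$. Then $(\vee_{f\in S(G'')}f)(a)=0$, so $f(a)=0$ for every $f\in S(G'')$. By the separation property, no such $f$ can lie in $S(G)\backslash S(G')$, hence $S(G'')\subseteq S(G')$, which is the ``in particular'' claim. From $S(G'')\subseteq S(G')$ I conclude $G''\imply G'$ (any point satisfying some $f\in S(G'')\subseteq S(G')$ satisfies $G'$), so $G''$ is either a descendant of $G'$ or equal to $G'$. For Part~2, assume $G''(a)=1$. Then some $f\in S(G'')$ has $f(a)=1$; since every function in $S(G')$ vanishes at $a$, this $f\notin S(G')$, giving $S(G'')\not\subseteq S(G')$. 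For the first assertion, $G''(a)=1$ while $G'(a)=0$ directly rules out $G''\imply G'$, so $G''$ is neither a descendant of $G'$ nor equal to it.

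The argument is short, and the only real subtlety — the main obstacle — is bookkeeping: one must be careful to use $S(G'')\subsetneq S(G)$ (not merely $G''\imply G$) so that each $f\in S(G'')$ falls under the value dichotomy of Lemma~\ref{wit1}, and to handle the boundary case by phrasing the conclusions as the single implication $G''\imply G'$ rather than as strict descent, which then splits into ``descendant of $G'$ or equal to $G'$.''
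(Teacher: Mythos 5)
Your proof is correct and follows essentially the same route as the paper's: both invoke Lemma~\ref{wit1} to get the value dichotomy on $S(G)$ at the witness $a$, use $S(G'')\subsetneq S(G)$ (the paper states this directly, which is exactly Lemma~\ref{GG}) to place every $f\in S(G'')$ under that dichotomy, and then derive the support inclusion in Part~1 and the contradiction $G''(a)=1$ versus $G'(a)=0$ in Part~2. Your explicit citation of Lemma~\ref{GG} and the remark about phrasing the conclusion as $G''\imply G'$ are just slightly more careful bookkeeping of the same argument.
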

\begin{proof} Since $G''$ is a descendant of $G$, we have $S(G'')\subsetneq S(G)$. By Lemma~\ref{wit1}, for every $f\in S(G')$, we have $f(a)=0$, and for every $f\in S(G)\backslash S(G')$, we have $f(a)=1$. \DanaFD{Thus, if}{If} $G''(a)=0$, then no $f\in S(G)\backslash S(G')$ is in $S(G'')$ (otherwise, $G''(a)=1$). Therefore, $S(G'')\subseteq S(G')$ and $G''$ is a descendant of $G'$ or equal to $G'$. \DanaFD{Otherwise, if}{
Now suppose} $G''(a)=1$ \DanaFD{, then since}{. Since} $G'(a)=0$, for every descendant $G_0$ of $G'$ we have $G_0(a)=0$ and thus $G''$ is neither a descendant of $G'$ nor equal to $G'$.\qed
\end{proof}

Lemma~\ref{Al01} drives the operation of SPEX. To find $F$ (more precisely, $G_F$), SPEX starts from the maximal element in $G(\FF_\vee)$ and traverses downwards through the Hasse diagram. At each step, SPEX considers an element $G$, checks its witnesses against its immediate descendants, and poses a membership query for each. If $F$ and $G$ agree on the witness of $G$ and $G'$, then by Lemma~\ref{Al01}, $F$ cannot be $G'$ or its descendant, and thus these are pruned from the search space. Otherwise, if $F$ and $G'$ agree on the witness, then $F$ must be $G'$ or its descendant, and thus all other elements in $G(F_\vee)$ are pruned.
\begin{figure}[t!]
  \begin{center}
   \fbox{\fbox{\begin{minipage}{28em}
  \begin{tabbing}
  xxxx\=xxxx\=xxxx\=xxxx\= \kill
  \>\underline{{\bf Algorithm: SPEX -- The target function is $F$.}}\\ \\
  \> Learn($G\gets G_{\max}$,$T\gets \O$).\\ \\
  \> \underline{{\bf Learn}$(G,T)$}\\
  1. \>$S\gets S(G)$; Flag$=1$.\\
  2. \> For each immediate descendant $G'$ of $G$:\\
  3. \> \> If $S(G')\not\subset R$ for all $R\in T$ then:\\
  4. \> \> \> Find a witness $a$ for $G'$ and $G$.\\
  5. \> \> \> If $F(a)=0$ then: $S\gets S\cap S(G')$; Flag$=0$.\\
  6. \> \> \> If $F(a)=1$ then: $T\gets T\cup \{S(G')\}$.\\
  7. \>\> EndIf\\
  8. \> EndFor\\
  9. \> If Flag$=1$ then: Output($\vee S$)\\
  10.\>\> Else Learn$(\vee S, T)$
  \end{tabbing}\end{minipage}}}
  \end{center}
	\caption{The SPEX algorithm for learning functions in $\FF_\vee$.}
	\label{Alg0}
	\end{figure}

The SPEX algorithm is depicted in Figure~\ref{Alg0}. SPEX calls the recursive algorithm Learn, which takes a candidate $G$ and a set of subsets of $\FF$, $T$, which stores the elements already eliminated from $\FF_\vee$. Learn also relies on $S$, a set of functions over which $F$ (i.e., $G_F$) is defined (i.e., $S(G_F)\subseteq S$). During the execution, $S$ may be reduced. If not, then $\vee S=F$.
Learn begins by initializing $S$ to $S(G)$. Then, it examines the immediate descendants of $G$ whose ancestors have not been eliminated. When considering $G'$, a witness $a$ is obtained and Learn poses a membership query to learn $F(a)$. If $F(a)=0$ (recall that $G(a)=1$ since $a$ is a witness), then $G\neq F$ and $F$ is inferred to be a descendant of $G'$ and is thus over the functions in $S(G')$. Accordingly, $S$ is reduced. Otherwise, $F$ is not a descendant of $G'$, and $G'$ and its descendants are eliminated from the search space by adding $S(G')$ to $T$.
Finally, if $G$ and $F$ agreed on all the witnesses (evident by the Flag variable), then $\vee S$ is returned (since $G=F$). Otherwise, Learn is invoked on $\vee S$ and $T$.

\begin{theorem}\label{upper} If the witnesses and the descendants of any $G$ can be found in time $t$, then SPEX (Algorithm~1) learns the target function in time $t\cdot |\FF|$ and at most
$|\FF|\cdot \max_{G\in G(\FF_\vee)} |\De(G)|$ membership queries.
\end{theorem}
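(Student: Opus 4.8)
The plan is to establish correctness and the two complexity bounds simultaneously by maintaining a recursion invariant on each call $\mathrm{Learn}(G,T)$: (i) $G_F$ is a descendant of $G$ or equal to $G$, i.e. $S(G_F)\subseteq S(G)$; and (ii) every set $R\in T$ satisfies $S(G_F)\not\subseteq R$. The initial call $\mathrm{Learn}(G_{\max},\emptyset)$ satisfies both trivially, since $F\imply G_{\max}$ and $T=\emptyset$. I will show the loop preserves the stronger property $S(G_F)\subseteq S$ throughout, and that both (i) and (ii) pass to the recursive call, so that by induction on the recursion the algorithm stays ``aimed at'' $G_F$.

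For the inductive step I would analyze one processed immediate descendant $G'$ with witness $a$. By Lemma~\ref{wit1}, $G(a)=1$ and $G'(a)=0$. If the query returns $F(a)=0$, then $G_F(a)=0\neq G(a)$ forces $G_F\neq G$, so $G_F$ is a proper descendant of $G$ and Lemma~\ref{Al01}(1) gives $S(G_F)\subseteq S(G')$; hence the update $S\gets S\cap S(G')$ keeps $S(G_F)\subseteq S$. If $F(a)=1$, then Lemma~\ref{Al01}(2) gives $S(G_F)\not\subseteq S(G')$, so adding $S(G')$ to $T$ preserves (ii). I also record the \emph{safety of pruning}: if the test in line~3 skips $G'$ because $S(G')\subset R$ for some $R\in T$, then $S(G_F)\not\subseteq S(G')$, since $S(G_F)\subseteq S(G')\subset R$ would contradict (ii); thus no candidate lying above $G_F$ is ever skipped.

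Correctness at termination is the crux. When $\mathrm{Flag}=1$, no $F(a)=0$ occurred, so $S=S(G)$ and the output is $\vee S(G)=G$ (Lemma~\ref{sfact}). I claim $G_F=G$. If not, $G_F$ is a proper descendant of $G$, and taking a $\imply$-maximal element $G'$ of $G(\FF_\vee)$ with $G_F\imply G'\imply G$ and $G'\neq G$ yields an immediate descendant $G'$ of $G$ with $S(G_F)\subseteq S(G')$ (Lemma~\ref{GG}); by the safety remark and (ii), $G'$ is processed, and its witness satisfies $G_F(a)=0$, forcing $F(a)=0$ and $\mathrm{Flag}=0$, a contradiction. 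When $\mathrm{Flag}=0$, at least one intersection occurred, so $S\subsetneq S(G)$; repeated application of Lemma~\ref{gcd} shows $S$ is the support of an element of $G(\FF_\vee)$, hence $\vee S\in G(\FF_\vee)$ and $S(\vee S)=S$, so the recursive call is well-formed and preserves (i). The main obstacle is precisely this termination step: proving $\mathrm{Flag}=1\Rightarrow G_F=G$ requires \emph{both} invariant (ii), to rule out a wrongly pruned descendant, and the existence in the finite Hasse diagram of an immediate descendant of $G$ lying above $G_F$.

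Finally the complexity. Each recursive call strictly decreases $|S(G)|$, which starts at $|S(G_{\max})|=|\FF|$ and stays nonnegative, so the recursion terminates after finitely many steps and, by the correctness argument, outputs $G_F\equiv F$. For queries, note a call poses a query only when it processes an immediate descendant, hence only when $G\neq G_{\min}$ (the bottom has no descendants); every such candidate has support size in $\{1,\dots,|\FF|\}$, and since the candidate supports strictly decrease these sizes are distinct, so at most $|\FF|$ calls pose queries. Each poses at most one query per immediate descendant, i.e. at most $|\De(G)|\le\max_{G}|\De(G)|$, giving the bound $|\FF|\cdot\max_{G}|\De(G)|$. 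For time, each call finds the descendants of $G$ and their witnesses once in time $t$, and the remaining bookkeeping is lower order, so over at most $|\FF|$ effective calls the total running time is $t\cdot|\FF|$.
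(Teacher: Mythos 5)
Your proposal is correct and takes essentially the same route as the paper: your invariants (i) and (ii) are precisely the two bullets of the paper's Lemma~\ref{lem:cor}, proved by the same induction via Lemmas~\ref{wit1}, \ref{Al01} and \ref{gcd}, and your complexity accounting (recursion depth at most $|\FF|$ from strictly shrinking supports, at most $|\De(G)|\le\max_{G}|\De(G)|$ queries per call) is the paper's argument. The only difference is that you make explicit two steps the paper glosses over --- that when $\mathrm{Flag}=1$ a proper descendant $G_F$ would lie below some \emph{immediate} descendant of $G$ (via a maximal element of the finite Hasse diagram), and that the recursive call is well-formed because the intersected support is again the support of a representative by Lemma~\ref{gcd} --- which are welcome elaborations rather than a different approach.
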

\sloppy
The complexity proof \DanaD{follows from the following arguments. First, every invocation of SPEX presents a membership query for every immediate descendant of $G$, and thus the number of membership queries of a single invocation is at most the maximal number of immediate descendants, $\max_{G\in G(\FF_\vee)} |\De(G)|$. Second, recursive invocations always consider a descendant of the currently inspected candidate. Thus, the recursion depth is bounded by the height of the Hasse diagram, $|\FF|$. This implies the total bound of $|\FF|\cdot \max_{G\in G(\FF_\vee)} |\De(G)|$ membership queries.}
The fact that SPEX learns the target function follows from Lemma~\ref{lem:cor}.
\begin{lemma}\label{lem:cor} \DanaFD{}{Consider the recursive procedure Learn.} Let $F$ be the target function.
If Learn returns $\vee S$, then $G_F = \vee S$ ($\star$). Otherwise, if Learn$(G_{\max},\O)$ calls Learn$(\vee S,T)$, then:
\begin{enumerate}[nosep,nolistsep]
\item $S(G_F)\subseteq S$. That is, $G_F$ is a descendant of $\vee S$ or equal to $\vee S$.
\item $S(G_F)\not\subset R$ for all $R\in T$. That is, $G_F$ is not a descendant of any $\vee R$, for $R\in T$ or equal to $\vee R$.
\end{enumerate}
\end{lemma}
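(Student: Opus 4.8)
The statement is really an invariant about every call made during the execution, so the plan is to prove it by induction on the recursion depth, carrying as the induction hypothesis exactly the two displayed properties as \emph{entry conditions} of the current call $\mathrm{Learn}(G,T)$: namely (1) $S(G_F)\subseteq S(G)$ (so $G_F\imply G$) and (2) $S(G_F)\not\subset R$ for all $R\in T$. The base case is the top-level call $\mathrm{Learn}(G_{\max},\emptyset)$: here $S(G_{\max})=\FF$, so $S(G_F)\subseteq S(G_{\max})$ holds trivially, and (2) is vacuous because $T=\emptyset$. Note that by the representative facts in Lemma~\ref{sfact}, $S(G_1)\subseteq S(G_2)$ is equivalent to $G_1\imply G_2$, which I will use freely to translate between set inclusions and the partial order.

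For the inductive step I would fix a call $\mathrm{Learn}(G,T)$ satisfying (1) and (2) at entry and track the local variables $S$ and $T$ through the \textbf{for}-loop of lines~2--8, proving the loop invariant: at every point $S(G_F)\subseteq S$ and $S(G_F)\not\subset R$ for all $R$ currently in $T$. Initially $S=S(G)$, so the invariant is precisely the entry hypothesis. When an immediate descendant $G'$ is processed, a witness $a$ for $G$ and $G'$ is obtained, so Lemma~\ref{wit1} gives $G(a)=1$ and $G'(a)=0$, and the query returns $F(a)=G_F(a)$. If $F(a)=0$, then $G_F\neq G$ (as $G(a)=1$), hence by (1) $G_F$ is a \emph{strict} descendant of $G$, and Lemma~\ref{Al01}(1) yields $S(G_F)\subseteq S(G')$; thus after $S\gets S\cap S(G')$ the inclusion $S(G_F)\subseteq S$ persists. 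If $F(a)=1$, I must verify $S(G_F)\not\subset S(G')$ for the set just inserted into $T$: when $G_F\neq G$ this is Lemma~\ref{Al01}(2), and when $G_F=G$ it follows from $S(G')\subsetneq S(G)=S(G_F)$ (Lemma~\ref{GG} applied to the immediate descendant $G'$). In either branch the loop invariant survives, and since $T$ only grows, its exit value gives $S(G_F)\subseteq S$ and $S(G_F)\not\subset R$ for all $R\in T$ --- exactly properties (1) and (2) for the recursive call $\mathrm{Learn}(\vee S,T)$ on line~10.

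The main obstacle is the output case (line~9, $\mathrm{Flag}=1$). Since $\mathrm{Flag}=1$ means line~5 never fired, $S$ was never shrunk, so $S=S(G)$ and $\vee S=G$; I must show $G_F=G$. Suppose not: by (1), $G_F$ is then a strict descendant of $G$, so in the finite Hasse diagram there is a saturated chain from $G_F$ up to $G$, whose top edge exhibits an immediate descendant $G'\in\De(G)$ with $G_F\imply G'$, i.e.\ $S(G_F)\subseteq S(G')$. The delicate step is to argue that this particular $G'$ is actually \emph{processed} and not skipped by the guard on line~3: if it were skipped, then $S(G')\subset R$ for some $R\in T$, and combined with $S(G_F)\subseteq S(G')$ this gives $S(G_F)\subset R$, contradicting the loop invariant. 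Hence $G'$ is processed, a witness $a$ is queried, and because $S(G_F)\subseteq S(G')$ while every $f\in S(G')$ satisfies $f(a)=0$ by Lemma~\ref{wit1}, we get $F(a)=G_F(a)=0$, which would have set $\mathrm{Flag}=0$ --- a contradiction. Therefore $G_F=G=\vee S$, establishing $(\star)$ and closing the induction.
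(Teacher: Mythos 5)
Your proof is correct and follows essentially the same route as the paper's: induction over the recursive calls with the two displayed properties as the hypothesis, with Lemma~\ref{Al01} doing the work in both query branches. Two points of comparison are worth recording. First, where you maintain a per-iteration loop invariant and shrink $S$ one immediate descendant at a time, the paper treats all indices $i$ with $F(a^{(i)})=0$ collectively and invokes Lemma~\ref{gcd} to identify the final $S=\bigcap_{i\in I}S(G'_i)$ with $S(\gcd(\{G'_i\}_{i\in I}))$. This is not mere bookkeeping: Lemma~\ref{gcd} is what certifies that the new $\vee S$ is again a representative, i.e.\ $\vee S\in G(\FF_\vee)$, a fact your inductive step silently assumes when it speaks of the immediate descendants of $\vee S$ in the Hasse diagram and applies Lemmas~\ref{wit1} and~\ref{Al01} at the next level of recursion (those lemmas are stated only for elements of $G(\FF_\vee)$). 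Your iterative update yields this too --- apply Lemma~\ref{gcd} pairwise at each shrink --- but you should say so explicitly; as written this is the one genuine omission. Second, your handling of the output case $(\star)$ is actually \emph{more} explicit than the paper's: the paper compresses it to ``by Lemma~\ref{Al01}, $F$ is not any of $\vee S$'s descendants, so by the induction hypothesis it must be $\vee S$,'' whereas you take a saturated chain from $G_F$ up to $G$, exhibit the immediate descendant $G'$ with $S(G_F)\subseteq S(G')$, and then rule out the possibility that $G'$ was skipped by the line-3 guard using the $T$-part of the invariant ($S(G')\subset R$ would force $S(G_F)\subset R$). That guard argument is precisely the paper's remark that $S(G'_i)\subseteq R$ implies $G_F$ is neither $G'_i$ nor below it, so here you have filled in a detail the paper leaves implicit rather than diverged from it.
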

\begin{proof}
The proof is by induction. Obviously, the induction hypothesis is true for $(G_{\max},\O)$.
Assume the induction hypothesis is true for $(\vee S,T)$. That is, $S(G_F)\subseteq S$ and $S(G_F)\not\subset R$ for all $R\in T$. Let $G'_1,\ldots,G'_\ell$ be all the immediate descendants of $\vee S$. If $S(G'_i)\subseteq R$ for some $R\in T$, $G'_i$ and all its descendants $G''$ satisfy $S(G'')\subseteq S(G'_i)\subseteq R$ and thus $G_F$ is not $G'_i$ or a descendant of $G'_i$.
\DanaFD{}{This explains the condition in line 3 in the Algorithm.}

Assume now that $S(G'_i)\not\subset R$ for all $R\in T$. Let $a^{(i)}$ be a witness for $\vee S$ and $G'_i$. If $F(a^{(i)})=1$, then by Lemma~\ref{Al01} $G_F$ is not a descendant of $G'_i$ and not equal to $G'_i$. This implies that $S(G_F)\not \subset S(G_i')$, which is why $S(G_i')$ is added to $T$ (Line 6 in the Algorithm). This proves bullet 2.

If $F(a^{(i)})=1$ for all $i$, then $G_F=\vee S$. This follows since by Lemma~\ref{Al01}, $F$ is not any of $\vee S$ descendants; thus by the induction hypothesis, it must be $\vee S$. This is the case when the Flag variable does not change to $0$ and the algorithm outputs $\vee S$. This proves ($\star$).

If $F(a^{(i)})=0$, then by Lemma~\ref{Al01}, $G_F$ is a descendant of $G_i'$ or equal to $G_i'$. Let $I$ be the set of all indices $i$ for which $F(a^{(i)})=0$. Then, $G_F$ is a descendant of (or equal to) all $G_i'$, $i\in I$, and therefore, $G_F$ is a descendant of or equal to $\gcd(\{G_i'\}_{i\in I})$. By Lemma~\ref{gcd}, $S(\gcd(\{G_i'\}_{i\in I}))=\cap_{i\in I}S(G_i)$. Thus, the algorithm in Line 5 takes the new $S$ to be $\cap_{i\in I}S(G_i)$. This proves bullet 1.\qed
\end{proof}

\subsection{Lower Bound}
The number of different boolean functions in $\FF_\vee$ is $|G(\FF_\vee)|$, and therefore, \DanaFD{from}{just using} the information theoretic lower bound we get:
$\OPT(\FF_\vee)\ge \lceil \log |G(\FF_\vee)|\rceil.$
We now prove the lower bound.
\begin{theorem}\label{lower} Any learning algorithm that learns $\FF_\vee$ must ask at least
$\max(\log |G(\FF_\vee)|, \max_{G\in G(\FF_\vee)} |\De(G)|)$ membership queries.
In particular, SPEX (Algorithm~1) asks at most
$|\FF|\cdot \OPT(\FF_\vee)$ membership queries.
\end{theorem}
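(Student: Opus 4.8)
The plan is to establish two separate lower bounds on $\OPT(\FF_\vee)$, take their maximum, and then read off the ``in particular'' claim as an immediate corollary of Theorem~\ref{upper}. The first bound, $\OPT(\FF_\vee)\ge \lceil \log |G(\FF_\vee)|\rceil$, is exactly the information-theoretic bound already recorded just above the statement: by Lemma~\ref{sfact} there are precisely $|G(\FF_\vee)|$ pairwise non-equivalent candidate targets, and since each membership query returns a single bit, any algorithm separating all candidates must in the worst case ask at least $\log|G(\FF_\vee)|$ queries. So the real work is the second bound, $\OPT(\FF_\vee)\ge \max_{G\in G(\FF_\vee)}|\De(G)|$.

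For the second bound I would fix an element $G^*\in G(\FF_\vee)$ attaining the maximum, write $\De(G^*)=\{G_1,\dots,G_t\}$ with $t=\max_{G}|\De(G)|$, and run an adversary argument against an arbitrary learner. The adversary answers every membership query $x$ with $G^*(x)$. The candidate set the learner must separate is $\{G^*,G_1,\dots,G_t\}\subseteq \FF_\vee$, and under this answering rule a descendant $G_i$ stays consistent unless some queried point $x$ satisfies $G_i(x)\ne G^*(x)$, i.e.\ unless some query is a witness for $G_i$ and $G^*$. The key step is Lemma~\ref{uniqwit}: a point that is a witness for $G^*$ and one immediate descendant $G_i$ forces $G_j(x)=1=G^*(x)$ for every other immediate descendant $G_j$, so that point is a witness for at most one of the $G_i$. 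Hence each query eliminates at most one of the $t$ descendants, and after fewer than $t$ queries at least one $G_i$ remains consistent together with $G^*$. Since $G_i\ne G^*$ as functions (they are distinct elements of $G(\FF_\vee)$, hence logically non-equivalent), the learner cannot distinguish $G^*$ from $G_i$: whatever it outputs, the adversary declares the other to be the target. Therefore every correct learner must ask at least $t$ queries, giving $\OPT(\FF_\vee)\ge t$. Combining the two bounds yields the stated $\max(\log|G(\FF_\vee)|,\ \max_{G}|\De(G)|)$.

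For the ``in particular'' statement I would simply combine this with the upper bound. Theorem~\ref{upper} shows SPEX asks at most $|\FF|\cdot\max_{G\in G(\FF_\vee)}|\De(G)|$ queries, and the lower bound just proved gives $\max_{G}|\De(G)|\le \OPT(\FF_\vee)$; substituting, SPEX asks at most $|\FF|\cdot\OPT(\FF_\vee)$ queries.

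I expect the adversary step to be the main obstacle: one must check that answering consistently with the parent $G^*$ genuinely keeps a descendant alive, and that Lemma~\ref{uniqwit} exactly forbids a single point from ruling out two descendants at once. This witness-uniqueness is precisely what turns the $t$ immediate descendants into $t$ ``independent'' obligations for the learner, so that no clever adaptive query ordering can beat the count $t$. The information-theoretic bound and the final corollary are routine once this is in place.
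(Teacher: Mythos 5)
Your proposal is correct and takes essentially the same approach as the paper: you fix $G^*$ maximizing $|\De(G)|$, use Lemma~\ref{uniqwit} to show that a witness separating $G^*$ from one immediate descendant separates it from no other (so each query eliminates at most one of the $t$ candidates), and combine the resulting bound $\OPT(\FF_\vee)\ge \max_{G}|\De(G)|$ with the information-theoretic bound and with Theorem~\ref{upper} for the ``in particular'' claim. Your explicit adversary formulation is just a more rigorous write-up of the step the paper states implicitly (``without such an assignment, the algorithm cannot distinguish between $G'$ and $G_i$''), not a different argument.
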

\begin{proof} Let $G'$ be such that  $m=|\De(G')|=\max_{G\in G(\FF_\vee)} |\De(G)|$.
Let $G_1,\ldots,G_m$ be the immediate descendants of $G'$. If the target function is either $G'$ or one of its immediate descendants, then any learning algorithm must ask a membership query $a^{(i)}$ such that $G'(a^{(i)})=1$ and $G_i(a^{(i)})=0$. Without such an assignment, the algorithm cannot distinguish between $G'$ and $G_i$. By Lemma~\ref{uniqwit}, $a^{(i)}$ is a witness only to $G_i$, and therefore, we need at least $m$ membership queries.\qed
\end{proof}

\subsection{Finding All Immediate Descendants of $G$}
A missing detail in our algorithm is how to find the immediate descendants of $G$ in the Hasse diagram $H(S(G))$.
In this section, we explain how to obtain them. We first characterize the elements in $H(S(G))$ (compared to the other elements in $\FF_\vee$), which is necessary because the immediate descendants are part of $H(S(G))$. We then give a characterization of the immediate descendants (compared to other descendants), which leads to an operation that computes an immediate descendant from a descendant. We finally show how to compute descendants that lead to obtaining different immediate descendants. This completes the description of how SPEX can obtain all immediate descendants.

By the definition of a representative, for \DanaFD{every}{any} $F\in \FF_\vee$,
$G_F=\vee_{f\imply F} f.$
\DanaFD{To decide whether $F\in \FF_\vee$ is a representative, i.e., whether $F\in G(\FF_\vee)$, we use Lemma~\ref{DG} (whose proof directly follows from the definition of $G(\FF_\vee)$).}{We first show how to decide whether $F\in \FF_\vee$ is a representative, i.e., whether $F\in G(\FF_\vee)$. The following follows immediately from the definition of $G(\FF_\vee)$.}
\begin{lemma}~\label{DG} Let $F\in \FF_\vee$. $F\in G(\FF_\vee)$ if and only if for every $f\in \FF\backslash S(F)$ we have
$F\vee f\not=F$.
\end{lemma}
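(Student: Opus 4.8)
The plan is to prove Lemma~\ref{DG} directly from the definitions of the representative element $G_F$ and the set $G(\FF_\vee)$ given in Section~\ref{sec:defs}. Recall that for a representative $G \in G(\FF_\vee)$, the defining property is that $S(G)$ is the \emph{maximum} size subset $S \subseteq \FF$ satisfying $\vee S = G$ (equivalently, by Lemma~\ref{sfact}, bullet~\ref{sfact3}, for every $f \in \FF \backslash S(G)$ we have $G \vee f \neq G$). So the statement essentially asks us to verify that the characterization ``$F \vee f \neq F$ for all $f \in \FF \backslash S(F)$'' is exactly equivalent to $F$ being a representative. I would prove the two directions separately.

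For the forward direction, suppose $F \in G(\FF_\vee)$. Then $F = G_F$, and by the maximality in the definition of $G_F$, the set $S(F)$ is the largest subset of $\FF$ whose disjunction equals $F$. If there were some $f \in \FF \backslash S(F)$ with $F \vee f = F$, then $S(F) \cup \{f\}$ would be a strictly larger subset of $\FF$ with $\vee(S(F) \cup \{f\}) = (\vee S(F)) \vee f = F \vee f = F$, contradicting maximality. (Here I use bullet~4 of Lemma~\ref{sfact}, $\vee S(F) \equiv F$.) Hence $F \vee f \neq F$ for every such $f$. This is precisely bullet~\ref{sfact3} of Lemma~\ref{sfact} read in the contrapositive, so this direction is already implicit in the earlier lemma.

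For the reverse direction, suppose $F \in \FF_\vee$ satisfies $F \vee f \neq F$ for every $f \in \FF \backslash S(F)$. I want to show $F = G_F$, i.e.\ that $S(F)$ is already the maximum-size subset with disjunction $F$. Suppose not: then there is a strictly larger set $S' \supsetneq S(F)$ with $\vee S' = F$. Pick any $f \in S' \backslash S(F) \subseteq \FF \backslash S(F)$. Since $f \in S'$ and $\vee S' = F$, we have $f \imply F$, and therefore $F \vee f = F$, contradicting the hypothesis. Thus no such $S'$ exists, $S(F)$ is maximal, and $F = G_F \in G(\FF_\vee)$.

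The argument is short and purely definitional, so there is no real technical obstacle; the only thing to be careful about is keeping straight the distinction between logical equality ($=$) and identical representation ($\equiv$), and using $\vee S(F) \equiv F$ (Lemma~\ref{sfact}, bullet~4) together with the fact that membership of $f$ in a set whose disjunction is $F$ forces $f \imply F$. Indeed, this lemma is close enough to bullet~\ref{sfact3} of Lemma~\ref{sfact} that the authors note its proof ``directly follows from the definition of $G(\FF_\vee)$,'' and I would present it as such a one-paragraph verification.
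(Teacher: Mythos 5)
Your proof is correct and is essentially the paper's own (non-)proof: the paper states only that the lemma ``directly follows from the definition of $G(\FF_\vee)$,'' and your argument is precisely that definitional verification, with the forward direction being bullet~3 of Lemma~\ref{sfact}. One micro-point in the reverse direction: ``$S(F)$ not of maximum size'' gives a larger set $S''$ with $\vee S''=F$ that need not contain $S(F)$, but this is patched in one line by passing to $S(F)\cup S''$ (whose disjunction is still $F$) or directly to $\{f\in\FF \mid f\imply F\}$, so the strict superset you invoke does exist.
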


\DanaFD{Lemma~\ref{imm} shows}{Now we show} how to decide whether $G'$ is an immediate descendant of $G$.
\begin{lemma}\label{imm} Let $G,G'\in G(\FF_\vee)$. $G'$ is an immediate descendant of $G$ if and only if $G'\not=G$, $S(G')\subset S(G)$ and for every $f\in S(G)\backslash S(G')$ we have $G'\vee f=G$.

If $G'\not=G$, $S(G')\subset S(G)$ and for some $f\in S(G)\backslash S(G')$ we have $G'\vee f\not= G$, then $G_{G'\vee f}$ is a descendant of $G$ and an ascendant of $G'$.
\end{lemma}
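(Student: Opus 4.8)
The plan is to prove Lemma~\ref{imm} in two parts, handling the iff-characterization first and then the ``descendant/ascendant'' addendum. For the characterization, I would argue both directions. For the forward direction, assume $G'$ is an immediate descendant of $G$. By definition $G'\imply G$ and $G'\neq G$, and by Lemma~\ref{GG} this gives $S(G')\subsetneq S(G)$. To show $G'\vee f=G$ for every $f\in S(G)\backslash S(G')$, I would set $F:=G'\vee f$ and observe that $G'\imply F\imply G$ (since $f\in S(G)$ means $f\imply G$, and $G'\imply G$). By bullet~\ref{sfact3} of Lemma~\ref{sfact}, adding a function not already in $S(G')$ strictly changes the representative, so $F\neq G'$; then Lemma~\ref{fact} (the immediate-descendant squeeze) forces $F=G$. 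For the reverse direction, assume $G'\neq G$, $S(G')\subset S(G)$, and $G'\vee f=G$ for all $f\in S(G)\backslash S(G')$. The containment gives $G'\imply G$, so $G'$ is a descendant of $G$; I must rule out any intermediate element. Suppose some $H\in G(\FF_\vee)$ satisfies $G'\imply H\imply G$ with $H\neq G',G$. Since $S(G')\subsetneq S(G)$ there exists $f\in S(G)\backslash S(G')$; I would compare $H$ against $G'\vee f=G$ and use Lemma~\ref{GG}/Lemma~\ref{trivial} together with the hypothesis that adding any single such $f$ already jumps all the way up to $G$ to derive a contradiction with the strict intermediacy of $H$.

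For the addendum, I would assume $G'\neq G$, $S(G')\subset S(G)$, but $G'\vee f\neq G$ for some specific $f\in S(G)\backslash S(G')$, and analyze $H:=G_{G'\vee f}$, the representative of $G'\vee f$. First, $H$ is an ascendant of $G'$: because $G'\imply G'\vee f$ and $H$ is the representative of $G'\vee f$, we have $G'\imply H$ with $S(G')\subsetneq S(H)$ (strict, since $f\in S(H)$ by bullet~\ref{sfact3} of Lemma~\ref{sfact} forcing $f$ into the representative, yet $f\notin S(G')$), so $H\neq G'$ and $H$ is a genuine ascendant. Second, $H$ is a descendant of $G$: since $f\in S(G)$ and $S(G')\subset S(G)$, every function in $S(G'\vee f)$ lies in $S(G)$, hence $G'\vee f\imply G$ and so $H\imply G$; and $H\neq G$ precisely because $G'\vee f\neq G$ means their representatives differ (using part~5 of Lemma~\ref{sfact}), giving the strict relation $S(H)\subsetneq S(G)$ via Lemma~\ref{GG}.

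The main obstacle I anticipate is the reverse direction of the iff, namely ruling out an intermediate representative $H$ strictly between $G'$ and $G$. The hypothesis only controls single-function sums $G'\vee f$, so I need to leverage it carefully: given an intermediate $H$, pick $f\in S(H)\backslash S(G')$ (nonempty because $H\neq G'$ forces $S(G')\subsetneq S(H)$ by Lemma~\ref{GG}); such $f$ lies in $S(G)\backslash S(G')$ since $S(H)\subseteq S(G)$, so by hypothesis $G'\vee f=G$. But $G'\vee f\imply H$ (as $f\in S(H)$ and $G'\imply H$), whence $G\imply H$, contradicting $H\imply G$ with $H\neq G$. This pigeonholing of a single witnessing function $f$ into the hypothesis is the delicate step, and I would make sure the choice of $f$ and the containment $S(H)\subseteq S(G)$ are justified before concluding.
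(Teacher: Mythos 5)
Your proof is correct and takes essentially the same route as the paper's: the forward direction is identical (Lemma~\ref{sfact}, bullet~\ref{sfact3}, plus the squeeze of Lemma~\ref{fact}), and the reverse direction and addendum hinge on the same key move of picking $f\in S(H)\setminus S(G')$ for an intermediate element $H$ and feeding it to the hypothesis $G'\vee f=G$. The only cosmetic difference is that the paper chooses the intermediate element to be an immediate ascendant $G''$ of $G'$ and identifies $G'\vee f=G''\not=G$ exactly, whereas you take an arbitrary intermediate $H$ and derive the contradiction $G\imply H\imply G$ directly, which slightly streamlines that step without changing the substance.
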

\begin{proof} \emph{Only if:} Let $G'$ be an immediate descendant of $G$, i.e., $G'\not=G$, $G'\imply G$ and $S(G')\subset S(G)$. Let $f\in S(G)\backslash S(G')$. Since $G'\imply (G'\vee f)\imply G$ and $G'\not=G'\vee f$, we get $G'\vee f=G$. 

\emph{If:} Suppose $G'\not=G$, $G'\imply G$ and for every $f\in S(G)\backslash S(G')$, we have $G'\vee f=G$. If $G'$ is not an immediate descendant of $G$, then let $G''$ be a descendant of $G$ and an immediate ascendant of $G''$. Take any $f\in S(G'')\backslash S(G')\subset S(G)\backslash S(G')$. Then, $G'\vee f=G''\not= G$ -- a contradiction. 
This also proves the last statement of Lemma~\ref{imm}.\qed
\end{proof}

Lemma~\ref{imm} shows how to compute an immediate descendant from a descendant, which we phrase in an operation called GetImmDe (Figure~\ref{getimmalgs}, left). GetImmDe takes $G$ and a descendant $G''$ of $G$ (which can even be the zero function), initializes $S=S(G'')$, and as long as possible, repeatedly extends $S$ as follows: for $f\in S(G)\backslash S$ if $(\vee S)\vee f\not=G$, $f$ is added to $S$.

\begin{figure}[t]
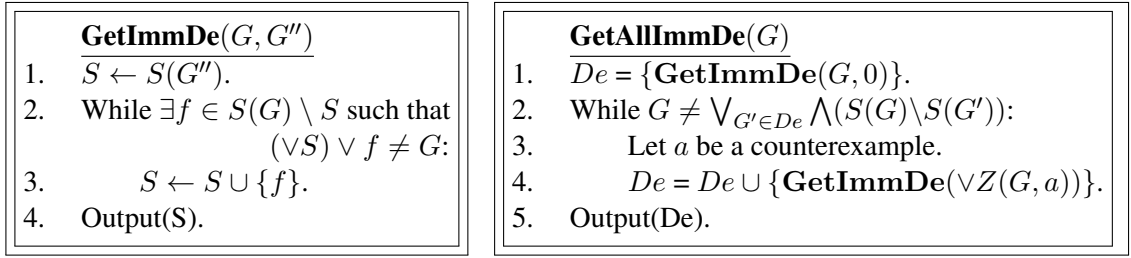

  {
   \begin{minipage}{.42\textwidth}
   \hspace{0.3cm}\fbox{\fbox{\begin{minipage}{28em}
  \begin{tabbing}
  xxxx\=xxxx\=xxxx\=xxxx\= \kill
  \> \underline{{\bf GetImmDe}$(G, G'')$}\\
  1. \>$S\gets S(G'')$.\\
  2. \> While $\exists f \in S(G)\setminus S$ such that \\\>\>\>\>\phantom{h}$(\vee S) \vee f \neq G$:\\
  3. \> \> $S \gets S \cup \{f\}$.\\
  4. \> Output(S).
  \end{tabbing}\end{minipage}}}
  \end{minipage}%
  \hspace{0.3cm}
    \begin{minipage}{0.45\textwidth}
   \fbox{\fbox{\begin{minipage}{28em}
  \begin{tabbing}
  xxxx\=xxxx\=xxxx\=xxxx\= \kill
  \> \underline{{\bf GetAllImmDe}$(G)$}\\
  1. \>$De$ = $\{{\bf GetImmDe}(G,0)\}$.\\
  2. \> While $G\neq\bigvee_{G' \in De}\bigwedge (S(G)\backslash S(G'))$:\\
  3. \> \> Let $a$ be a counterexample. \\
  4. \> \> $De$ = $De \cup \{{\bf GetImmDe}(\vee Z(G,a))\}$.\\
  5. \> Output(De).
  \end{tabbing}\end{minipage}}
  }
    \end{minipage}}
  \caption{Left: the GetImmDe operation. Right: the GetAllImmDe operation.}\label{getimmalgs}
  \end{figure}

\DanaFD{}{We now show how to \emph{compute} the descendants of $G$. First, we show how to compute one immediate descendant of $G$. This is done as follows.
Take any $S\subset S(G)$ such that $\vee S\not=G$ (even $S=\O$), and repeatedly extend $S$ as follows until no $f$ can be added: for $f\in S(G)\backslash S$ if $(\vee S)\vee f\not=G$, add $f$ to $S$.}

\DanaFD{GetImmDe can be used to obtain the first immediate descendant by calling it with $G''=0$. We next show how to obtain a descendant for which GetImmDe will return a different immediate descendant.}
{Next, we show how to compute a descendant.} To this end, we define the following:
For $G\in G(\FF_\vee)$ and a set $X'\subseteq X$, $Z(G,X')=\{f\in S(G)\ |\ f(X')=0\}$, where $f(X')=\vee_{x \in X'} f(x)$. When $X'=\{x\}$, we abbreviate to $Z(G,x)$. Obviously,
\begin{eqnarray}\label{int}
Z(G,X')=\bigcap_{x\in X'} Z(G,x).
\end{eqnarray}

Lemma~\ref{lem:zlem} relates this new definition to the descendants of $G$.
\begin{lemma}\label{lem:zlem}
Let $G\in G(\FF_\vee)$ and $X'\subseteq G^{-1}(1)$ be a nonempty set. Then, $G'=\vee Z(G,X')\in G(\FF_\vee)$ and $G'$ is a descendant of $G$.

For every immediate descendant $G'$ of $G$, there is $X'\subseteq G^{-1}(1)$ such that $\vee Z(G,X')=G'$.
\end{lemma}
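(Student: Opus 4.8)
The plan is to prove the two statements separately, relying throughout on the characterization that for a representative $G\in G(\FF_\vee)$ one has $S(G)=\{f\in\FF\ |\ f\imply G\}$ (this is immediate from $G=G_G=\vee_{f\imply G}f$ together with the maximality in the definition of $S(G)$).

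For the first (forward) direction, I would first record the easy containment: since $Z(G,X')\subseteq S(G)$, we get $G'=\vee Z(G,X')\imply \vee S(G)=G$. To see $G'\neq G$, pick any $x\in X'$; then $G(x)=1$ because $X'\subseteq G^{-1}(1)$, while every $f\in Z(G,X')$ satisfies $f(X')=0$ and hence $f(x)=0$, so $G'(x)=0$. Thus $G'$ will be a strict descendant of $G$ once we know $G'\in G(\FF_\vee)$. For the membership claim I would show $\{f\in\FF\ |\ f\imply G'\}=Z(G,X')$: the inclusion $\supseteq$ is clear; for $\subseteq$, if $f\imply G'$ then $f\imply G$ gives $f\in S(G)$, and since $G'$ vanishes on all of $X'$ we get $f(x)\le G'(x)=0$ for every $x\in X'$, i.e.\ $f(X')=0$, so $f\in Z(G,X')$. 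Consequently $G_{G'}=\vee\{f\ |\ f\imply G'\}=\vee Z(G,X')=G'$, which means $G'\in G(\FF_\vee)$.

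For the second (backward) direction, given an immediate descendant $G'$ of $G$ I would take $X'$ to be the set of all witnesses, $X'=\{a\in X\ |\ G(a)=1,\ G'(a)=0\}$. This set lies in $G^{-1}(1)$ by construction and is nonempty because $G'\neq G$ together with $G'\imply G$ forces some $a$ with $G'(a)=0$ and $G(a)=1$. The core computation is to show $Z(G,X')=S(G')$. Lemma~\ref{wit1} supplies both inclusions: each $f\in S(G')$ vanishes on every witness $a\in X'$, and $S(G')\subseteq S(G)$ by Lemma~\ref{GG}, so $S(G')\subseteq Z(G,X')$; conversely any $f\in S(G)\backslash S(G')$ satisfies $f(a)=1$ on each witness $a\in X'$ by Lemma~\ref{wit1}, so such an $f$ cannot lie in $Z(G,X')$, giving $Z(G,X')\subseteq S(G')$. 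Hence $\vee Z(G,X')=\vee S(G')=G'$.

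The routine parts are the implications and set inclusions; the step that needs the most care is the membership argument in the forward direction, namely proving $\{f\ |\ f\imply G'\}=Z(G,X')$, since this is exactly what certifies that $\vee Z(G,X')$ is already a representative rather than merely implying one. Everything else reduces to Lemma~\ref{wit1}, Lemma~\ref{GG}, and the definition of representatives.
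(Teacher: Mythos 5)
Your proof is correct and follows essentially the same route as the paper: the second part is the paper's argument verbatim (take $X'$ to be the set of all witnesses and use Lemmas~\ref{wit1} and~\ref{GG} to show $Z(G,X')=S(G')$), while for the first part your direct verification that $\{f\in\FF\ |\ f\imply G'\}=Z(G,X')$, hence $G_{G'}=G'$, is just the contrapositive of the paper's contradiction argument via Lemma~\ref{DG}. If anything, your version makes explicit a step the paper glosses: that any $f$ with $G'\vee f=G'$ satisfies $f\imply G'\imply G$ and hence lies in $S(G)$, which is exactly what licenses restricting attention to $S(G)\backslash Z(G,X')$ when deriving the contradiction.
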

\begin{proof} First notice that $G'(X')=0$.
Suppose on the contrary that $G'\not\in G(\FF_\vee)$. Then, there is $f\in S(G)\backslash Z(G,X')$ such that $G'\vee f=G'$.
Since $f\not\in Z(G,X')$, there is $z\in X'$ such that $f(z)=1$ and then $G'(X')=(G'\vee f)(X')\not=0$ -- a contradiction. Therefore, $G'\in G(\FF_\vee)$. By the definition of $Z$, $S(G') \subseteq S(G)$ and thus $G'$ is a descendant of $G$.

Let $G'$ be an immediate descendant of $G$ and let $X'= \{x \in X \mid G'(x)=0 \text{ and } G(x)=1\}$. Then, $X'\subseteq G^{-1}(1)$.
\DanaFD{We now show $Z(G,X')=S(G')$.
$S(G')\subseteq Z(G,X')$ because if $f \in S(G')$, then for all $x \in X'$, $f(x)=0$ and thus $f\in Z(G,X')$. We next prove that $Z(G,X')\subseteq S(G')$. Let $f\in Z(G,X')$ and $x_0$ be a witness for $G$ and $G'$, i.e., $G(x_0)=1$ and $G'(x_0)=0$. Therefore, $x_0\in X'$ and $f(x_0)=0$ by the definition of $Z$. By Lemma~\ref{wit1}, for every $f\in S(G)\backslash S(G')$, we have $f(a)=1$. Since $f\in S(G)$, it must be that $f\in S(G')$.}{and $Z(G,X')$ contains all the functions in $S(G')$ and no function in $S(G)\backslash S(G')$. Therefore $\vee Z(G,X')=G'$.}\qed
\end{proof}

\DanaFD{Lemma~\ref{lem:zlem} shows how to construct descendants from elements in $X$. Lemma~\ref{allim} determines when all immediate descendants of $G$ were obtained, and if not, how to obtain a new element in $X$ that leads to a new immediate descendant. The algorithm that finds all immediate descendants (Figure~\ref{getimmalgs}, right) follows directly from this lemma. In the following we denote $\bigwedge S=\bigwedge_{f\in S} f$.}
{We now prove}
\begin{lemma}\label{allim} Let $G_1,\ldots,G_m$ be immediate descendants of $G$. There is no other immediate descendant for $G$ if and only if
\vspace{-0.5cm}
\begin{eqnarray}\label{constar}
G=\bigvee_{i=1}^m\bigwedge (S(G)\backslash S(G_i)).
\end{eqnarray}
If (\ref{constar}) does not hold, then for any counterexample $a$ for (\ref{constar}), we have $\vee Z(G,a)$ is a descendant of $G$ but not equal to and not a descendant of any $G_i$, $i=1,\ldots,m$.
\end{lemma}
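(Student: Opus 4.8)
The plan is to reduce the functional identity (\ref{constar}) to a statement about the supports $S(G_i)$, and then read off both halves of the lemma from this translation. Write $H_i:=\bigwedge(S(G)\backslash S(G_i))$, so that the right-hand side of (\ref{constar}) is $\bigvee_{i=1}^m H_i$. First I would observe that each $H_i\imply G$: if $H_i(a)=1$ then every $f\in S(G)\backslash S(G_i)$ (a nonempty set, since $S(G_i)\subsetneq S(G)$) satisfies $f(a)=1$, so $G(a)=1$. Hence $\bigvee_i H_i\imply G$ always holds, and (\ref{constar}) is equivalent to $G\imply\bigvee_i H_i$. Consequently $a$ is a counterexample to (\ref{constar}) exactly when $G(a)=1$ and $H_i(a)=0$ for all $i$. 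The crucial bookkeeping step is the translation $H_i(a)=0\iff Z(G,a)\not\subseteq S(G_i)$: indeed $H_i(a)=0$ means some $f\in S(G)\backslash S(G_i)$ has $f(a)=0$, i.e. some $f\in Z(G,a)$ lies outside $S(G_i)$.

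Next I would dispatch the ``if not'' statement, as it requires the least machinery. Let $a$ be a counterexample, so $G(a)=1$ and, by the translation, $Z(G,a)\not\subseteq S(G_i)$ for every $i$. Taking $X'=\{a\}\subseteq G^{-1}(1)$ in Lemma~\ref{lem:zlem} shows $\vee Z(G,a)\in G(\FF_\vee)$ is a descendant of $G$. Moreover its support is exactly $Z(G,a)$: if $f\in S(\vee Z(G,a))$ then $f(a)\le(\vee Z(G,a))(a)=0$, so $f\in Z(G,a)$, while the reverse inclusion is immediate. Since $\vee Z(G,a)$ is a descendant of (or equal to) $G_i$ iff $S(\vee Z(G,a))=Z(G,a)\subseteq S(G_i)$ (using Lemma~\ref{GG} in one direction and $G'=\vee S(G')\imply G_i$ in the other), the condition $Z(G,a)\not\subseteq S(G_i)$ gives precisely that $\vee Z(G,a)$ is neither equal to nor a descendant of any $G_i$.

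For the ``$\Leftarrow$'' half of the equivalence, suppose (\ref{constar}) holds and let $G'$ be any immediate descendant of $G$; I would show $G'=G_i$ for some $i$. Pick a witness $a$ for $G'$ and $G$, so by Lemma~\ref{wit1} $G(a)=1$, and $f(a)=0$ for all $f\in S(G')$ while $f(a)=1$ for all $f\in S(G)\backslash S(G')$. Since (\ref{constar}) holds, some $H_i(a)=1$, i.e. $f(a)=1$ for every $f\in S(G)\backslash S(G_i)$. No $f\in S(G')$ can lie in $S(G)\backslash S(G_i)$ (it would have to satisfy both $f(a)=0$ and $f(a)=1$), so $S(G')\subseteq S(G_i)$, i.e. $G'\imply G_i\imply G$. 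As $G'$ is an immediate descendant of $G$ and $G_i\not=G$, this forces $G'=G_i$. Hence $\{G_1,\dots,G_m\}$ exhausts the immediate descendants.

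Finally, the ``$\Rightarrow$'' half I would prove by contraposition: if (\ref{constar}) fails there is a counterexample $a$, and the ``if not'' part already produces a descendant $G':=\vee Z(G,a)$ with $Z(G,a)=S(G')\not\subseteq S(G_i)$ for all $i$. Pushing $G'$ up to an immediate descendant via \textbf{GetImmDe}$(G,G')$ (whose correctness is Lemma~\ref{imm}) yields an immediate descendant $\hat G$ with $Z(G,a)=S(G')\subseteq S(\hat G)$; were $\hat G=G_i$ for some $i$ we would get $Z(G,a)\subseteq S(G_i)$, a contradiction, so $\hat G$ is a genuinely new immediate descendant. The main obstacle is the first paragraph: getting the translation $H_i(a)=0\iff Z(G,a)\not\subseteq S(G_i)$ exactly right and pinning down that $S(\vee Z(G,a))=Z(G,a)$, since everything else is a clean consequence of this dictionary together with Lemmas~\ref{wit1}, \ref{GG}, \ref{lem:zlem} and \ref{imm}.
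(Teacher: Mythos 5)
Your proposal is correct and takes essentially the same route as the paper's proof: both reduce (\ref{constar}) to a support-membership dictionary at a counterexample/witness point, use $\vee Z(G,a)$ and Lemma~\ref{lem:zlem} for the ``if not'' clause, and use Lemma~\ref{wit1} plus the nonemptiness of $(S(G)\backslash S(G_i))\cap S(G')$ for the equivalence. The only differences are presentational: you prove the ``$\Leftarrow$'' direction directly (forcing $G'=G_i$) where the paper argues the contrapositive via $W(a)=0$, and you make explicit, via \textbf{GetImmDe} and Lemma~\ref{imm}, the climb-up step that the paper compresses into ``there must be another immediate descendant.''
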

\begin{proof} \emph{Only if:} Suppose $G\not=\vee_{i=1}^m\wedge (S(G)\backslash S(G_i))$ and let $a$ be a counterexample. Since for all $i$, $S(G)\backslash S(G_i)\subseteq S(G)$, we have $\vee_{i=1}^m\wedge (S(G)\backslash S(G_i))\imply G$. Therefore, $G(a)=1$ and for every $i$ there is $f_i\in S(G)\backslash S(G_i)$ such that $f_i(a)=0$.
Consider $G'=Z(G,a)$. Since $f_i(a)=0$, we have $f_i\in S(G')$. Since $f_i\not\in S(G_i)$, $G'$ is not a descendant of $G_i$. Since $G'$ is a descendant of $G$ and not a descendant of any $G_i$, there must be another immediate descendant of $G$.

\emph{If:} Denote $W=\bigvee_{i=1}^m\bigwedge (S(G)\backslash S(G_i))$. Let $G'$ be another immediate descendant of $G$. Let $a$ be a witness for $G$ and $G'$. Then $G(a)=1$ and, by Lemma~\ref{wit1}, for every $f\in S(G)\backslash S(G')$, we have $f(a)=1$ and for every $f\in S(G')$, we have $f(a)=0$. Since $S(G')\not\subset S(G_i)$, we have $S(G)\backslash S(G_i)\not\subset S(G)\backslash S(G')$, and therefore, $(S(G)\backslash S(G_i))\cap S(G')$ is not empty. Choose $f_i\in (S(G)\backslash S(G_i))\cap S(G')$. Then, $f_i\in S(G)\backslash S(G_i)$ and since $f_i\in S(G')$, $f_i(a)=0$. Therefore, $W(a)=0$. Since $G(a)=1$, we get $G\not= W$. 
\end{proof}

\DanaFD{}{Now let $G_1,\ldots,G_m$ be immediate descendants of $G$. If (\ref{constar}) holds then there is no other immediate descendant. If not then let $a$ be a counterexample. Find $\vee Z(G,a)$. Then use the previous procedure to climb up the Hasse diagram until you get a new immediate descendant. This happens because $\vee Z(G,a)$ is not a descendant of any of the $G_i$, $i=1,\ldots,m$.}

\subsection{Critical Points}
In this section, we show that if one can find certain points (the \emph{critical points}), then the immediate descendants can be computed in polynomial time (in the number of these points) and thus SPEX runs in polynomial time. In particular, if the number of points is polynomial and they can be obtained in polynomial time, then SPEX runs in polynomial time. 

A set of points $C\subseteq X$ is called {\it a critical point set} for $\FF$ if for every $S\subseteq \FF$ and if $H=\bigwedge_{f\in S}f\wedge \bigwedge_{f\in \FF\backslash S}\bar{f}\not=0$, then there is a point in $c\in C$ such that $H(c)=1$.

We now show how to use the critical points to find the immediate descendants.

\begin{lemma}\label{lemcriticaldes}
\DanaD{Let $C\subseteq X$ be a set of points.} If $C$ is a set of critical points for $\FF$, then all the immediate descendants of $G\in G(\FF_\vee)$ can be found in time $|C|\cdot |S(G)|$.
\end{lemma}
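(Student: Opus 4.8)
The plan is to use the critical points directly as a source of witnesses and to read the immediate descendants off from the sets $Z(G,c)$ introduced before Lemma~\ref{lem:zlem}. Concretely, for each $c\in C$ I would compute $Z(G,c)=\{f\in S(G)\mid f(c)=0\}$ by evaluating the $|S(G)|$ functions of $S(G)$ at $c$. By Lemma~\ref{lem:zlem}, whenever $G(c)=1$ (so $\{c\}\subseteq G^{-1}(1)$) the element $\vee Z(G,c)$ lies in $G(\FF_\vee)$ and is a proper descendant of $G$, with representative set exactly $Z(G,c)\subsetneq S(G)$; when $G(c)=0$ we get $Z(G,c)=S(G)$ and $\vee Z(G,c)=G$, which we discard. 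This yields a family $\mathcal{D}=\{\vee Z(G,c)\mid c\in C,\ G(c)=1\}$ of proper descendants of $G$. The claim I would then establish is that the immediate descendants of $G$ are exactly the $\imply$-maximal elements of $\mathcal{D}$, equivalently the $\subseteq$-maximal sets among the $Z(G,c)$ (this equivalence holds because every $\vee Z(G,c)$ is a representative, so for these elements $A\imply B$ iff $S(A)\subseteq S(B)$, using Lemma~\ref{GG} and its converse).

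The heart of the argument is to show that \emph{every} immediate descendant $G'$ of $G$ occurs in $\mathcal{D}$. Since $G'\not=G$ are distinct representatives they are distinct functions (Lemma~\ref{sfact}), so there is a witness $a$ with $G(a)=1$ and $G'(a)=0$. Consider the full $\FF$-sign pattern $\sigma(f)=f(a)$, and the atom $H=\bigwedge_{\sigma(f)=1}f\wedge\bigwedge_{\sigma(f)=0}\bar{f}$. Then $H(a)=1$, so $H\not=0$, and by the definition of a critical point set there is $c\in C$ with $H(c)=1$, i.e.\ $f(c)=f(a)$ for \emph{all} $f\in\FF$. In particular $c$ agrees with $a$ on $S(G)$, so $c$ is itself a witness for $G'$ and $G$ and $Z(G,c)=Z(G,a)$. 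By Lemma~\ref{wit1}, at the witness $a$ we have $f(a)=0$ precisely for $f\in S(G')$ and $f(a)=1$ for $f\in S(G)\setminus S(G')$; hence $Z(G,c)=S(G')$ and $\vee Z(G,c)=\vee S(G')=G'$, placing $G'$ in $\mathcal{D}$.

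Given this, the characterization follows. Each element of $\mathcal D$ is a proper descendant of $G$; conversely, every proper descendant of $G$ implies some immediate descendant of $G$ (walk up its chain to $G$ in the finite Hasse diagram and take the last step below $G$), and that immediate descendant itself lies in $\mathcal D$ by the previous paragraph. An immediate descendant $G'$ is $\imply$-maximal in $\mathcal D$ because, by definition, nothing strictly between $G'$ and $G$ exists while every element of $\mathcal D$ satisfies $\imply G$ and $\neq G$; and any $\imply$-maximal $D\in\mathcal D$ implies some immediate descendant $G'\in\mathcal D$, so maximality forces $D=G'$. Thus the $\subseteq$-maximal $Z(G,c)$ are exactly the immediate descendants. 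For the running time, forming all sets $Z(G,c)$ costs $|C|$ evaluations of the $|S(G)|$ functions of $S(G)$, i.e.\ $|C|\cdot|S(G)|$ work, after which the immediate descendants are obtained by retaining the $\subseteq$-maximal sets.

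The step I expect to be the main obstacle is the middle one: arguing that a \emph{single} critical point realizes each immediate descendant. This requires combining the exact form of Lemma~\ref{wit1} — which pins down $Z(G,c)=S(G')$ at a witness, with neither extra nor missing functions — with the critical-point guarantee that some $c\in C$ reproduces the \emph{entire} $\FF$-sign pattern of an arbitrary witness $a$, not merely its restriction to $S(G)$. The remaining care is bookkeeping: verifying $\vee Z(G,c)\in G(\FF_\vee)$ via Lemma~\ref{lem:zlem} so that set inclusion matches $\imply$, and folding the maximal-set selection into the stated $|C|\cdot|S(G)|$ bound.
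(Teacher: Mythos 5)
Your proof is correct, but it takes a genuinely different route from the paper's. The paper's argument is incremental and reuses the machinery of Lemma~\ref{allim} and GetImmDe: given already-found immediate descendants $G_1,\ldots,G_m$, it observes that any counterexample $a$ to condition~(\ref{constar}) has its full $\FF$-sign pattern realized by some critical point $b\in C$ (the same atom-$H$ argument you use), so the search for counterexamples can be restricted to $C$; the point $b$ then yields $\vee Z(G,b)$, a descendant below none of the $G_i$, from which GetImmDe climbs to a new immediate descendant. Your proof instead gives a one-shot characterization: compute $Z(G,c)$ for every $c\in C$ and keep the $\subseteq$-maximal sets. Your key step --- that every immediate descendant $G'$ equals $\vee Z(G,c)$ for a \emph{single} critical point, because Lemma~\ref{wit1} pins $Z(G,a)=S(G')$ exactly at any witness $a$ and the critical-point property transports the entire sign pattern of $a$ into $C$ --- is essentially a single-point sharpening of the second half of Lemma~\ref{lem:zlem} combined with the critical-point definition, and your maximality argument (immediate descendants are $\imply$-maximal in $\mathcal{D}$ by the definition of immediacy, and any maximal element of $\mathcal{D}$ must coincide with the immediate descendant above it, which also lies in $\mathcal{D}$) is sound given that all elements of $\mathcal{D}$ are representatives, so $\imply$ coincides with $S$-set inclusion. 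What each approach buys: yours avoids Lemmas~\ref{imm} and~\ref{allim} entirely and yields a clean non-iterative algorithm; the paper's version integrates with GetAllImmDe, which it needs elsewhere. On the time bound, your residual worry is fair but no worse than the paper's own accounting: strictly counted, your $\subseteq$-maximality filter costs roughly $|C|^2\cdot|S(G)|$ set comparisons, but the paper's proof likewise counts only the $|C|\cdot|S(G)|$ evaluations of the functions of $S(G)$ on the points of $C$ and treats the bookkeeping over the found descendants as free; under that cost model the two analyses match.
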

\begin{proof}
Let $G_1,\ldots,G_m$ be some of the immediate descendants of $G$. To find another immediate descendant we look for a point $a$ such that $G(a)=1$, and for every $i$, there is $f_i\in S(G)\backslash S(G_i)$ such that $f_i(a)=0$. Let $a$ be such point. Consider $S=\{f\in \FF\ |\ f(a)=1\}$ and let $H= \bigwedge_{f\in S}f\wedge \bigwedge_{f\in \FF\backslash S}\bar{f}$. Since $H(a)\not=0$, there is a critical point $b\in C$ such that $H(b)=1$. By the definition of $b$, $G(b)=1$ and for every $i$ there is $f_i\in S(G)\backslash S(G_i)$ such that $f_i(b)=0$. Therefore, $b$ can be used to find a new descendant of $G$.
\DanaD{To find the descendants we need, in the worst case, to substitute all the assignments of $C$ in all the descendants $G_1,\ldots,G_m$ and $G$. For all the descendants, this takes at most $|C|\cdot |S(G)|$ steps, which implies the time complexity.}
\qed 
\end{proof}

We now show how to generate the set of critical points.
\begin{lemma}\label{Cri} If for every $S,R\subseteq \FF$ one can decide whether $H_{S,R}=\bigwedge_{f\in S}f\wedge \bigwedge_{f\in R}\bar{f}\not=0$ in time $T$ and if so, find $a\in X$ such that $H_{R,S}(a)=1$, then a set of critical points $C$ can be found in time $|C|\cdot T\cdot |\FF|$.
\end{lemma}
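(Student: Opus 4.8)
The plan is to build $C$ by a pruned depth-first enumeration of the nonempty \emph{cells} of the arrangement defined by $\FF$, calling the oracle only along branches that are still feasible, so as to avoid the naive $2^{|\FF|}$ enumeration over all sign patterns. I would fix an ordering $\FF=\{f_1,\dots,f_n\}$ with $n=|\FF|$. A node of the search tree at depth $k$ is a partial sign pattern: a pair $(S,R)$ with $S\cup R=\{f_1,\dots,f_k\}$ and $S\cap R=\O$, representing the partial conjunction $H_{S,R}=\bigwedge_{f\in S}f\wedge\bigwedge_{f\in R}\bar f$. Starting from the root $(\O,\O)$, at a node $(S,R)$ of depth $k<n$ I would branch on $f_{k+1}$, forming the two children $(S\cup\{f_{k+1}\},R)$ and $(S,R\cup\{f_{k+1}\})$, invoke the oracle on each to test emptiness in time $T$, and recurse only into the nonempty children, pruning the empty ones. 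At a leaf (depth $n$) the oracle has already certified $H_{S,R}\not=0$ and returned a point $a$ with $H_{S,R}(a)=1$; I would add this $a$ to $C$.

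For correctness, each leaf is a complete sign pattern $S\subseteq\FF$ with $R=\FF\backslash S$ whose cell is nonempty, and the added point witnesses $H(a)=1$ for $H=\bigwedge_{f\in S}f\wedge\bigwedge_{f\in\FF\backslash S}\bar f$. Conversely, no nonempty complete cell is lost: since $H_{S,R}$ is a conjunction, extending $(S,R)$ only adds constraints, so an empty node has only empty descendants, and pruning is sound. Moreover, any point witnessing a nonempty leaf also witnesses every ancestor of that leaf (fewer constraints), so the entire root-to-leaf path consists of nonempty, unpruned nodes and the leaf is indeed reached. Hence $C$ contains a witness for every nonempty $H$, which is exactly the defining property of a critical point set.

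The running time hinges on bounding the number of visited nodes away from $2^{|\FF|}$. The key observation is that every \emph{nonempty} partial node $(S,R)$ extends to at least one nonempty complete cell: taking any $x$ with $H_{S,R}(x)=1$ and reading off the signs $f_{k+1}(x),\dots,f_n(x)$ yields a nonempty leaf below it. Thus every nonempty node lies on some root-to-leaf path, and since the tree has exactly $|C|$ leaves, all at depth $n$, the union of these $|C|$ paths (each with $n+1$ nodes) contains at most $|C|\,(n+1)$ nonempty nodes. Each nonempty internal node triggers at most two oracle calls (one per child, including the empty ones that get pruned), each costing $T$, so the total number of calls is $O(|C|\,|\FF|)$ and the total time is $O(|C|\cdot T\cdot|\FF|)$, matching the claim up to a constant factor.

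The hard part is precisely this counting step, namely showing the DFS touches only $O(|C|\cdot|\FF|)$ nodes rather than exponentially many. The two ingredients that make it work are the monotonicity of conjunctions (emptiness is preserved under extension, validating the pruning) and the fact that every feasible partial node extends to a leaf; together they collapse the bound from $2^{|\FF|}$ to $|C|\cdot|\FF|$.
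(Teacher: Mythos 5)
Your proposal is correct and is essentially the paper's argument: the paper builds the same pruned enumeration of nonempty sign patterns level by level (maintaining the set $K_i$ of feasible patterns over $\FF_i=\{f_1,\dots,f_i\}$ and extending each by $f_{i+1}$ or $\overline{f_{i+1}}$ via the oracle), which is just the breadth-first traversal of your DFS tree. Your counting step --- every nonempty partial pattern extends to a nonempty leaf, so the live nodes are bounded by $|C|$ per level --- is precisely the (unstated) reason the paper's bound $|C|\cdot T\cdot|\FF|$ holds, so if anything you have made the paper's terse proof more explicit.
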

\begin{proof} The set is constructed inductively, in stages. Let $\FF=\{f_1,\ldots,f_t\}$ and denote $\FF_i=\{f_1,\ldots,f_i\}$. Suppose we have a set $K_i=\{S\subseteq [i] \mid \bigwedge_{f\in S}f\wedge \bigwedge_{f\in [i]\backslash S}\bar{f}\not=0\}$; then we define $K_{i+1}=\{ g\wedge f_{i+1}\ |\  g\in K_i \mbox{\ and\ } g\wedge f_{i+1}\not=0\}$ $\cup \{ g\wedge \overline{f_{i+1}}\ |\  g\in K_i \mbox{\ and\ } g\wedge \overline{f_{i+1}}\not=0\}$.
\end{proof}

\section{A Polynomial Time Algorithm for Halfspaces in a Constant Dimension}\label{sec:halfs}
In this section, we show two results. The first is that when $\FF$ is a set of halfspaces over a constant dimension, one can find a polynomial-sized critical point set in polynomial time, and therefore, SPEX can run in polynomial time. We then show that unless $P=NP$, this result cannot be extended to non-constant dimensions.

A halfspace of dimension $d$ is a boolean function of the form: $$f(x_1,\ldots,x_d)=[a_1x_1+\cdots+a_dx_d\ge b]=\left\{ \begin{array}{ll}
1&\mbox{\ \ if \ }a_1x_1+\cdots+a_dx_d\ge b \\ 0&\mbox{\ \ otherwise}\end{array}\right.$$ where $(x_1,\ldots,x_d)\in \Re^d$ and $a_1,\ldots,a_d,b$ are real numbers. Therefore, $f:\Re^d\to \{0,1\}$.

We now prove that the set of critical points is of a polynomial size.
\begin{lemma} Let $\FF$ be a set of halfspaces in dimension $d$. There is a set of critical points $C$ for $\FF$ of size $|\FF|^{d+1}.$
\end{lemma}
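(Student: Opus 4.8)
The plan is to bound the number of distinct nonempty regions (cells) that the arrangement of the $|\FF|$ halfspaces carves out of $\Re^d$, and then to place one critical point in each such cell. Recall that a critical point set must contain a point witnessing every nonempty sign-pattern $H=\bigwedge_{f\in S}f\wedge \bigwedge_{f\in \FF\backslash S}\bar f$. Each such $H$ corresponds exactly to one full-dimensional (or lower-dimensional) cell of the arrangement of the $|\FF|$ hyperplanes $a^{(f)}\cdot x = b^{(f)}$ in $\Re^d$, where a point satisfies $f$ iff it lies on the closed positive side of the $f$-th hyperplane. So it suffices to bound the number of distinct cells of such an arrangement and to exhibit a representative point in each nonempty one.

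First I would invoke the standard bound on the number of cells (faces of all dimensions, or equivalently distinct sign-vectors) realized by an arrangement of $n=|\FF|$ hyperplanes in $\Re^d$: this number is $O(n^d)$, and a clean closed-form upper bound is $\sum_{i=0}^{d}\binom{n}{i}\le n^d$ for the number of full-dimensional regions, with the total number of faces across all dimensions bounded by a polynomial of degree $d$ in $n$. To get the explicit stated bound $|\FF|^{d+1}$, I would argue combinatorially: each nonempty region of the arrangement is uniquely determined by choosing, for each coordinate of its ``location'' relative to the hyperplanes, a bounded amount of data. Concretely, every cell can be identified by a subset of at most $d$ of the hyperplanes whose intersection determines a defining flat together with a side-choice, and counting such tuples gives at most $\binom{n}{\le d}\cdot 2^{?}$ many; bookkeeping these choices yields a bound dominated by $n^{d+1}=|\FF|^{d+1}$. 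The $+1$ in the exponent comfortably absorbs the side/orientation factors and the lower-dimensional faces, so I would not optimize the constant and simply show the count is at most $|\FF|^{d+1}$.

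The construction of the actual points is then routine: for each nonempty sign-pattern I pick any single point satisfying it (the cell is nonempty by assumption, so such a point exists), and the resulting finite set $C$ has size at most the number of patterns, hence at most $|\FF|^{d+1}$. By definition this $C$ is a critical point set, since any nonempty $H_{S,\FF\backslash S}$ is exactly one of these patterns and is therefore witnessed by its chosen representative in $C$. I would emphasize that the boundary sign-patterns (where a point lies \emph{on} a hyperplane, making the corresponding $f$ evaluate to $1$ via the $\ge$) must be counted as their own cells, which is precisely why one needs the full face count of the arrangement rather than just the full-dimensional regions.

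The main obstacle I anticipate is the careful accounting of which sign-patterns are actually \emph{realizable} and counting them with the correct polynomial degree, particularly handling the closed halfspaces (the $\ge b$ convention) so that lower-dimensional faces lying on the hyperplanes are correctly attributed to a sign-pattern. The essential point is that $d$ is constant, so that $|\FF|^{d+1}$ is polynomial in $|\FF|$; combined with Lemma~\ref{lemcriticaldes} and Lemma~\ref{Cri}, a polynomial-sized critical point set computable in polynomial time is exactly what makes SPEX run in polynomial time. I would defer the algorithmic \emph{construction} in polynomial time to the subsequent argument and here focus solely on the existential size bound $|\FF|^{d+1}$.
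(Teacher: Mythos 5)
Your proof is correct, but it takes a genuinely different route from the paper's. The paper's argument is a two-line dualization: each point $x\in\Re^d$ induces a dual function $x^\bot:\FF\to\{0,1\}$ with $x^\bot(f)=f(x)$; since $f(x)=[a_1x_1+\cdots+a_dx_d-b\ge 0]$ is linear in the $d+1$ parameters $(a_1,\ldots,a_d,b)$, this dual class has VC-dimension at most $d+1$, so by the Sauer--Shelah lemma the number of realizable sign patterns on $\FF$ is at most $\sum_{i=0}^{d+1}\binom{|\FF|}{i}\le |\FF|^{d+1}$, and one representative point per pattern gives $C$. You instead count faces of the hyperplane arrangement of the $|\FF|$ bounding hyperplanes. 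Both proofs share the same skeleton --- one point per nonempty pattern $H=\bigwedge_{f\in S}f\wedge\bigwedge_{f\in\FF\backslash S}\bar f$, then bound the number of realizable patterns by a polynomial of degree $d+1$ --- and differ only in the counting tool. Your approach is more elementary and geometrically explicit, and your insistence on counting faces of \emph{all} dimensions (so that points lying on hyperplanes, which satisfy the closed inequality $\ge$, realize their own patterns) is exactly the right concern; the dual-VC argument handles this automatically and, moreover, generalizes beyond halfspaces to any $\FF$ whose dual class has bounded VC-dimension, which is relevant to the extensions the authors hint at. One caveat on your write-up: the cell-identification bookkeeping (``a subset of at most $d$ hyperplanes plus side choices, giving $\binom{n}{\le d}\cdot 2^{?}$'') is left informal, with the orientation factor literally unresolved. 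To make it airtight, invoke the standard arrangement bound: faces of dimension $k$ are counted by choosing the $d-k$ hyperplanes spanning the supporting flat and a cell of the induced arrangement within that flat, giving $O(|\FF|^d)$ faces in total, hence at most $|\FF|^{d+1}$ realizable patterns --- so the claimed bound holds, but as stated your counting step is a sketch rather than a proof.
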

\begin{proof} Define the dual set of halfspaces. That is, for every $x\in \Re^d$, the dual function $x^\bot:\FF\to \{0,1\}$ where $x^\bot(f)=f(x)$. It is well known that the VC-dimension of this set is at most $d+1$. By the Sauer-Shelah lemma, the result follows.\qed
\end{proof}

Next, we prove that the set of critical points can be computed in polynomial time.
\begin{lemma} Let $\FF$ be a set of halfspaces in dimension $d$. A set of critical points $C$ for $\FF$ of size $|\FF|^{d+1}$ can be found in time $poly(|\FF|^d)$.
\end{lemma}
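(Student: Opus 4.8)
The plan is to combine the size bound from the previous lemma with the generic construction of Lemma~\ref{Cri}. Recall that Lemma~\ref{Cri} builds a critical point set in time $|C|\cdot T\cdot|\FF|$, where $T$ is the time needed to decide, for arbitrary disjoint $S,R\subseteq\FF$, whether $H_{S,R}=\bigwedge_{f\in S}f\wedge\bigwedge_{f\in R}\bar f\neq 0$, and to exhibit a point $a$ with $H_{S,R}(a)=1$ when it is. The previous lemma guarantees a critical set of size $|\FF|^{d+1}$, and the inductively constructed sets $K_i$ in Lemma~\ref{Cri} grow monotonically toward the set of all realizable patterns, so each intermediate $|K_i|$ is bounded by $|\FF|^{d+1}$ as well. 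Hence it suffices to show that for halfspaces this feasibility-and-witness subproblem can be solved in time $T=poly(|\FF|)$: the total running time is then $|\FF|^{d+1}\cdot poly(|\FF|)\cdot|\FF|=poly(|\FF|^d)$, since $d$ is constant.

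So the core step is to realize $H_{S,R}\neq 0$ as a linear feasibility question. Writing each $f\in\FF$ as $f(x)=[a_f\cdot x\ge b_f]$, the condition $H_{S,R}(x)=1$ reads $a_f\cdot x\ge b_f$ for all $f\in S$ together with $a_f\cdot x<b_f$ for all $f\in R$. I would handle the strict inequalities by introducing a real slack variable $\delta$ and solving the linear program: maximize $\delta$ subject to $a_f\cdot x\ge b_f$ for $f\in S$, $b_f-a_f\cdot x\ge\delta$ for $f\in R$, and $\delta\le 1$. This LP has $d+1$ variables and $O(|\FF|)$ constraints, so it is solvable in polynomial time. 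I claim $H_{S,R}\neq 0$ if and only if the LP is feasible with optimal value $\delta^\star>0$: if $\delta^\star>0$, the optimal $x$ satisfies the $R$-constraints strictly and the $S$-constraints weakly, hence $H_{S,R}(x)=1$; conversely any witness of $H_{S,R}\neq 0$ yields a strictly positive achievable value of $\delta$. When feasible, the optimal $x$ is exactly the witness point required by Lemma~\ref{Cri}, giving $T=poly(|\FF|)$.

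The only technical care is in the strict-versus-weak inequality bookkeeping: I must argue that $\delta^\star=0$ (as opposed to $\delta^\star>0$) correctly signals $H_{S,R}=0$, and rule out unboundedness via the cap $\delta\le 1$. I expect this point — checking that $\delta^\star=0$ forces every $S$-feasible point onto the boundary of some $R$-halfspace, thereby violating strictness — to be the main, though routine, obstacle. Everything else follows by substituting the polynomial bound on $T$ and the size bound $|\FF|^{d+1}$ into Lemma~\ref{Cri}.
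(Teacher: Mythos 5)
Your proposal is correct and takes essentially the same route as the paper, whose entire proof is the one-line observation that the statement follows from Lemma~\ref{Cri} together with the fact that the required nonemptiness checks are linear programs solvable in polynomial time. Your extra details---the slack-variable LP handling the strict inequalities $a_f\cdot x<b_f$, and the Sauer--Shelah bound of $|\FF|^{d+1}$ on each intermediate set $K_i$---are sound and merely make explicit what the paper leaves implicit.
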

\begin{proof} Follows from Lemma~\ref{Cri} and the fact that linear programming (required to check whether $g\wedge f_i \neq 0$) takes polynomial time.\qed
\end{proof}

By the above results, we conclude:
\begin{theorem} Let $\FF$ be a set of halfspaces in dimension $d$. There is a learning algorithm for $\FF_\vee$ that runs in time $|\FF|^{O(d)}$ and asks at most $|\FF|\cdot \OPT(\FF_\vee)$ membership queries.

In particular, when the dimension $d$ is constant, the algorithm runs in polynomial time.
\end{theorem}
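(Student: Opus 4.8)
The plan is to assemble the ingredients already established and feed them into the generic SPEX guarantees. Concretely, I would combine the polynomial-size critical point set for halfspaces, the reduction of the two primitive operations SPEX performs to evaluations on critical points, Theorem~\ref{upper} for the running time, and Theorem~\ref{lower} for the query bound.

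First I would invoke the two preceding lemmas to obtain, once and for all at the start, a critical point set $C$ for $\FF$ with $|C|=|\FF|^{d+1}$, computed in time polynomial in $|\FF|^d$, i.e. $|\FF|^{O(d)}$. This set is cached and reused at every node of the recursion. Next I would bound the cost of the two operations SPEX needs at a node $G$. For the immediate descendants, Lemma~\ref{lemcriticaldes} already gives them in time $|C|\cdot|S(G)|\le|\FF|^{d+2}$. For the witnesses, I would argue as follows: a witness for $G$ and an immediate descendant $G'$ is a point where $G'$ vanishes and $G$ evaluates to $1$, and by Lemma~\ref{wit1} such a point has every $f\in S(G')$ equal to $0$ and makes $G(a)=1$. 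Taking any such true point $a$ and its sign pattern $S=\{f\in\FF: f(a)=1\}$, the critical point property yields $c\in C$ realizing the same pattern on all of $\FF$; hence every $f\in S(G')$ still vanishes at $c$ while some $f\in S(G)$ is $1$ at $c$, so $c$ is itself a witness. Thus witnesses can be read off by evaluating the functions of $S(G)$ on all of $C$, again in time $|C|\cdot|S(G)|=|\FF|^{O(d)}$.

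With both primitives computable in time $t=|\FF|^{O(d)}$, Theorem~\ref{upper} gives total running time $t\cdot|\FF|=|\FF|^{O(d)}$ and at most $|\FF|\cdot\max_{G}|\De(G)|$ membership queries; Theorem~\ref{lower} then upgrades the query count to $|\FF|\cdot\OPT(\FF_\vee)$. Adding the one-time cost of building $C$ does not change the bound, since that cost is itself $|\FF|^{O(d)}$. Specializing to constant $d$ makes both $|C|$ and the running time polynomial, yielding the final claim.

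The main obstacle I anticipate is the witness step, because the defining property of a critical point set quantifies over the sign pattern of \emph{all} of $\FF$, not merely $S(G)$. I must therefore verify carefully that passing from an arbitrary true point of $G$ lying outside $G'^{-1}(1)$ to a critical point in its cell preserves the zero/one pattern on $S(G)$, so that the witness property survives. Once this is checked, the remainder is bookkeeping: caching $C$, evaluating $S(G)$ on $C$ at each node, and summing over the at most $|\FF|$ levels of recursion.
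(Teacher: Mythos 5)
Your proposal is correct and takes essentially the same route as the paper, which derives the theorem directly by combining the two preceding lemmas (a critical point set of size $|\FF|^{d+1}$, computable in time $poly(|\FF|^d)$ via linear programming) with Lemma~\ref{lemcriticaldes}, Theorem~\ref{upper}, and Theorem~\ref{lower}. Your extra care on the witness step --- passing from an arbitrary witness $a$ to a critical point $c\in C$ with the identical sign pattern on all of $\FF$, hence on $S(G)$ and $S(G')$ --- correctly fills in a detail the paper leaves implicit in its one-line conclusion.
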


Next, we show that the above cannot be extended to a non-constant dimension:
\begin{theorem} If every set $\FF$ of halfspaces deciding whether $F\in \FF_\vee$ is a descendant of $\vee\FF$ can be done in polynomial time, then $P=NP$.
\end{theorem}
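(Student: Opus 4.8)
The plan is to reduce 3-SAT to the descendant computation below the top element $\vee\FF$. By Lemma~\ref{allim}, once a partial list $G_1,\dots,G_m$ of immediate descendants of a representative $G$ is known, deciding whether $G$ has a further descendant not below the $G_i$ is exactly deciding whether (\ref{constar}) fails, and any counterexample $a$ yields, through Lemma~\ref{lem:zlem}, an explicit new descendant $\vee Z(G,a)\in G(\FF_\vee)$. Thus any polynomial-time procedure that settles the descendant relation to $\vee\FF$ would, given such a list, find a point $a$ with $G(a)=1$ lying outside every polytope $P_i=\bigwedge(S(G)\setminus S(G_i))$ whenever one exists. I will arrange the $P_i$ to be the ``violating regions'' of the clauses of a 3-CNF, so that such a point is precisely a satisfying assignment.

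The geometric core is a sign-pattern encoding. Given $\psi=\bigwedge_{j=1}^M c_j$ over $y_1,\dots,y_n$, I work in $\R^n$ and read off from a point $x$ the assignment $\sigma(x)_i=[x_i\ge\tfrac12]$. For a clause, say $c_j=(y_1\vee\bar y_2\vee y_3)$, the assignments that falsify all three literals form the orthant-polytope $P_j=\{x: x_1\le\tfrac12,\ x_2\ge\tfrac12,\ x_3\le\tfrac12\}$, an intersection of three halfspaces. The key identity is that $x$ lies in $P_j$ iff $\sigma(x)$ violates $c_j$; hence $\bigcup_{j}P_j=\R^n$ iff every sign pattern violates some clause, i.e. iff $\psi$ is unsatisfiable, and an uncovered point is exactly a satisfying assignment. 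Detecting an uncovered point is therefore NP-hard, and a polynomial descendant test would detect it, giving $P=NP$.

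It remains to realize the $P_j$ as the polytopes $\bigwedge(S(G)\setminus S(G_j))$ attached to genuine immediate descendants. I would put into $\FF$ the literal halfspaces $\{x_i\le\tfrac12\}$ and $\{x_i\ge\tfrac12\}$ (so that on the relevant region $\vee\FF$ is the constant $1$), choose $G$ with $\vee\FF$ on top, and for each clause take $G_j\in G(\FF_\vee)$ with $S(G)\setminus S(G_j)$ equal to the three literal-halfspaces of $c_j$; by Lemma~\ref{imm} I must verify each $G_j$ is an immediate descendant, which is a polynomial check. Under this encoding (\ref{constar}) holds iff $\bigcup_jP_j$ covers the region of $G$, i.e. iff $\psi$ is unsatisfiable, and any counterexample to (\ref{constar}) is a satisfying assignment.

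The main obstacle is exactly this last realization, not the reduction skeleton. Naively taking complementary pairs $\{x_i\le\tfrac12\},\{x_i\ge\tfrac12\}$ makes many halfspaces redundant, so that deleting a clause's three halfspaces need not drop $G$ strictly below the top and the candidate $G_j$ collapses to $\vee\FF$ rather than being an immediate descendant, violating the hypotheses of Lemma~\ref{imm}. I would fix this with a padding gadget: auxiliary coordinates and halfspaces that make every halfspace of $\FF$ essential and force each $\bigwedge(S(G)\setminus S(G_j))$ to be nonempty and to coincide with the intended clause-polytope, while preserving the sign-pattern correspondence and the boundary behaviour at $x_i=\tfrac12$ (handled by using closed halfspaces so boundary points are covered). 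Once the gadget guarantees that the polytopes in (\ref{constar}) are exactly the clause regions and the $G_j$ are bona fide immediate descendants, the equivalence ``(\ref{constar}) fails iff $\psi$ is satisfiable'' is immediate and the theorem follows.
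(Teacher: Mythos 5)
There is a genuine gap, and it sits exactly where you placed your ``padding gadget.'' The theorem's hypothesis only supplies a polynomial-time \emph{decision} procedure for whether a given $F\in\FF_\vee$ is a descendant of $\vee\FF$, i.e.\ for whether $\vee S\not=\vee\FF$ for a given $S\subseteq\FF$. It does not return witness points, and it does not decide condition (\ref{constar}): the right-hand side of (\ref{constar}) is a disjunction of \emph{conjunctions} of halfspaces (a union of polytopes), which is not an element of $\FF_\vee$, so no descendancy query of the permitted form expresses it. Hence your key step --- ``any polynomial-time procedure that settles the descendant relation would find a point $a$ outside every $P_i$'' --- is unsupported: you never reduce the covering question $\bigcup_j P_j=\R^n$ to a descendancy decision. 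Moreover, the obstacle you acknowledge is fatal in your construction as stated: each complementary pair $[x_i\le\tfrac12]\vee[x_i\ge\tfrac12]$ is by itself a tautology, so deleting the three literal halfspaces of a clause leaves a disjunction that is still identically $1$; the candidate $G_j$ then \emph{equals} $G_{\max}$ rather than being an immediate descendant, and Lemma~\ref{imm} never applies. The gadget meant to repair this is the entire mathematical content of the reduction, and it is not constructed.

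In fact no repair is possible within your continuous-domain framing: over $\R^n$, deciding $\vee S\not=\vee\FF$ amounts to at most $|\FF|$ linear-programming feasibility checks (for each $f\in\FF\setminus S$, test $f\wedge\bigwedge_{g\in S}\bar g\not=0$, with strict inequalities handled by maximizing a slack), so the stated decision problem is already in P over the reals and cannot be NP-hard there. The paper's proof works over the boolean domain and sidesteps all of your difficulties by using \emph{terms} (DNF) rather than clauses (CNF): a conjunction of three literals is a \emph{single} halfspace on $\{0,1\}^n$, e.g.\ $x_1\wedge\bar x_2\wedge x_3=[x_1-x_2+x_3\ge 2]$, whereas the falsifying region of a clause is an intersection of halfspaces --- precisely the object $\FF_\vee$ cannot express. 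Taking $\FF=\{T_1,\ldots,T_m,1\}$ gives $\vee\FF=G_{\max}=1$, and $T_1\vee\cdots\vee T_m$ is a descendant of $\vee\FF$ if and only if the DNF $T_1\vee\cdots\vee T_m$ is not a tautology, an NP-complete question; this makes the reduction two lines and requires no analogue of Lemma~\ref{allim}, no counterexample-finding, and no gadget.
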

\begin{proof}
 The reduction is from the problem of dual 3SAT -- that is, given the literals $\{x_1,\ldots,x_n,\bar{x}_1,\ldots,\bar{x}_n\}$ and the terms $T_1,\ldots,T_m$ where each $T_i$ is a conjunction of three literals, decide whether $T_1\vee\cdots\vee T_m= 1$.

Given the terms $T_1,\ldots,T_m$, each can be translated into a halfspace. For example, the term $x_1\wedge \bar{x}_2\wedge x_3$ corresponds to the halfspace $[x_1+(1-x_2)+x_3\ge 3]=[x_1-x_2+x_3\ge 2]$. Now, consider $\FF=\{T_1,\ldots,T_m,1\}$. Then $G_{\max}=1$ and $T_1\vee\ldots\vee T_m\not=1$ if and only if $T_1\vee \cdots\vee T_m$ is the only immediate descendant of $G_{\max}$.\qed

\end{proof}

\section{Duality and a Polynomial Time Algorithm for Variable Inequality Predicates}\label{sec:vars}

In this section, we study the learnability of conjunctions over variable inequality predicates.
In the acyclic case, we provide a polynomial time learning algorithm. In the general case, we show that the learning problem is equivalent to the open problem of enumerating all the maximal acyclic subgraphs of a given directed graph.

Consider the set of boolean functions $\FF^I:=\{ [x_i>x_j]\ | (i,j)\in I\}$ for some $I\subseteq [n]^2$ where $[n]=\{1,2,\ldots,n\}$ and the variables $x_i$ are interpreted as real numbers. We define $[x_i>x_j]=1$ if $x_i>x_j$; and $0$ otherwise. We assume throughout this section that $(i,i)\not\in I$ for all $i$.

We consider the dual class $\FF_\wedge:=\{\wedge_{f\in S} f\  |\ S\subseteq \FF\}$.
By duality (De Morgan's law), all our results are true for learning $\FF_\wedge$ (after swapping $\vee$ with $\wedge$). The dual SPEX algorithm is depicted in Figure~\ref{Alg1}.

For a set $J\subseteq I$, we define $F_J=\wedge_{(i,j)\in J}[x_i>x_j]$. For $F\in \FF^I_\wedge$ we define $\I(F)=\{(i,j)\mid [x_i>x_j]$ is in $F\}$. Note that $\I(F_J)=J$. For example, $\I([x_1>x_2]\wedge [x_3>x_1])=\{(1,2),(3,1)\}$.

The {\it directed graph of} $I\subseteq [n]^2$ is ${\cal G}_I=([n],I)$.  The {\it reachability matrix} of $I$, denoted by $R(I)$, is an $n\times n$ matrix where $R(I)_{i,j}=1$ if there is a (directed) path from $i$ to $j$ in ${\cal G}_I$; otherwise, $R(I)_{i,j}=0$.
We say that $I$ is {\it acyclic} (resp., {\it cyclic}) if the graph ${\cal G}_I$ is acyclic (resp., cyclic). 
We say that an assignment to the variables $a\in [n]^n$ is a {\it topological sorting} of $I$ if for every $(i,j)\in I$, we have $a_i>a_j$. It is known that $I$ has a topological sorting if and only if $I$ is acyclic. Also, it is known that a topological sorting for an acyclic set can be found in linear time (see \cite{K}, Volume 1, Section 2.2.3 and \cite{CLRS}). Next, we study the learnability of $\FF^I_\wedge$ when $I$ is acyclic and following this, we discuss the general case.

\subsection{Acyclic Sets}
We now examine the case when $I$ is acyclic. Here, the number of critical points of $\FF^I$ where $I=\{(1,2),(2,3),$ $\cdots,(n-1,n)\}$ is $2^{n-1}$. Therefore, using Lemma~\ref{lemcriticaldes} does not enable us to obtain a polynomial time algorithm. 
Accordingly, we show a different way to determine whether a function is a representative (i.e., in $G(\FF_\wedge)$), and then show how to obtain the immediate descendants in quadratic time (in $n$) and the witnesses in linear time. As a result, SPEX can run in polynomial time. Finally, we show that the number of membership queries is at most $|I|$.

Before we show the main lemma, Lemma~\ref{D02}, we present Lemma~\ref{sec6:firstlem}, a trivial lemma that we use to prove Lemma~\ref{D02}.
\begin{lemma} \label{sec6:firstlem}
Let $I\subseteq [n]^2$ be an acyclic set, $F\in \FF^I_\wedge $, and $a\in [n]^n$. Then, $F(a)=1$ if and only if $a$ is a topological sorting of $\G_{\I(F)}$.

In particular, $F$ is satisfiable and a satisfying assignment $a\in [n]^n$ can be found in linear time.
\end{lemma}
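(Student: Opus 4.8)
\textbf{Proof plan for Lemma~\ref{sec6:firstlem}.}

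The plan is to unfold the definitions on both sides and observe that they literally coincide. First I would recall that $F\in\FF^I_\wedge$ means $F=\bigwedge_{(i,j)\in\I(F)}[x_i>x_j]$, so that for a given assignment $a\in[n]^n$ we have $F(a)=1$ exactly when $[a_i>a_j]=1$ for every $(i,j)\in\I(F)$, i.e.\ when $a_i>a_j$ for every such pair. On the other side, $a$ being a topological sorting of $\G_{\I(F)}=([n],\I(F))$ means by definition precisely that $a_i>a_j$ for every edge $(i,j)\in\I(F)$. Thus the two conditions are the same statement, and the equivalence $F(a)=1\iff a$ is a topological sorting of $\G_{\I(F)}$ is immediate.

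For the ``in particular'' clause I would argue as follows. Since $I$ is acyclic, every subset $\I(F)\subseteq I$ induces an acyclic subgraph $\G_{\I(F)}$ (removing edges cannot create a cycle). By the quoted fact that an acyclic set admits a topological sorting, and moreover one computable in linear time (citing \cite{K} and \cite{CLRS}), such an assignment $a$ exists and can be found in linear time. By the first part of the lemma this $a$ satisfies $F(a)=1$, so $F$ is satisfiable and $a$ is the desired satisfying assignment. One small point to address is that the standard topological sort produces an ordering of the vertices, which I would convert to an assignment $a\in[n]^n$ by letting $a_i$ be the (strictly decreasing) rank of vertex $i$ in the ordering, guaranteeing $a_i>a_j$ whenever $i$ precedes $j$; this keeps the values within $[n]$ and preserves the strict inequalities along every edge.

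The proof is essentially definitional, so there is no genuine obstacle; the only thing to be careful about is the value range. The claim asserts assignments in $[n]^n$ rather than arbitrary reals, so I must make sure the topological-sort-derived ranks are strictly separated integers in $\{1,\dots,n\}$ and that distinct vertices receive distinct values, which the rank construction guarantees. With that in place the whole argument is a direct translation between the logical form of $F$ and the graph-theoretic notion of topological sorting, together with the cited linear-time algorithm.
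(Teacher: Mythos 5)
Your proof is correct and takes the same route as the paper, which simply declares this lemma trivial (its proof is omitted from the text): unfolding $F(a)=1$ into $a_i>a_j$ for all $(i,j)\in\I(F)$ is verbatim the paper's definition of a topological sorting of $\G_{\I(F)}$, and the ``in particular'' clause follows from acyclicity of the subgraph $\G_{\I(F)}$ of $\G_I$ plus the cited linear-time topological-sorting fact, exactly as you argue. Your final care about value ranges is fine but strictly unnecessary, since the paper already defines a topological sorting to \emph{be} an assignment $a\in[n]^n$ with $a_i>a_j$ along every edge, so the standard rank construction you describe is built into the quoted fact.
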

\DanaFD{}{\begin{proof} Trivial.\qed
\end{proof}}

Next we show our main lemma that enables us to determine the representative elements and the immediate descendants (in Lemmas~\ref{lem:mainlem}--\ref{fWG}). The proof is provided in Appendix~\ref{sec6proofs}.

\begin{lemma}\label{D02} Let $I$ be acyclic and $F_1,F_2\in \FF^I_\wedge$. Then, $F_1=F_2$ if and only if $R(\I(F_2))$$=$$R(\I(F_1))$.
\end{lemma}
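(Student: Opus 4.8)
The plan is to prove the biconditional by establishing each direction separately, with the key insight being that the reachability matrix $R(\I(F))$ encodes exactly the set of inequalities $[x_i>x_j]$ that are logically implied by $F$. Since $F_1=F_2$ means logical equivalence, and logical equivalence is determined by the set of implied atomic predicates, reducing everything to reachability is the natural approach.

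First I would prove the harder direction: if $F_1=F_2$, then $R(\I(F_1))=R(\I(F_2))$. The contrapositive is cleaner, so I would assume $R(\I(F_1))\neq R(\I(F_2))$ and exhibit a point $a$ on which $F_1$ and $F_2$ disagree. Without loss of generality there is a pair $(i,j)$ with $R(\I(F_1))_{i,j}=1$ but $R(\I(F_2))_{i,j}=0$, i.e.\ there is a directed path from $i$ to $j$ in $\G_{\I(F_1)}$ but none in $\G_{\I(F_2)}$. The central claim I would establish is that $F_1$ logically implies $[x_i>x_j]$ precisely when $R(\I(F_1))_{i,j}=1$: indeed, any assignment satisfying $F_1$ must respect every edge on a directed path from $i$ to $j$, so transitivity of $>$ forces $x_i>x_j$. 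Conversely, when no such path exists, I would construct a topological sorting of $\G_{\I(F_2)}$ that violates $x_i>x_j$ — this is where Lemma~\ref{sec6:firstlem} does the work, since $F_2(a)=1$ iff $a$ is a topological sorting of $\G_{\I(F_2)}$, and the absence of a path from $i$ to $j$ means we can order the vertices so that $j$ comes no lower than $i$ (assign coordinates respecting a topological order in which $x_j\geq x_i$). This produces an $a$ with $F_2(a)=1$ but $F_1(a)=0$, contradicting $F_1=F_2$.

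For the converse direction, assuming $R(\I(F_1))=R(\I(F_2))$, I would show $F_1(a)=F_2(a)$ for all $a\in[n]^n$. By Lemma~\ref{sec6:firstlem}, $F_k(a)=1$ iff $a$ is a topological sorting of $\G_{\I(F_k)}$, i.e.\ iff $a_i>a_j$ for every edge $(i,j)\in\I(F_k)$. The observation is that $a$ respects all edges of $\I(F_k)$ if and only if it respects all reachability pairs of $R(\I(F_k))$ (again by transitivity of $>$, respecting the edges forces respecting every path, and the edges are themselves paths). Since $R(\I(F_1))=R(\I(F_2))$, the two conditions define the same set of topological sortings, hence $F_1$ and $F_2$ have identical satisfying sets and are logically equal.

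The main obstacle I anticipate is the explicit construction in the forward direction: building a concrete assignment $a\in[n]^n$ that is a valid topological sorting of $\G_{\I(F_2)}$ (so $F_2(a)=1$) yet witnesses $x_i>x_j$ failing (so $F_1(a)=0$). The subtlety is that $i$ and $j$ may be incomparable in the reachability order of $\G_{\I(F_2)}$, and I must choose integer values in $[n]$ that both honor all edges of $\I(F_2)$ and place $x_i\leq x_j$. Since there is no path from $i$ to $j$ in $\G_{\I(F_2)}$, the vertex $j$ is not a proper descendant of $i$, so a topological order can be chosen placing $j$ weakly before $i$; converting this order into integer coordinates in $[n]$ (with strict inequalities along edges) requires acyclicity of $I$, which guarantees such a sorting exists, and careful bookkeeping to keep values within range. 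Everything else reduces to the transitivity of strict inequality and the equivalence in Lemma~\ref{sec6:firstlem}.
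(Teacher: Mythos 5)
Your proposal is correct and follows essentially the same route as the paper's proof: both directions rest on Lemma~\ref{sec6:firstlem} plus transitivity of $>$ along paths, and your forward-direction witness (a topological sorting of the path-free graph $\G_{\I(F_2)}$ with $a_i\le a_j$) is the same construction the paper realizes by merging $i$ and $j$ into a single vertex to force $a_i=a_j$, using acyclicity of $I$ to rule out a back path from $j$ to $i$. The only cosmetic difference is that your converse direction argues directly that equal reachability matrices determine equal sets of satisfying assignments, whereas the paper derives a contradiction from a violated edge of $\I(F_1)$ whose endpoints are connected by a path in $\G_{\I(F_2)}$.
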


We now show how to decide whether $F\in G(\FF^I_\wedge)$ -- that is, whether $F$ is a representative.
\begin{lemma}\label{lem:mainlem} Let $I$ be an acyclic set and $F\in \FF^I_\wedge$. $F\in G(\FF^I_\wedge)$ if and only if for every $(i,j)\in I\backslash \I(F)$ there is no path from $i$ to $j$ in $\G_{\I(F)}$.
\end{lemma}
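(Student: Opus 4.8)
The plan is to prove both directions using the characterization from Lemma~\ref{DG}, which says $F \in G(\FF^I_\wedge)$ if and only if adding any predicate $f \in \FF^I \backslash S(F)$ strictly changes $F$ (in the dual conjunctive setting, $F \wedge f \neq F$). So I would translate the graph-theoretic condition ``there is no path from $i$ to $j$ in $\G_{\I(F)}$'' into the algebraic condition that $F \wedge [x_i > x_j] \neq F$. The key bridge is Lemma~\ref{D02}, which tells us that for acyclic $I$, two formulas are equal exactly when their reachability matrices coincide. Thus $F \wedge [x_i>x_j] = F$ if and only if $R(\I(F) \cup \{(i,j)\}) = R(\I(F))$, and the latter holds exactly when the edge $(i,j)$ adds no new reachability relation.

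First I would establish the central equivalence: for an edge $(i,j) \in I \backslash \I(F)$, adding it to $\I(F)$ changes the reachability matrix if and only if there is no directed path from $i$ to $j$ in $\G_{\I(F)}$. The forward part is immediate—if $R(\I(F))_{i,j}=1$ already, then the edge $(i,j)$ is redundant and reachability is unchanged. The reverse part requires noting that if there is no path from $i$ to $j$, then adding the edge $(i,j)$ creates at least one new reachable pair (namely $i \to j$ itself), so $R(\I(F) \cup \{(i,j)\})_{i,j} = 1 \neq 0 = R(\I(F))_{i,j}$, giving a strictly larger reachability matrix. I should also check that $\I(F) \cup \{(i,j)\} \subseteq I$ stays within the acyclic framework so that Lemma~\ref{D02} applies; since $I$ is acyclic and we only take subsets of $I$, this is automatic.

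Next I would assemble the argument. For the ``if'' direction: suppose for every $(i,j) \in I \backslash \I(F)$ there is no path from $i$ to $j$ in $\G_{\I(F)}$. Then by the equivalence above, $R(\I(F) \cup \{(i,j)\}) \neq R(\I(F))$, so by Lemma~\ref{D02} we get $F \wedge [x_i>x_j] \neq F$. Since this holds for all such predicates, the dual form of Lemma~\ref{DG} gives $F \in G(\FF^I_\wedge)$. For the ``only if'' direction I would argue the contrapositive: if some $(i,j) \in I \backslash \I(F)$ has a path from $i$ to $j$ in $\G_{\I(F)}$, then adding the edge does not change reachability, so $R(\I(F)\cup\{(i,j)\}) = R(\I(F))$, whence $F \wedge [x_i>x_j] = F$ by Lemma~\ref{D02}, and so $F \notin G(\FF^I_\wedge)$ by Lemma~\ref{DG}.

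The main obstacle I anticipate is purely bookkeeping around the dualization: the earlier lemmas (notably Lemma~\ref{DG}) are stated for the disjunctive class $\FF_\vee$ with $F \vee f \neq F$, whereas here we work in the dual conjunctive class $\FF^I_\wedge$, so I must be careful that the representative characterization transfers correctly under De Morgan, which the paper has already asserted holds. A secondary subtlety is making the reachability-monotonicity claim precise—namely that $R$ is monotone under adding edges and that the new edge genuinely introduces a new $1$ entry rather than being absorbed by existing paths. This relies on the elementary fact that in a directed graph, adding an edge $(i,j)$ changes reachability precisely when $j$ was not previously reachable from $i$, which I would state and use without a lengthy proof.
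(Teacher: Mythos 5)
Your proof is correct and rests on the same two pillars as the paper's own proof of Lemma~\ref{lem:mainlem} --- the dual of Lemma~\ref{DG} together with the reachability characterization of Lemma~\ref{D02} --- and your \emph{if} direction is literally the paper's: no path for any $(i,j)\in I\backslash \I(F)$ forces $R(\I(F)\cup\{(i,j)\})\not=R(\I(F))$, hence $F\wedge[x_i>x_j]\not=F$ by Lemma~\ref{D02}, hence $F\in G(\FF^I_\wedge)$ by (the dual of) Lemma~\ref{DG}. Where you genuinely diverge is the \emph{only if} direction: you argue the contrapositive entirely at the level of reachability matrices, using the elementary fact that adding an edge $(i,j)$ leaves $R$ unchanged exactly when $j$ was already reachable from $i$, and then applying the sufficiency direction of Lemma~\ref{D02} to conclude $F\wedge[x_i>x_j]=F$, so $F\notin G(\FF^I_\wedge)$. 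The paper instead works semantically: from $F\in G(\FF^I_\wedge)$ and Lemma~\ref{DG} it extracts an assignment $a$ with $F(a)=1$ and $a_i\le a_j$, and derives a contradiction with a putative path from $i$ to $j$ via the transitivity argument already used in the proof of Lemma~\ref{D02}. Your route is the more uniform of the two --- both directions funnel through a single graph-theoretic equivalence and no assignments are ever constructed --- at the modest cost of stating the edge-absorption fact explicitly, a fact the paper needs anyway (it reappears as the edge-replacement argument in the proof of Lemma~\ref{DI}). The paper's direct argument, in exchange, exhibits the witnessing assignment concretely, which foreshadows how witnesses are actually computed by topological sorting in Lemma~\ref{fWG}. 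Your bookkeeping concerns are handled correctly: since $\I(F)\cup\{(i,j)\}\subseteq I$ and $I$ is acyclic, Lemma~\ref{D02} applies to both formulas, and the dualization of Lemma~\ref{DG} is exactly what the paper itself invokes.
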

\begin{proof} \emph{If:} If for every $(i,j)\in I\backslash \I(F)$ there is no path from $i$ to $j$ in $\G_{\I(F)}$, then for every $(i,j)\in I\backslash \I(F)$, $R(\I(F)\cup\{(i,j)\})\not=R(\I(F))$ \DanaFD{. By}{which by} Lemma~\ref{D02}, this implies that $F\wedge [x_i>x_j]\not= F$. By (the dual result of) Lemma~\ref{DG}, the result follows.

\emph{Only if:} Now let $F\in G(\FF^I_\wedge)$. By Lemma~\ref{DG}, for every $[x_i>x_j]\not\in F$, we have $F\wedge [x_i>x_j]\not= F$. Therefore, there is an assignment $a$ that satisfies $a_i\le a_j$ and $F(a)=1$. As before, if there is a path in $\G_{\I(F)}$ from $i$ to $j$, then we get a contradiction.
\qed
\end{proof}

We now show how to determine the immediate descendants of $G$ in polynomial time.
\begin{lemma}\label{DI}
Let $I$ be acyclic. The immediate descendants of $G\in G(\FF_\wedge^I)$ are all $G^{r,s}:=F_{ \I(G)\backslash \{(r,s)\}} $ where $(r,s)\in \I(G)$ and there is no path from $r$ to $s$ in $\G_{\I(G)\backslash \{(r,s)\}}$.

In particular, for all $G\in G(\FF_\wedge^I)$, we have $|\De(G)|\le |\I(G)|\le |I|$.
\end{lemma}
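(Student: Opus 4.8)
The plan is to work entirely in the language of index sets and reachability matrices, which Lemma~\ref{D02} and Lemma~\ref{lem:mainlem} have set up for this acyclic dual setting. Recall that for representatives $G,G'\in G(\FF^I_\wedge)$, equality is detected by the reachability matrix (Lemma~\ref{D02}) and membership in $G(\FF^I_\wedge)$ means $\I(G)$ is \emph{transitively closed in} $I$ (Lemma~\ref{lem:mainlem}): every $(i,j)\in I$ with a path $i\to j$ in $\G_{\I(G)}$ already lies in $\I(G)$. The first thing I would record is that, since $G=F_{\I(G)}$ is satisfied exactly by topological sortings of $\G_{\I(G)}$, the descendant relation translates into index sets: a representative $G'$ is a descendant of $G$ if and only if $\I(G')\subsetneq\I(G)$, so the whole lemma reduces to a combinatorial statement about which single-edge deletions from the transitively closed set $\I(G)$ yield another transitively closed set.

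Next I would prove the easy inclusion: every claimed $G^{r,s}$ is an immediate descendant. Fix $(r,s)\in\I(G)$ with no path $r\to s$ in $\G_{\I(G)\backslash\{(r,s)\}}$, and set $J=\I(G)\backslash\{(r,s)\}$. I would first check $J$ is transitively closed in $I$, so that $G^{r,s}=F_J$ is a representative with $\I(G^{r,s})=J$: any $(i,j)\in I\backslash J$ with a path $i\to j$ in $\G_J$ is either $(r,s)$ itself (excluded by the no-path hypothesis) or lies in $I\backslash\I(G)$, in which case the same path survives in the larger graph $\G_{\I(G)}$, contradicting that $G$ is a representative. Thus $\I(G^{r,s})=\I(G)\backslash\{(r,s)\}\subsetneq\I(G)$, so $G^{r,s}$ is a descendant, and it is \emph{immediate} for free: any representative $G''$ with $G^{r,s}\imply G''\imply G$ has $\I(G)\backslash\{(r,s)\}\subseteq\I(G'')\subseteq\I(G)$, and a single missing edge leaves only the two endpoints $\I(G'')=\I(G^{r,s})$ or $\I(G'')=\I(G)$.

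For the converse I would take an arbitrary immediate descendant $G'$, so $\I(G')\subsetneq\I(G)$, and show $\I(G)\backslash\I(G')$ is a single edge. For any $(r,s)\in\I(G)\backslash\I(G')$, the representative of $\I(G')\cup\{(r,s)\}$ is a descendant of $G$ strictly above $G'$, hence by immediacy equals $G$; that is, the transitive closure of $\I(G')\cup\{(r,s)\}$ in $I$ is all of $\I(G)$. The crux, which I expect to be the main obstacle, is ruling out a second removed edge $(p,q)\ne(r,s)$: applying the previous sentence to both edges gives a path $p\to q$ in $\G_{\I(G')\cup\{(r,s)\}}$ and a path $r\to s$ in $\G_{\I(G')\cup\{(p,q)\}}$. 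If either path avoids the added edge it lies in $\G_{\I(G')}$, and transitive closedness of $\I(G')$ forces the corresponding pair into $\I(G')$, a contradiction. Otherwise the first path routes $p\rightsquigarrow r$ and $s\rightsquigarrow q$ in $\G_{\I(G')}$ while the second routes $r\rightsquigarrow p$ and $q\rightsquigarrow s$; combining $p\rightsquigarrow r$ with $r\rightsquigarrow p$ (and likewise $q,s$) in the \emph{acyclic} graph $\G_{\I(G')}\subseteq\G_I$ forces $p=r$ and $q=s$, contradicting $(p,q)\ne(r,s)$. Here acyclicity of $I$ is exactly what makes the argument go through.

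Having pinned down $\I(G)\backslash\I(G')=\{(r,s)\}$, I would conclude $G'=F_{\I(G)\backslash\{(r,s)\}}=G^{r,s}$, and the no-path condition on $(r,s)$ is forced by $G'$ being a representative (Lemma~\ref{lem:mainlem} applied to $(r,s)\in I\backslash\I(G')$). Finally, distinct edges $(r,s)$ give distinct index sets $\I(G)\backslash\{(r,s)\}$ and hence distinct descendants, so the map $(r,s)\mapsto G^{r,s}$ is injective from a subset of $\I(G)$, yielding $|\De(G)|\le|\I(G)|\le|I|$.
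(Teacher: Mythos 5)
Your proof is correct, and although it lands in the same place, it takes a genuinely different route from the paper, most visibly in the completeness half. For the inclusion (each $G^{r,s}$ is an immediate descendant) the two arguments are cousins: the paper proves $G\neq G^{r,s}$ via the reachability matrices of Lemma~\ref{D02} and then applies Lemma~\ref{imm}, whose condition is immediate because $S(G)\backslash S(G^{r,s})$ is the singleton $\{[x_r>x_s]\}$ and $G^{r,s}\wedge [x_r>x_s]=G$ syntactically; you instead verify representativeness of $F_{\I(G)\backslash\{(r,s)\}}$ through Lemma~\ref{lem:mainlem} and read off immediacy from the order isomorphism between representatives and their index sets (a one-edge gap admits no strictly intermediate set). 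The real divergence is in showing there are no other immediate descendants: the paper invokes the dual of Lemma~\ref{allim}, which reduces the claim to the single identity $G=F_J$, where $J$ collects the edges $(i,j)\in\I(G)$ admitting no alternative path, and proves it by showing $R(J)=R(\I(G))$ via repeated rerouting of non-critical edges (an iteration whose termination silently relies on paths in a DAG being simple); you avoid Lemma~\ref{allim} altogether, taking an arbitrary immediate descendant $G'$ and showing $\I(G)\backslash\I(G')$ is a single edge, with acyclicity used exactly once and crisply --- two missing edges $(r,s)\neq(p,q)$ would force paths $p\rightsquigarrow r$ and $r\rightsquigarrow p$ (and likewise $q\rightsquigarrow s$, $s\rightsquigarrow q$) in $\G_{\I(G')}$, hence $p=r$ and $q=s$. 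Your route is more elementary and self-contained, sidestepping Lemmas~\ref{D02}, \ref{imm}, and \ref{allim} as well as the termination subtlety in the rerouting step; the paper's route, in exchange, shows the generic GetAllImmDe criterion of Lemma~\ref{allim} specializing cleanly to this class and, as a byproduct, identifies $\De(G)$ with the transitive reduction of $\G_{\I(G)}$. Two small repairs for your write-up: in the sandwich step you should say explicitly why the representative of $\I(G')\cup\{(r,s)\}$ has index set inside $\I(G)$ (any pair of $I$ reachable in $\G_{\I(G')\cup\{(r,s)\}}\subseteq \G_{\I(G)}$ lies in $\I(G)$ because $G$ is a representative), and the chain you write as $G^{r,s}\imply G''\imply G$ has its implications reversed in the conjunctive setting (descendants are logically weaker there), though the index-set containments you actually use are the correct ones.
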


The proof is in Appendix~\ref{sec6proofs}. We now show how to find a witness.
\begin{lemma}\label{fWG}
Let $I$ be acyclic, $G\in G(\FF_\wedge^I)$, and $G^{r,s}:=F_{ \I(G)\backslash \{(r,s)\}} $ be an immediate descendant of $G$.
A witness for $G$ and $G^{r,s}$ can be found in linear time.
\end{lemma}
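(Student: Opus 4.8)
The plan is to exhibit an explicit assignment $a \in [n]^n$ that separates $G$ from its immediate descendant $G^{r,s}$, and to argue it can be built in linear time. Recall from Lemma~\ref{fWG}'s hypothesis that $G^{r,s} = F_{\I(G)\backslash\{(r,s)\}}$, so $\I(G^{r,s}) = \I(G)\backslash\{(r,s)\}$, and by Lemma~\ref{DI} there is no path from $r$ to $s$ in $\G_{\I(G^{r,s})}$. A witness $a$ must satisfy $G(a) \neq G^{r,s}(a)$; since $G \imply G^{r,s}$ (dropping a conjunct only weakens the constraint), Lemma~\ref{wit1} (in its dual form) tells us we must have $G(a)=0$ and $G^{r,s}(a)=1$. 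By Lemma~\ref{sec6:firstlem}, this is equivalent to requiring that $a$ is a topological sorting of $\G_{\I(G^{r,s})}$ but \emph{not} of $\G_{\I(G)}$. Since the only edge in $\I(G)$ missing from $\I(G^{r,s})$ is $(r,s)$, the single constraint that must be violated is $a_r > a_s$; so I want $a$ to respect every edge of $\I(G^{r,s})$ while forcing $a_r \le a_s$.

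\textbf{The core step.}
First I would compute, in linear time (Lemma~\ref{sec6:firstlem} / standard topological sort), a topological ordering of the acyclic graph $\G_{\I(G^{r,s})}$, and assign values $a_i$ consistent with it so that $a$ is a satisfying assignment of $G^{r,s}$, i.e.\ $G^{r,s}(a)=1$. The key point, which exploits the hypothesis that there is no directed path from $r$ to $s$ in $\G_{\I(G^{r,s})}$, is that I can choose this topological sort to place $s$ at least as early as $r$, yielding $a_r \le a_s$. Concretely, because $r$ is not an ancestor of $s$ in $\G_{\I(G^{r,s})}$, there is a topological order in which $s$ precedes (or ties with) $r$; any linear extension of the partial order that is compatible with ``$s$ before $r$'' exists precisely because no edge or path forces $r$ before $s$. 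Assigning strictly decreasing integer values along such an order gives $a_r \le a_s$ together with $a_i > a_j$ for every $(i,j)\in \I(G^{r,s})$. Then $G^{r,s}(a)=1$, while $G(a)=0$ because the conjunct $[x_r > x_s]$ evaluates to $0$. Hence $a$ is a witness, and the whole construction is a single topological sort plus an integer labeling, which is linear in $n + |I|$.

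\textbf{Main obstacle.}
The routine parts (the topological sort and the integer labeling) are standard; the one step requiring care is justifying that a topological order of $\G_{\I(G^{r,s})}$ placing $s$ no later than $r$ actually exists, and that this is exactly what the ``no path from $r$ to $s$'' hypothesis buys us. The clean way to see this is to add a fresh auxiliary edge $(s,r)$ to $\G_{\I(G^{r,s})}$: since there was no $r \to s$ path, adding $(s,r)$ creates no cycle, so the augmented graph is still acyclic and admits a topological sort, in which $s$ necessarily precedes $r$; restricting that order back to $\G_{\I(G^{r,s})}$ gives the desired labeling with $a_s \ge a_r$. I would state this augmentation explicitly, as it is the crux that converts the graph-theoretic hypothesis of Lemma~\ref{DI} into the arithmetic inequality $a_r \le a_s$ needed to make $a$ a witness.
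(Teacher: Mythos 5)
Your proposal is correct and follows essentially the same route as the paper: both arguments use Lemma~\ref{DI}'s guarantee that there is no $r\to s$ path in $\G_{\I(G)\backslash\{(r,s)\}}$ to build, via a single linear-time topological sort of a modified graph, an assignment that satisfies $G^{r,s}$ while violating $[x_r>x_s]$, hence $G^{r,s}(a)=1$ and $G(a)=0$. The only (immaterial) difference is the modification: the paper contracts $r$ and $s$ into one vertex, forcing $a_r=a_s$, whereas you add the auxiliary edge $(s,r)$, forcing $a_r<a_s$; both preserve acyclicity for exactly the same reason and yield a valid witness.
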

\begin{proof} By Lemma~\ref{DI}, $(r,s)\in\I(G)$ and there is no path from $r$ to $s$ in $\G_{\I(G)\backslash \{(r,s)\}}$. Therefore, if we match vertices $r$ and $s$ in $\G_{\I(G)\backslash \{(r,s)\}}$ we get an acyclic graph $\G'$. Then, a topological sorting $a$ for $\G'$ is a satisfying assignment for $G^{r,s}$ that satisfies $a_r=a_s$. Since $[x_r>x_s]\in S(G)$, we get $G(a)=0$. Therefore, $a$ is a witness for $G$ and $G^{r,s}$.\qed
\end{proof}

To learn a function in $\FF^I_\wedge$, SPEX needs to find the immediate descendants of $G$ and a witness for each immediate descendant and $G$. By Lemma~\ref{DI}, this involves finding a path between every two nodes in the directed graph $\G_{I(G)}$, which can be done in polynomial time. By Lemma~\ref{fWG}, to find a witness, SPEX needs a topological sorting, which can be done in linear time. Therefore, SPEX runs in polynomial time.
Therefore, by Theorem~\ref{upper} and Lemma~\ref{DI}, the class $\FF^I_\wedge$ is learnable in polynomial time with at most $|I|^2$ membership queries. We now show that the number of membership queries is actually lower and equal to $|I|$.

\begin{theorem}\label{theorem:acyclic} Let $I\subseteq [n]^2$ be acyclic. The class $\FF^I_\wedge$ is learnable in polynomial time with at most $|I|$ membership queries.
\end{theorem}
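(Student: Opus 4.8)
The polynomial running time is already in hand: by Lemma~\ref{DI} the immediate descendants of any $G\in G(\FF^I_\wedge)$ are computed from reachability in $\G_{\I(G)}$ in polynomial time, by Lemma~\ref{fWG} each witness is found in linear time, and Theorem~\ref{upper} packages these into a polynomial-time learner. The only new content over the $|I|^2$ bound obtained from $|\De(G)|\le|I|$ is the sharper count, so the plan is to show that over the \emph{entire} execution of SPEX each edge of $I$ is the subject of at most one membership query. Since by Lemma~\ref{DI} every immediate descendant examined at a candidate $G$ is exactly some $G^{r,s}=F_{\I(G)\setminus\{(r,s)\}}$ with $(r,s)\in\I(G)$, I would associate each query with the single removed edge $(r,s)$; distinct descendants at one call carry distinct edges, so it suffices to prove no edge is queried at two different calls.

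First I would record the structural facts that SPEX descends a single chain of representatives $G^{(0)}=G_{\max},G^{(1)},\dots$, and that supports only shrink along it. Indeed, whenever line~5 fires, the new support is $S\gets\bigcap S(G^{r,s})$ over the refined descendants, so (using $S(G)=\I(G)$ and $S(G^{r,s})=\I(G)\setminus\{(r,s)\}$) the recursed candidate $\vee S$ has support $\I(G)\setminus D$, where $D$ is the set of edges just refined through; by Lemma~\ref{gcd} and Lemma~\ref{lem:cor} this is again a representative and its support is $\subseteq\I(G)$. Hence candidate supports are monotonically non-increasing, and every edge in $D$ is permanently absent from all later candidates. In particular, an edge $(r,s)$ queried in the refine branch (line~5) can never reappear in a candidate's support, so the descendant $G^{r,s}$ can never be re-formed and $(r,s)$ is never queried again.

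It then remains to treat the eliminate branch (line~6). When a query on the witness for $G$ and $G^{r,s}$ lands there, the set $S(G^{r,s})=\I(G)\setminus\{(r,s)\}$ is inserted into $T$. For any later candidate $G'$ we have $\I(G')\subseteq\I(G)$ by the monotonicity above, so if SPEX ever tried to remove $(r,s)$ again it would form a descendant with support $\I(G')\setminus\{(r,s)\}\subseteq\I(G)\setminus\{(r,s)\}\in T$; the guard in line~3 then skips it. Thus an edge queried in the eliminate branch is likewise never queried a second time. Combining both branches, each of the at most $|I|$ edges of $I$ triggers at most one membership query, giving the bound of $|I|$ queries in total.

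The step I expect to require the most care is the bookkeeping that candidate supports are genuinely monotone and that the $\gcd$/intersection refinement never reintroduces a removed edge: this uses the removability condition of Lemma~\ref{DI} (no alternate $r\to s$ path in $\G_{\I(G)\setminus\{(r,s)\}}$) together with the representative characterization of Lemma~\ref{lem:mainlem}, to confirm that passing to the representative of $F_{\I(G)\setminus D}$ yields exactly support $\I(G)\setminus D$ rather than silently adding edges back. The matching between the line~3 guard and the ``subset of a stored support'' condition must also be verified verbatim against Algorithm~1 so that the eliminate-branch skip is triggered in precisely the cases claimed.
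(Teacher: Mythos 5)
Your proof is correct and takes essentially the same route as the paper's: both charge each membership query to the unique predicate $[x_r>x_s]$ removed in forming the queried immediate descendant (Lemma~\ref{DI}), and show it is charged at most once because the refine branch (line~5) deletes that predicate from $S$ permanently while the eliminate branch (line~6) places $S(G)\setminus\{[x_r>x_s]\}$ into $T$, so the line-3 guard blocks any later descendant omitting that predicate. Your additional bookkeeping (monotonicity of candidate supports and the Lemma~\ref{gcd} check that the recursed candidate's support is exactly the intersection, with no edges reintroduced) merely makes explicit what the paper's terser argument leaves implicit.
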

\begin{proof}
Consider the (dual) Algorithm SPEX in Figure~\ref{Alg1} in Appendix~\ref{app:dual}. Let $F$ be the target function. Let $G_{\max} = f_1\wedge f_2\wedge\cdots \wedge f_t$ where $f_i\in \FF^I$. By Lemma~\ref{DI}, we may assume w.l.o.g. that $G^{(i)}=f_1\wedge f_2\wedge\cdots \wedge f_{i-1}\wedge f_{i+1}\wedge\cdots f_t$ where $i=1,\ldots,\ell$ are all the immediate descendants of $G$. Let $a^{(i)}$ be the witness for $G$ and $G^{(i)}$, $i=1,\ldots,\ell$.

In the algorithm, $S=\{f_i\ |\ i=1,\ldots,t\}$. If $F(a^{(i)})=1$, then Line 5 in the algorithm removes $f_i$ from $S$ and \DanaFD{$f_i$}{it will} never returns \DanaFD{}{back} to $S$. If $F(a^{(i)})=0$, then the set $\{f_1,f_2,\ldots, f_{i-1}, f_{i+1},\cdots f_t\}$ is added to $T$, which means (see Line~3) that SPEX never considers a descendant that does not contain $f_i$. Namely, for every $f_i$, SPEX makes at most one membership query.\qed
\end{proof}

\DanaD{We conclude this section by illustrating SPEX on an example, depicted in Figure~\ref{Graph}. Assume the set of boolean functions is $\FF^I$, where $I=\{(1,2),(1,4),(1,3),(3,4),(2,4),(3,2)\}$, and the target is $G_{min}=1$.
 The graph in Figure~\ref{Graph} shows the Hasse diagram (in white and gray nodes) and the candidates that SPEX considers (in gray).
 The figure demonstrates that the number of membership queries is equal to $|I|$.
 }

\subsection{Cyclic Sets}
In this section, we consider the general case, where $I\subseteq [n]^2$ can be any set.
Lemma~\ref{IDG} shows a few results when $I$ is cyclic.
\begin{lemma}\label{IDG}  Let $I\subseteq [n]^2$ be any set with cycles. Then:
\begin{enumerate}[nosep,nolistsep]
\item $G_{\max}=0$ is in $\FF^I_\wedge$.
\item The immediate descendants of $G_{\max}$ are all $\wedge_{(i,j)\in J}[x_i>x_j]$  where $\G_{J}$ is a maximal acyclic subgraph of $\G_{I}$.

In particular,
\item Finding all the immediate descendants of $G_{\max}$ is equivalent to enumerating all the maximal acyclic subgraphs of $\G_{I}$.
\end{enumerate}
\end{lemma}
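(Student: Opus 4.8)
The plan is to prove the three claims of Lemma~\ref{IDG} in sequence, relying on the characterization of representatives and immediate descendants established earlier, now transported to the dual (conjunctive) setting via De Morgan. For claim~1, I would observe that since $I$ has a cycle $i_1,i_2,\ldots,i_k,i_1$, the conjunction $G_{\max}=\bigwedge_{(i,j)\in I}[x_i>x_j]$ forces $x_{i_1}>x_{i_2}>\cdots>x_{i_k}>x_{i_1}$, which is unsatisfiable. Hence $G_{\max}$ is the constant $0$ function, and since $0=\bigwedge\emptyset$ (or more to the point, $0\in\FF^I_\wedge$ because it is realized by the full conjunction), it lies in $\FF^I_\wedge$.

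For claim~2, the goal is to identify the immediate descendants of $G_{\max}=0$. In the dual picture, an immediate descendant of the (conjunctive) top element is a \emph{maximal} proper representative below it. I would argue as follows. A function $F_J$ for $J\subseteq I$ is satisfiable (i.e.\ $F_J\neq 0$, so $F_J$ is a proper ascendant of the bottom $0$) exactly when $\G_J$ is acyclic, by the dual of Lemma~\ref{sec6:firstlem}. Thus the immediate descendants of $G_{\max}=0$ are the \emph{minimal} nonzero representatives, which correspond to the \emph{maximal} acyclic subsets $J$: if $\G_J$ is a maximal acyclic subgraph of $\G_I$, then $F_J\neq 0$, and adding any further edge $(i,j)\in I\setminus J$ creates a cycle, forcing $F_{J\cup\{(i,j)\}}=0$; so there is no representative strictly between $0$ and $F_J$ of the form obtainable by adding edges. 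Conversely, if $\G_J$ is acyclic but not maximal, some edge can be added while staying acyclic, yielding a proper representative strictly between $F_J$ and $0$, so $F_J$ is not an immediate descendant of $0$. I would also need to confirm that each such $F_J$ with $\G_J$ maximal acyclic is itself a representative (in $G(\FF^I_\wedge)$) and that distinct maximal acyclic $J$ give distinct representatives; this should follow from the dual of Lemma~\ref{lem:mainlem} (maximality means every omitted edge would close a cycle, i.e.\ is already implied by a path, which is exactly the representative condition) together with Lemma~\ref{D02} to rule out accidental coincidences.

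Claim~3 is then essentially a restatement: by claim~2, the map $J\mapsto F_J$ is a bijection between maximal acyclic subgraphs of $\G_I$ and immediate descendants of $G_{\max}$, so enumerating one is equivalent to enumerating the other.

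The main obstacle I anticipate is the careful verification in claim~2 that maximality of the acyclic subgraph corresponds \emph{exactly} to being an immediate descendant, rather than merely a descendant. The subtlety is that immediacy of a descendant of $0$ means minimality as a nonzero element, and I must be sure that the representative ordering $\imply$ matches the subset ordering on the edge sets in the right direction (recall from the earlier sections that descendants need not have a clean relationship between their edge sets beyond what Lemma~\ref{GG} and Lemma~\ref{D02} provide). In particular I should check that there is no representative $G$ with $0\neq G$ strictly between $0$ and $F_J$ that fails to be of the form $F_{J'}$ for some $J'\supseteq J$; the dual of Lemma~\ref{lem:mainlem} and the reachability-matrix characterization of equality in Lemma~\ref{D02} are the tools that should close this gap.
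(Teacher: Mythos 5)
Your overall route is the paper's: claim~1 via a cycle forcing $G_{\max}\imply[x_{i_1}>x_{i_1}]=0$; claim~2 via the observation that for a maximal acyclic $J$ every omitted edge closes a cycle, so $F_J\wedge[x_i>x_j]=0=G_{\max}$, which simultaneously gives representativeness (dual of Lemma~\ref{DG}) and immediacy (dual of Lemma~\ref{imm}); and claim~3 as a restatement. The genuine gap is in your converse step for claim~2. You assert that if $\G_J$ is acyclic but not maximal, adding an acyclicity-preserving edge yields ``a proper representative strictly between $F_J$ and $0$.'' That step can fail: the added edge $(i,j)$ may already be implied by a path from $i$ to $j$ in $\G_J$, in which case $F_{J\cup\{(i,j)\}}=F_J$ and no intermediate element is produced. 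Concretely, take $I=\{(1,2),(2,3),(1,3),(3,1)\}$ and $J=\{(1,2),(2,3)\}$: the only edge addable while preserving acyclicity is $(1,3)$, and $F_{J\cup\{(1,3)\}}=F_J$. The conclusion that $F_J$ is not an immediate descendant still holds, but for a different reason --- $F_J\wedge[x_1>x_3]=F_J$ shows $F_J$ is not a representative at all, hence not a node of the Hasse diagram. The paper's proof handles exactly this with a dichotomy your argument is missing: either $F_{J\cup\{(i,j)\}}=F_J$, in which case $F_J$ is not a representative, or $F_{J\cup\{(i,j)\}}\neq F_J$, in which case $G_{\max}\imply F_{J\cup\{(i,j)\}}\imply F_J$ strictly and $F_J$ is not immediate.

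Two smaller inaccuracies. First, your gloss that an omitted edge ``is already implied by a path, which is exactly the representative condition'' inverts the logic: an omitted edge $(i,j)$ implied by a path from $i$ to $j$ would give $F_J\wedge[x_i>x_j]=F_J$ and \emph{destroy} representativeness (compare Lemma~\ref{lem:mainlem}, whose condition is the \emph{absence} of such a path); what maximality actually provides is a path from $j$ to $i$, which \emph{contradicts} the omitted edge and forces the conjunction to $0$. Second, Lemmas~\ref{lem:mainlem} and~\ref{D02} are stated only for acyclic ambient $I$, so their duals cannot be invoked verbatim in the cyclic setting; the tools that work in general are the dual of Lemma~\ref{DG} for representativeness, item~5 of Lemma~\ref{sfact} for distinctness of the $F_J$ across distinct maximal acyclic $J$ (two logically equal representatives coincide), and the dual of Lemma~\ref{GG} for your (correctly flagged) worry that any representative strictly between $G_{\max}$ and $F_J$ must be of the form $F_{J'}$ with $J\subsetneq J'\subsetneq I$.
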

\begin{proof} If $i_1\to i_2\to\cdots\to i_c\to i_1$ is a cycle, then $G_{\max}\imply [x_{i_1}>x_{i_1}]=0$ and thus $G_{\max}=0$.

If $\G_{J}$ is a maximal acyclic subgraph of $\G_{I}$, then adding any edge in $I\backslash J$ to $\G_{J}$ \DanaFD{creates}{produce} a cycle. This implies that for any $[x_i>x_j]\in S(F_I)\backslash S(F_J)$, we have $F_J\wedge [x_i>x_j]=0=G_{\max}$. By Lemma~\ref{imm}, $F_J$ is an immediate descendant of $G_{\max}$.

Now, if $F_J$ is an immediate descendant of $G_{\max}$, then $J$ is acyclic because otherwise $F_J=0=G_{\max}$.
If $\G_J$ is not a maximal acyclic subgraph of $\G_I$, then there is an edge $(i,j)$ such that $J\cup \{(i,j)\}$ is acyclic and then either $F_{J\cup \{(i,j)\}}=F_J$ -- in which case $F_{J}$ is not a representative and thus not an immediate descendant -- or $F_{J\cup \{(i,j)\}}\not =F_J$ -- in which case $G_{\max}\imply F_{J\cup \{(i,j)\}}\imply F_J$ and $G_{\max}\not= F_{J\cup \{(i,j)\}}\not= F_J$, and therefore, $F_J$ is not an immediate descendant of $G_{\max}$.\qed
\end{proof}

Let $\G$ be any directed graph and denote by $N(\G)$ the number of the maximal acyclic subgraphs of $\G$. Lemma~\ref{loweri} follows immediately from Theorem~\ref{lower} and Lemma~\ref{IDG}.

\begin{lemma}\label{loweri} $\OPT(\FF^I_\wedge)\ge N(\G_I).$
\end{lemma}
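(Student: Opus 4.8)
The plan is to read the bound straight off the generic lower bound of Theorem~\ref{lower}, specialized to the top element of the conjunctive lattice. I would carry this out in two moves.

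First I would invoke Theorem~\ref{lower} in its dual form. Although Theorem~\ref{lower} is stated for $\FF_\vee$, the De Morgan duality noted at the opening of this section transfers it verbatim to the conjunctive class $\FF^I_\wedge$, giving
$$\OPT(\FF^I_\wedge)\ \ge\ \max_{G\in G(\FF^I_\wedge)}|\De(G)|\ \ge\ |\De(G_{\max})|,$$
where the second inequality is just the trivial step of evaluating the maximum at one particular representative, the top element $G_{\max}$ (which lies in $G(\FF^I_\wedge)$, being the top of $H(\FF^I_\wedge)$, even in the cyclic case where $G_{\max}=0$ by Lemma~\ref{IDG}(1)).

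Second I would identify $|\De(G_{\max})|$ with $N(\G_I)$ using Lemma~\ref{IDG}(2), which lists the immediate descendants of $G_{\max}$ as exactly the functions $\bigwedge_{(i,j)\in J}[x_i>x_j]$ ranging over maximal acyclic subgraphs $\G_J$ of $\G_I$. The one point worth a second's care is that this list has no repetitions: distinct maximal acyclic subgraphs yield logically distinct conjunctions. Indeed, if $(i,j)\in J_1\setminus J_2$ then $R(J_1)_{i,j}=1$, while by maximality of $J_2$ adding $(i,j)$ to $J_2$ would create a cycle, forcing $R(J_2)_{j,i}=1$; were $R(J_1)=R(J_2)$, then also $R(J_2)_{i,j}=1$, and together with $R(J_2)_{j,i}=1$ this yields a cycle in $\G_{J_2}$, a contradiction. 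This injectivity is precisely what Lemma~\ref{IDG}(3) records as the equivalence between enumerating immediate descendants and enumerating maximal acyclic subgraphs, so $|\De(G_{\max})|=N(\G_I)$.

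Chaining the two moves gives $\OPT(\FF^I_\wedge)\ge |\De(G_{\max})|=N(\G_I)$, as claimed. There is no substantial obstacle here: the only bookkeeping is to reinterpret the $\vee$-version of Theorem~\ref{lower} correctly as a statement about $\De$ and $G(\cdot)$ in the $\wedge$-lattice, and to confirm that the maximal acyclic subgraphs index the immediate descendants \emph{bijectively}, so that $N(\G_I)$ is neither an over- nor an undercount.
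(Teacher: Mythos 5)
Your proposal is correct and takes essentially the same route as the paper, whose entire proof is the remark that the lemma ``follows immediately from Theorem~\ref{lower} and Lemma~\ref{IDG}'' --- i.e., exactly your two moves of evaluating the $\max_{G}|\De(G)|$ lower bound at $G_{\max}$ and identifying $\De(G_{\max})$ with the maximal acyclic subgraphs of $\G_I$. Your added injectivity check is a sound piece of diligence the paper leaves implicit, though note that arguing via $R(J_1)=R(J_2)$ leans on Lemma~\ref{D02}, which is stated for acyclic $I$; the distinctness also follows more directly because the $F_J$ are representatives in $G(\FF^I_\wedge)$ with distinct sets $S(F_J)$, so they are distinct by item~5 of Lemma~\ref{sfact}.
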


The problem of enumerating all the maximal acyclic subgraphs of a directed graph is still an open problem \citep{ABC12,BCL13,W16}. We show that learning a function in $\FF_\wedge^I$ (where $I\subseteq [n]^2$) in polynomial time is possible if and only if the enumeration problem can be done in polynomial time (the proof is in Appendix~\ref{sec6proofs}).

\begin{theorem}\label{lasttheo} There is a polynomial time learning algorithm (poly$(\OPT(\FF^I_\wedge),n,$ $|I|)$), which for an input $I\subseteq [n]^2$, learns $F\in \FF^I_\wedge$ if and only if there is an algorithm that for an input $\G$, which is a directed graph, enumerates all the maximal acyclic subgraphs of $\G(V,E)$ in polynomial time (poly$(N(\G),|V|,|E|)$).
\end{theorem}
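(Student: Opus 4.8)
The plan is to prove a two-way equivalence between polynomial-time learnability of $\FF^I_\wedge$ and polynomial-time enumeration of maximal acyclic subgraphs, by reducing each problem to the other. The backbone of both directions is Lemma~\ref{IDG}, which identifies the immediate descendants of $G_{\max}$ with the maximal acyclic subgraphs of $\G_I$, together with Lemma~\ref{loweri} which gives $\OPT(\FF^I_\wedge)\ge N(\G_I)$.

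\textbf{Enumeration implies learning.} First I would assume a polynomial-time enumeration algorithm (running in $\mathrm{poly}(N(\G),|V|,|E|)$) and use it as a subroutine to make SPEX polynomial on arbitrary (possibly cyclic) $I$. The only step in SPEX that I have not yet shown to be polynomial in the cyclic case is computing all immediate descendants of a given representative $G$. For this I would restrict attention to the subgraph $\G_{\I(G)}$: by Lemma~\ref{IDG} (applied within the induced problem), the immediate descendants of $G$ correspond to maximal acyclic subgraphs obtained by deleting one edge, but more directly I would observe that finding $\De(G)$ reduces to an enumeration over the relevant induced graph, which the subroutine handles in time polynomial in the number of such descendants. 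Since $|\De(G)|\le N(\G_I)\le\OPT(\FF^I_\wedge)$ and the recursion depth is at most $|I|$ (Theorem~\ref{upper}), the total running time is $\mathrm{poly}(\OPT(\FF^I_\wedge),n,|I|)$, and the witnesses are found in linear time exactly as in Lemma~\ref{fWG}. The key point is that the enumeration cost is charged against $N(\G_I)$, which the lower bound guarantees is at most $\OPT$, so no super-polynomial blowup occurs.

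\textbf{Learning implies enumeration.} Conversely, assume a learning algorithm $A$ running in $\mathrm{poly}(\OPT(\FF^I_\wedge),n,|I|)$. Given a directed graph $\G(V,E)$, I would set $I=E$ and $n=|V|$, take the target to be $F=G_{\min}$ (the top element $1$, equivalently the all-edges conjunction viewed appropriately), and simulate $A$ while answering its membership queries. The crucial observation is that the immediate descendants of $G_{\max}=0$ are precisely the conjunctions $\wedge_{(i,j)\in J}[x_i>x_j]$ for $J$ ranging over maximal acyclic subgraphs of $\G_I$, by Lemma~\ref{IDG}. Therefore, to enumerate all maximal acyclic subgraphs it suffices to compute $\De(G_{\max})$, which is exactly the computation performed in the first invocation of the learning algorithm's descendant-finding step. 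I would extract these descendants from $A$'s execution trace and read off the corresponding edge sets $J$. Since $A$ runs in time $\mathrm{poly}(\OPT,n,|I|)$ and $\OPT(\FF^I_\wedge)\le |I|\cdot N(\G_I)$ by Theorem~\ref{lower} (the upper bound $|\FF|\cdot\OPT$ together with $N(\G_I)\le\OPT$), the enumeration runs in time $\mathrm{poly}(N(\G),|V|,|E|)$.

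\textbf{The main obstacle.} The delicate step is ensuring the reduction in each direction keeps the complexity parameters tightly coupled: I must argue that the learning algorithm's internal descendant computation genuinely \emph{outputs} (or can be forced to output) the full list $\De(G_{\max})$, rather than merely using it implicitly, and that the number of maximal acyclic subgraphs $N(\G_I)$ is polynomially sandwiched between $\OPT(\FF^I_\wedge)$ and $|I|\cdot\OPT(\FF^I_\wedge)$. The lower direction is immediate from Lemma~\ref{loweri}; the upper relation $\OPT\le|I|\cdot N(\G_I)$ must be checked, and I expect this to follow from a SPEX-style analysis showing the total number of immediate descendants encountered across the Hasse diagram traversal is controlled by $N(\G_I)$ times the diagram height $|I|$. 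Handling this two-sided polynomial equivalence carefully, so that neither simulation incurs a hidden exponential factor in the number of subgraphs, is where the real work lies.
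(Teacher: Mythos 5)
Your first direction (enumeration implies learning) is essentially the paper's argument: invoke the enumeration subroutine once, at $G_{\max}=0$, to produce $\De(G_{\max})$ via Lemma~\ref{IDG}, then proceed exactly as in the acyclic case, charging the enumeration cost against $N(\G_I)\le \OPT(\FF^I_\wedge)$ via Lemma~\ref{loweri}. Two small corrections there: at the top level the witnesses for $G_{\max}=0$ and an immediate descendant $G'$ are simply topological sortings of $\G_{\I(G')}$ (Lemma~\ref{fWG} only applies once the algorithm is inside the acyclic part of the diagram); and the quantitative relation your converse needs is $\OPT(\FF^I_\wedge)\le N(\G_I)+|I|$, which follows from the very algorithm this direction constructs (it asks at most $N(\G_I)+|I|$ queries) --- not from Theorem~\ref{lower}, which is a lower bound and cannot yield an upper bound on $\OPT$.

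The genuine gap is in your converse (learning implies enumeration). You propose to ``extract these descendants from $A$'s execution trace'' via the learner's ``descendant-finding step,'' but the hypothesis is only that \emph{some} black-box algorithm learns $\FF^I_\wedge$ in time poly$(\OPT(\FF^I_\wedge),n,|I|)$; it need not be SPEX, need not compute immediate descendants at all, and its internal state is not part of the interface --- only its membership queries are observable. You name this as ``the main obstacle'' but never resolve it, and your choice of target ($G_{\min}$, the constant-$1$ function) aims the simulation the wrong way: answers consistent with the constant-$1$ target reveal nothing about maximal acyclic subgraphs. The paper's resolution is an adversarial simulation with query extraction: run the learner answering $0$ to every query (truthful for the target $F_I=G_{\max}=0$ when $\G$ is cyclic) and collect the set of queried assignments. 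Correctness forces the learner, for every maximal acyclic subgraph $G'$ of $\G$, to ask some $a$ with $F_{E(G')}(a)=1$, since otherwise it cannot distinguish $F_{E(G')}$ from $G_{\max}$; and then $F_{E(G')}(a)=1$ gives $E(G')\subseteq\{(i,j)\in E \mid a_i>a_j\}$, which by maximality of $G'$ is an equality --- so each maximal acyclic subgraph is recoverable from a single query point, with no access to the learner's internals. One then post-processes each queried $a$, forming $E_a=\{(i,j)\in E\mid a_i>a_j\}$ and testing maximality edge-by-edge in polynomial time, and bounds the number of queries by poly$(N(\G),|V|,|E|)$ using $\OPT(\FF^I_\wedge)\le N(\G_I)+|I|$ from the first direction. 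Without this argument your converse does not go through as written.
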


\section{Application to Program Synthesis}\label{sec:synthesis}
In this section, we explain the natural integration of SPEX into program synthesis. We then demonstrate this on a synthesizer that synthesizes programs that detect patterns in time-series charts. These programs meet specifications that belong to the class of variable inequalities $I$ (for acyclic~$I$).

Program synthesizers are defined over an input domain $X_{in}$, an output domain $X_{out}$, and a domain-specific language $D$.
Given a specification, the goal of a synthesizer is to generate a corresponding program.
A specification is a set of formulas $\varphi(x_{in},x_{out})$ where $x_{in}$ is interpreted over $X_{in}$ and $x_{out}$ is interpreted over $X_{out}$.
 Given a specification $Y$, a synthesizer returns a program $P:X_{in}\rightarrow X_{out}$ over $D$ such that for all $in\in X_{in}$: $(in,P(in)) \models Y$ (i.e., all formulas are satisfied for $x_{in}=in$ and $x_{out}=out$).
 Roughly speaking, there are two types of synthesizers:
 \begin{itemize}[nosep,nolistsep]
   \item Synthesizers that assume that $Y$ describes a full specification. Namely, for all $in \in X_{in}$, there exists a single $out\in X_{out}$ such that $(in,out)\models Y$ (e.g., ~\cite{Sketch:PLDI08,BoardSingh:2011:SDS,alur-fmcad13,BornholtTGC16}).
   \item Synthesizers that assume that $Y$ describes only input--output examples (known as PBE synthesizers). Namely, all formulas in the specification take the form of $x_{in}=in\imply x_{out}=out$ (e.g.,~\cite{Gulwani:2011,Polozov:2015,Barowy:2015}). The typical setting of a PBE synthesizer is that an end user (that acts as the teacher) knows a target program $f$ and he or she provides the synthesizer with some initial examples and can interact with the synthesizer through membership queries (we note that most synthesizers do not interact).
 \end{itemize}

 Each approach has its advantages and disadvantages. The first approach guarantees correctness on all inputs, but requires a full specification, which is complex to provide, especially by end users unfamiliar with formulas. On the other hand, PBE synthesizers are user-friendly as they interact through examples; however, generally they do not guarantee correctness on all inputs.

   We next define the class of programs that are \emph{$\FF$-describable}. For such programs, SPEX can be leveraged by both approaches to eliminate their disadvantage.
Let $\FF$ be a set of predicates over $X_{in}\cup X_{out}$. A synthesizer is $\FF$-describable if every program that can be synthesized meets a specification $F\in \FF_{\vee}$ (or dually, $\FF_{\wedge}$).
A synthesizer that assumes that $Y$ is a full specification and is $\FF$-describable can release the user from having to provide the full specification by first running SPEX and then synthesizing a program from the formula returned by SPEX. A PBE synthesizer that is $\FF$-describable can be extended to guarantee correctness on all inputs by first running SPEX and then synthesizing the program from the set of membership queries posed by SPEX.
Theorem~\ref{the:synthesis} follows immediately from Theorem~\ref{upper}.

\begin{theorem}\label{the:synthesis} Let $\FF$ be a set of predicates and $\mathcal{A}$ be an $\FF$-describable synthesizer. Then, $\mathcal{A}$ extended with SPEX returns the target program with at most $|\FF|\cdot \max_{G\in G(\FF_\vee)} |\De(G)|$ membership queries.
\end{theorem}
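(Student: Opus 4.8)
The plan is to reduce Theorem~\ref{the:synthesis} almost entirely to the already-established query complexity bound of Theorem~\ref{upper}, since the synthesis machinery contributes no additional queries beyond those posed by SPEX itself. First I would unpack the definition of an $\FF$-describable synthesizer $\mathcal{A}$: by assumption, every program that $\mathcal{A}$ can produce meets some specification $F\in \FF_\vee$ (or dually $\FF_\wedge$). Thus there is a well-defined \emph{target specification} $F$ associated with the target program, and learning the program is equivalent to identifying $F$. The key observation is that all information about the target is gathered through membership queries answered by the teacher (the end user), and these are precisely the queries that SPEX poses.

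The main steps I would carry out are as follows. First, invoke SPEX as a black box on the class $\FF_\vee$ with the teacher's target $F$. By Theorem~\ref{upper}, SPEX learns $F$ exactly (returning $G_F$) using at most $|\FF|\cdot \max_{G\in G(\FF_\vee)} |\De(G)|$ membership queries. Second, argue that SPEX recovers the exact representative $G_F$, which by Lemma~\ref{sfact} (bullet 4) satisfies $\vee S(G_F)\equiv F$, so SPEX determines $F$ up to logical equivalence. Third, feed the learned formula $F$ into $\mathcal{A}$; since $\mathcal{A}$ is $\FF$-describable, it synthesizes a program $P$ meeting the full specification $F$, hence the target program. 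The crucial point is that this final synthesis call consumes no membership queries: the teacher is queried only during the SPEX phase, and the synthesis step operates purely on the already-recovered formula $F$. Therefore the total number of membership queries equals the number posed by SPEX, which is bounded by $|\FF|\cdot \max_{G\in G(\FF_\vee)} |\De(G)|$.

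The bound then follows immediately. The dual case ($\FF_\wedge$) is handled by the duality remark in Section~\ref{sec:vars}: by De Morgan's law, running the dual SPEX algorithm yields the same query bound after swapping $\vee$ with $\wedge$, so no separate argument is needed.

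I expect the only subtlety — rather than a genuine obstacle — to be making precise the claim that the synthesis step incurs no queries and that recovering $F$ suffices to recover the target program. This rests on the definition of $\FF$-describability, which guarantees a one-to-one correspondence (up to equivalence) between synthesizable programs and specifications in $\FF_\vee$, together with the fact that a correct synthesizer $\mathcal{A}$, given the complete specification $F$, returns a program correct on all of $X_{in}$. Once this correspondence is stated cleanly, the theorem is an essentially immediate corollary of Theorem~\ref{upper}, which is why the paper remarks that it ``follows immediately.''
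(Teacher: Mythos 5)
Your proposal is correct and takes essentially the same route as the paper, which proves Theorem~\ref{the:synthesis} simply by noting that it follows immediately from the query bound of Theorem~\ref{upper}, since SPEX supplies the complete target specification $F$ and the synthesis phase itself poses no membership queries. Your elaboration — that the teacher is queried only during the SPEX phase, that $\FF$-describability lets the synthesizer work offline from the recovered formula, and that the dual case follows by De Morgan — merely makes explicit the reasoning the paper leaves implicit.
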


\subsection{Example: Synthesis of Time-series Patterns}
In this section, we consider the setting of synthesizing programs that detect time-series patterns.
The specifications of these programs are over $\FF_\wedge^I$ for some $I\subseteq [n]^2$ and thus the synthesizer learns the target program within $|I|$  membership queries. Time-series are used in many domains including financial analysis \citep{EncPatterns}, medicine \citep{Chuah2007}, and seismology \citep{MoralesEsteban}. Experts use these charts to predict important events (e.g., trend changes in stock prices) by looking for \emph{patterns} in the charts. There are a variety of platforms that enable users to write programs to detect a pattern in a time-series chart. In this work, we consider a domain-specific language (DSL) of a popular trading platform,~\cite{AmiBroker}. Our synthesizer can easily be extended to other DSLs.

A time-series chart $c:\N \rightarrow \R$ maps points in time to real values (e.g., stock prices). A time-series pattern is a conjunction $F_I$ for acyclic $I$. The size of $F_I$ is the maximal natural number it contains, i.e., the size of $F_I$ is $argmax_i\{i \mid \exists j.(i,j)\in I \text{ or } (j,i)\in I\}$. A program detects a pattern $F_I$ of size $k$ in a time-series chart $c$ if it alerts upon every $t \in \N$ for which the $t_1,...,t_{k-1}$ preceding extreme points satisfy $F_I(c(t_1),...,c(t_{k-1}),c(t))=1$.

In this setting, $X_{in}$ is a set of charts over a fixed $k\in \N$, that is $f:\{1,...,k\}\rightarrow \R$, and $X_{out} =\{0,1\}$. The DSL $D$ is the DSL of the trading platform~\cite{AmiBroker}.
We built a synthesizer that not only interacts with the end user through membership queries, but also displays them as charts (of size $k$). Thereby, our synthesizer communicates with the end user in his language of expertise. The synthesizer takes as input an initial chart example $c':\{1,...,k\}\rightarrow \R$ and initializes $I$ to $\{(i,j)\in [k]^2\mid c'(i) \geq c'(j)\}$ and sets $\FF^I:=\{ [x_i\geq x_j]\ | (i,j)\in I\}$ (our results are true also for these kinds of predicates). It then executes SPEX to learn $F$. During the execution, every witness is translated into a chart (the translation is immediate since each witness is an assignment to $k$ points). Finally, our synthesizer synthesizes a program by synthesizing instructions that detect the $k$ extreme points in the chart, followed by an instruction that checks whether these points satisfy the formula $F$ and alerts the end user if so (the technical details are beyond the scope of this paper). Namely, for the end user, our synthesizer acts as a PBE synthesizer, but internally it takes the first synthesis approach and assumes it is given a full specification (which is obtained by running SPEX). The complexity of the overall synthesis algorithm is determined by SPEX (as the synthesis merely synthesizes the instructions according to the specification $F$), and thus from Theorem~\ref{theorem:acyclic}, we infer the following theorem.

\begin{theorem} The pattern synthesizer returns a program that detects the target pattern in polynomial time with at most $k^2$ membership queries, where $k$ is the target pattern size.
\end{theorem}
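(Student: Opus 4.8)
The plan is to reduce this final theorem directly to the acyclic case established in Theorem~\ref{theorem:acyclic}, since the pattern-synthesis setting is, by construction, an instance of learning $\FF^I_\wedge$ for an acyclic $I\subseteq[k]^2$. First I would observe that the synthesizer begins with an initial chart example $c':\{1,\ldots,k\}\to\R$ and sets $I=\{(i,j)\in[k]^2\mid c'(i)\ge c'(j)\}$ with $\FF^I=\{[x_i\ge x_j]\mid (i,j)\in I\}$. Because $c'$ assigns concrete real values to the $k$ points, the induced relation $[x_i\ge x_j]$ holds exactly according to the total preorder of those values, so $\G_I$ admits $c'$ itself (or any assignment consistent with the values of $c'$) as a topological sorting; hence $I$ is acyclic. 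The target pattern is a conjunction over these predicates, i.e. $F\in\FF^I_\wedge$, which is precisely the class covered by the acyclic results.

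Next I would invoke Theorem~\ref{theorem:acyclic} to conclude that SPEX learns $F$ in polynomial time using at most $|I|$ membership queries. The remaining quantitative step is to bound $|I|$ by $k^2$: since $I\subseteq[k]^2$ and we assume $(i,i)\notin I$, we have $|I|\le k^2-k\le k^2$, giving the stated query bound. For the running time, I would appeal to the discussion preceding Theorem~\ref{theorem:acyclic}: finding immediate descendants requires reachability computations in $\G_{\I(G)}$ (polynomial via the reachability matrix $R(\I(G))$, Lemma~\ref{DI}), and finding each witness requires a topological sorting (linear time, Lemma~\ref{fWG}), so each SPEX step is polynomial in $k$, and the recursion depth is bounded by $|I|$.

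Finally I would account for the synthesis wrapper itself. The theorem statement asserts that the \emph{overall} synthesizer is polynomial, so I must argue that translating witnesses into charts and emitting the detecting program add only polynomial overhead. Each witness is an assignment to $k$ points, so its rendering as a chart is immediate (linear in $k$); the final code-generation step emits instructions to locate the $k$ extreme points and evaluate $F$, whose size is polynomial in $|I|\le k^2$. Thus the total cost is dominated by SPEX, and the number of user-facing membership queries is exactly the number SPEX poses, namely at most $k^2$. I expect the only genuinely delicate point to be confirming acyclicity of the initialized $I$ and verifying that the synthesizer's predicates $[x_i\ge x_j]$ (rather than strict $[x_i>x_j]$) still fall under the acyclic machinery; the paper already flags that ``our results are true also for these kinds of predicates,'' so I would either cite that remark or note that replacing strict with non-strict inequalities corresponds to allowing ties in the topological sorting, which preserves acyclicity and all the supporting lemmas. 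The rest is a straightforward bookkeeping reduction to Theorem~\ref{theorem:acyclic}.
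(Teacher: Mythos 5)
Your proposal is correct and follows essentially the same route as the paper, which proves this theorem in one line by invoking Theorem~\ref{theorem:acyclic} after setting up the synthesizer as an acyclic variable-inequality instance with $I\subseteq[k]^2$ (hence $|I|\le k^2$ queries) and noting that the synthesis wrapper merely emits instructions for the learned specification, so SPEX dominates the running time. Your handling of the non-strict predicates $[x_i\ge x_j]$ by citing the paper's parenthetical remark is also exactly how the paper disposes of that point.
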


\section{Related Work}

\sloppy
\paragraph{Program Synthesis}
Program synthesis has drawn a lot of attention over the last decade, especially in the setting of synthesis from examples, known as PBE
(e.g.,~\cite{Gulwani:2010,Lau03,Sarma:2010,Harris:PLDI11,Gulwani:2011,Gulwani:CACM12,Singh:VLDB12,Yessenov13,Recursive:2013,Sai:2013,Aditya13,FlashExtract:14,Barowy:2015,Polozov:2015}). Commonly, PBE algorithms synthesize programs consistent with the examples, which may not capture the user intent.
Some works, however, guarantee to output the target program. For example, CEGIS~\citep{solar2008program} learns a program via equivalence queries, and oracle-based synthesis~\citep{BitManipulation:2010}
assumes that the input space is finite, which allows it to guarantee correctness by exploring all inputs (i.e., without validation queries).
Synthesis has also been studied in a setting where a specification and the program's syntax are given and the goal is to find a program over this syntax meeting the specification (e.g.,~\cite{Sketch:PLDI08,BoardSingh:2011:SDS,alur-fmcad13,BornholtTGC16}).

\paragraph{Queries over Streams} Several works aim to help analysts.
Many trading software platforms provide domain-specific
languages for writing queries where the user defines the query and the system is responsible for the sliding window mechanism,
e.g., \emph{\href{www.metaquotes.net}{MetaTrader}},
\emph{\href{www.metastock.com}{MetaStock}},
\emph{\href{www.ninjatrader.com}{NinjaTrader}}, and
\emph{Microsoft's StreamInsight}~\citep{AFA}.
Another tool designed to help analysts is \emph{Stat!}~\citep{Stat:2013}, an interactive tool enabling analysts to write queries in StreamInsight. \emph{TimeFork}~\citep{BadamZSEE16} is an interactive tool that helps analysts with predictions based on automatic analysis of the past stock price.
\emph{CPL}~\citep{CPL} is a Haskell-based high-level language designed for
chart pattern queries.
Many other languages support queries for streams.
\emph{SASE}~\citep{Wu:2006} is a system designed for RFID (radio frequency identification) streams that offers a user-friendly language and can handle large volumes of data. \emph{Cayuga}~\citep{Brenna:2007} is a system for detecting complex patterns in streams, whose language is based on Cayuga algebra. \emph{SPL}~\citep{Hirzel:2013} is IBM's stream processing language supporting pattern detections. \emph{ActiveSheets}~\citep{vaziri2014stream} is a platform that enables Microsoft Excel to process real-time streams from within spreadsheets.

\section{Conclusion}
In this paper, we have studied the learnability of disjunctions $\FF_\vee$ (and conjunctions) over a set of boolean functions $\FF$. We have shown an algorithm SPEX that asks at most $|\FF|\cdot OPT(\FF_\vee)$ membership queries. We further showed two classes that SPEX can learn in polynomial time. We then showed a practical application of SPEX that augments PBE synthesizers, giving them the ability to guarantee to output the target program as the end user intended. Lastly, we showed a synthesizer that learns time-series patterns in polynomial time and outputs an executable program, while interacting with the end user through visual charts.
\paragraph{Acknowledgements}
The research leading to these results has received funding from the European Union's - Seventh Framework Programme (FP7) under grant agreement {n$^o$}~615688--ERC-COG-PRIME.

\bibliography{bib}
\appendix
\section{The Dual SPEX Algorithm}\label{app:dual}
Figure~\ref{Alg1} shows the dual SPEX algorithm for learning functions in $\FF_\wedge$.
\begin{figure}[h!]
  \begin{center}
   \fbox{\fbox{\begin{minipage}{28em}
  \begin{tabbing}
  xxxx\=xxxx\=xxxx\=xxxx\= \kill
  \>\underline{{\bf Algorithm: Dual SPEX -- The target function is $F$.}}\\ \\
  \> Learn($G\gets G_{\max}$,$T\gets \O$).\\ \\
  \> \underline{{\bf Learn}$(G,T)$}\\
  1. \>$S\gets S(G)$; Flag$=1$.\\
  2. \> For every immediate descendant $G'$ of $G$:\\
  3. \> \> If $S(G')\not\subset R$ for all $R\in T$ then:\\
  4. \> \> \> Find a witness $a$ for $G'$ and $G$.\\
  5. \> \> \> If $F(a)={\color{blue} 1}$ then: $S\gets S\cap S(G')$; Flag$=0$.\\
  6. \> \> \> If $F(a)={\color{blue} 0}$ then: $T\gets T\cup \{S(G')\}$.\\
  7. \>\> EndIf\\
  8. \> EndFor\\
  9. \> If Flag$=1$ then: Output($\wedge S$)\\
  10. \>\> Else Learn$(\wedge S, T)$.\\
  \end{tabbing}\end{minipage}}}
  \end{center}
	\caption{The dual algorithm of SPEX for learning functions in $\FF_\wedge$.}
	\label{Alg1}
	\end{figure}
\section{Proofs for Section~\ref{sec:32}}\label{sec3proofs}

{\bf Proof of Lemma \ref{fact}}
Consider $G_F$. Since $F=G_F$, $G_1\imply G_F\imply G_2$. By the definition of immediate descendants, we get the result.\qed\\
{\bf Proof of Lemma \ref{trivial}}
Bullet {\it 1}: Consider $G_F$. Then $G_F=F$ and $G_F\in G(\FF_\vee)$. Since $G_1,G_2\imply G_F\imply $ $\lca(G_1,G_2)$, by the definition of LCA we must have $G_F=\lca(G_1,G_2)$.
The proof of {\it 2} is similar.\qed\\
{\bf Proof of Lemma \ref{lca}}
\sloppy
Since $G_1,G_2\imply \lca(G_1,G_2)$, we get $G_1\vee G_2\imply \lca(G_1,G_2)$.
Since $G_1,G_2\imply (G_1\vee G_2)\imply \lca(G_1,G_2)$, by Lemma~\ref{trivial}, we get
$G_1\vee G_2=\lca(G_1,G_2)$.\qed\\
{\bf Proof of Lemma \ref{gcd}}
Let $G=\gcd(G_1,G_2)$.
We show that $S(G)\subseteq S(G_1)\cap S(G_2)$ and $S(G_1)\cap S(G_2) \subseteq S(G)$.
By Lemma~\ref{GG}, $S(G)\subseteq S(G_1)$ and $S(G)\subseteq S(G_2)$, and therefore, $S(G)\subseteq S(G_1)\cap S(G_2)$.
Since $S(G)\subseteq S(G_1)\cap S(G_2)$, we also have $G=\vee S(G)\imply \vee(S(G_1)\cap S(G_2)) \imply $ $G_1,G_2$. Therefore, by Lemma~\ref{trivial}, we get $G = \vee (S(G_1)\cap S(G_2))$. Thus, $S(G_1)\cap S(G_2) \subseteq S(G)$.\qed

\section{Additional Proofs for Section~\ref{sec:vars}}\label{sec6proofs}
{\bf Proof of Lemma \ref{D02}} \emph{Only if:} Assume $F_1=F_2$. Suppose, on the contrary, that there are $i$, $j$ such that w.l.o.g. $R(\I(F_1))_{i,j}=0$ and $R(\I(F_2))_{i,j}$ $=1$. Since $I$ is acyclic and $R(\I(F_2))_{i,j}=1$, there is no path from $j$ to $i$ in ${\cal G}_I$ (and therefore, in $\G_{\I(F_1)}$). Since $R(\I(F_1))_{i,j}=0$, there is also no path from $i$ to $j$ in $\G_{\I(F_1)}$.
Therefore, \DanaFD{we can}{if we} match \DanaFD{the vertices}{vertex} $i$ and $j$ in $\G_{\I(F_1)}$ (\DanaFD{unify them into a single vertex}{make them one vertex}) \DanaFD{and}{we} get an acyclic graph $\G'$. Using the topological sorting of $\G'$, we get a satisfying assignment $a$ for $F_1$ that satisfies $a_i=a_j$. We now show that $F_2(a)=0$ and thus get a contradiction. Since $R(\I(F_2))_{i,j}=1$, there is a path $i=i_1\to i_2\to \cdots \to i_\ell=j$ from $i$ to $j$ in $\G_{\I(F_2)}$. Therefore, $F_2$ contains $F':=[x_{i_1}>x_{i_2}]\wedge [x_{i_2}>x_{i_3}]\wedge\cdots \wedge [x_{i_{\ell-1}}>x_{i_\ell}]$. Since $F_2\imply F'\imply [x_{i_1}>x_{i_\ell}]=[x_i>x_j]$ and our assignment satisfies $[a_i>a_j]=0$, we get $F_2(a)=0$.

\emph{If:} Assume $R(\I(F_2))=R(\I(F_1))$. Suppose, on the contrary, that $F_2\not=F_1$. Then, there is an assignment $a$ such that $F_2(a)=1$ and $F_1(a)=0$ (or vice versa). Since $F_1(a)=0$, $a$ is not a topological sorting of $\G_{\I(F_1)}$. Therefore, there is an edge $i\to j$ in $\G_{\I(F_1)}$ such that $a_i\le a_j$. Since $R(\I(F_2))_{i,j}$$=$$R(\I(F_1))_{i,j}$$=$$1$, there is a path from $i$ to $j$ in $\G_{\I(F_2)}$. As before, we get a contradiction.\qed\\
{\bf Proof of Lemma \ref{DI}} Since $(r,s)\in \I(G)$, we have $R(\I(G))_{r,s}$$=$$1$. On the other hand,
since there is no path from $r$ to $s$ in $\G_{\I(G)\backslash \{(r,s)\}}$, we have $R(\I(G^{r,s}))_{r,s}=0$. Therefore, $R(\I(G))\not= R(\I(G^{r,s}))$ and by Lemma~\ref{D02}, we get $G\not= G^{r,s}$. By Lemma~\ref{imm}, $G^{r,s}$ is an immediate descendant of $G$.

To show that there is no other immediate descendant, we use (the dual result of) Lemma~\ref{allim}. Note that $S(G)\backslash S(G^{r,s})=\{[x_{r}>x_{s}]\}$ and thus, by Lemma~\ref{allim}, it is sufficient to prove that
$G=G':=\wedge_{(i,j)\in J} [x_i>x_j]$, where $J=\{(i,j)\in \I(G)\ |\ $ there is no path from $i$ to $j$ in $\G_{\I(G)\backslash\{(i,j)\}}\}$. To prove it, we show $R(\I(G'))=R(J)$, and then the result follows from Lemma~\ref{D02}.

If $R(J)_{i,j}=1$, then $R(\I(G'))_{i,j}=1$ since $\G_{\I(G')}$ is a subgraph of $\G_{\I(G)}$. If $R(\I(G'))_{i,j}=1$, then there is a path from $i$ to $j$ in $\G_{\I(G')}$, and therefore, there is a path from $i$ to $j$ in $\G_{\I(G)}$, and thus $R(\I(G))_{i,j}=1$. Since $R(\I(G))_{i,j}=1$, there is a path $p$ from $i$ to $j$ in $\G_{\I(G)}$. Let $(r,s)\not \in \I(G)\backslash J$. Then, $(r,s)\in \I(G)$ and there is a path (other than $r\to s$) $r\to v_1\to v_2\to \cdots\to v_\ell=s$ in $\G_{\I(G)}$. We now show that there is a path from $i$ to $j$ in $\G_{\I(G)\backslash \{(r,s)\}}$. This is true because if the path $p$ (in $\G_{\I(G)}$) contains the edge $r\to s$, then we can replace this edge with the path $r\to v_1\to v_2\to \cdots\to v_\ell=s$ and get a new path from $i$ to $j$ in $\G_{\I(G)\backslash \{(r,s)\}}$. Therefore, $R(\I(G)\backslash \{(r,s)\})_{i,j}=1$. By repeating this on the other edges in $\I(G)\backslash J$, we get $R(J)_{i,j}=1$.\qed\\
{\bf Proof of Theorem~\ref{lasttheo}}:
\emph{If:} Let ${\cal A}$ be an algorithm that for an input $\G$, which is a directed graph, enumerates all the maximal acyclic subgraphs in polynomial time (poly$(N(\G),|V|,|E|)$). The first step of SPEX (in Figure~\ref{Alg1}) finds all the immediate descendants of $G_{\max}$. By Lemma~\ref{IDG}, this is equivalent to enumerating all the maximal acyclic subgraphs of $\G_I$. This can be done by ${\cal A}$ in time $poly(N(\G_I),n,|I|)$. For every immediate descendant $G'$ of $G_{\max}=0$, any topological sorting of $G'$ is a witness for $G'$ and $G$.
Once SPEX calls Learn on one of the immediate descendants of $G_{\max}$, the algorithm proceeds as in the acyclic case. This algorithm runs in time $poly(N(\G_I),n,|I|)$ time and asks at most $N(G_I)+|I|$ membership queries. By Lemma~\ref{loweri}, the algorithm runs in time $poly(\OPT(\FF^I_\wedge),n,|I|)$ and asks at most $\OPT(\FF^I_\wedge)+|I|$ queries.\\
\begin{figure}[t]
\centering
\includegraphics[trim = 0 1cm 0 1cm,width=0.7\textwidth,height=4.5cm]{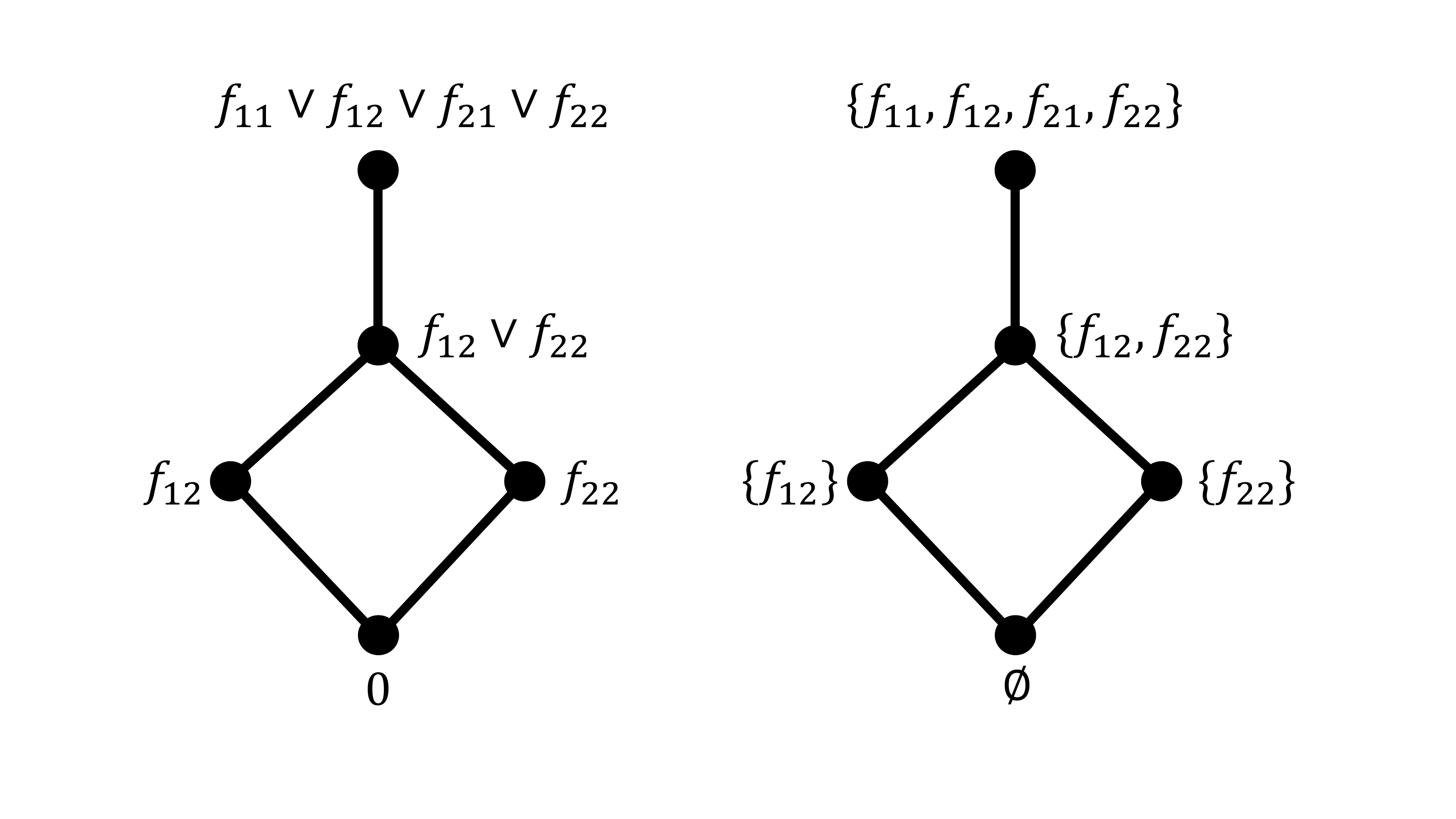}
\caption{Left: the Hasse diagram of $\Ray_2^2$. Right: the corresponding $S(G)$ sets.}
\label{HasseRay22}
\end{figure}
\begin{figure}[t]
\centering
\includegraphics[trim = 0 1cm 0 1cm,width=0.9\textwidth,height=5.5cm]{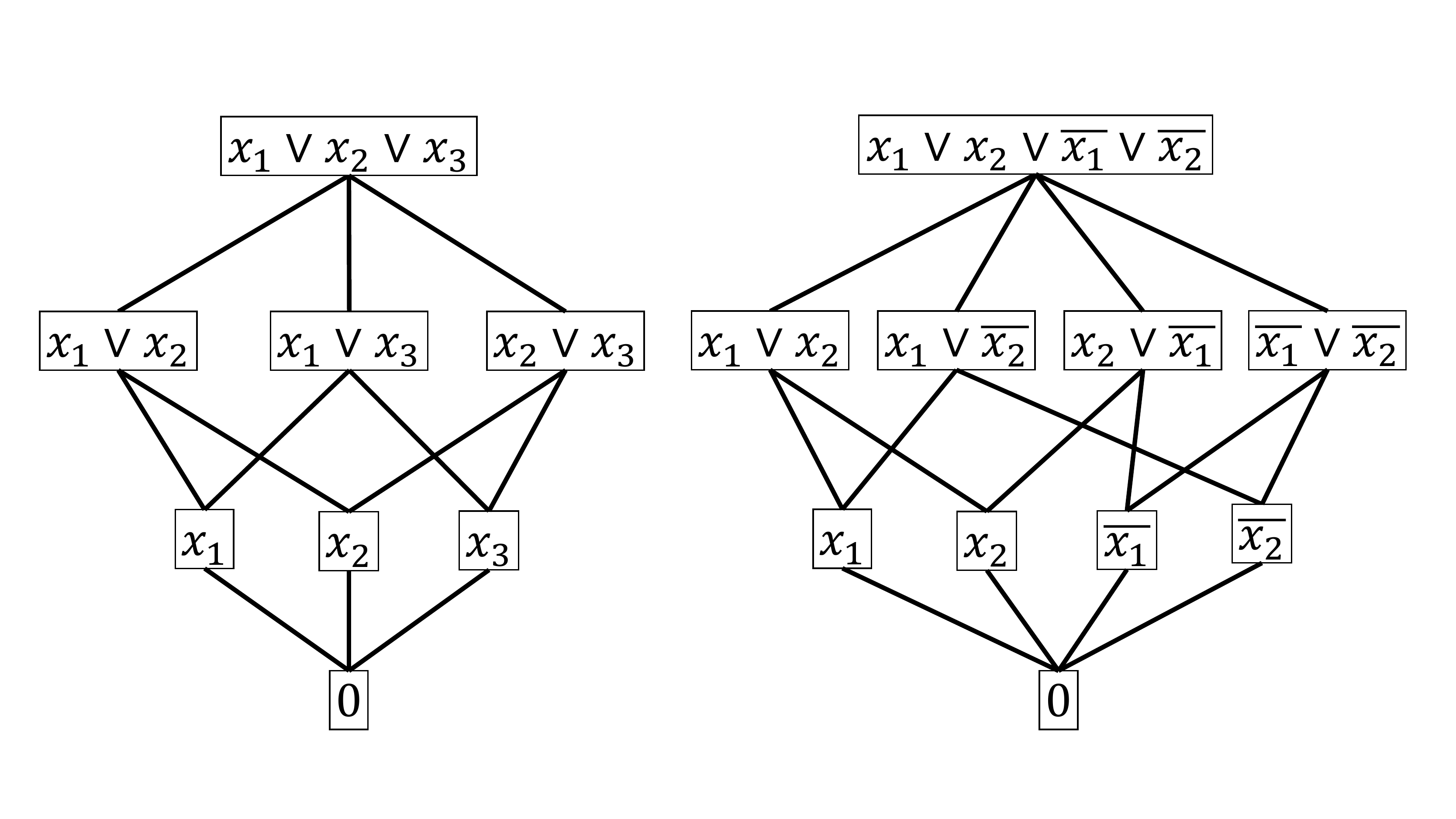}
\caption{Hasse diagram of terms and monotone terms.}
\label{HasseClause32}
\end{figure}
\emph{Only if:} Let ${\cal B}$ be a learning algorithm that runs in poly$(\OPT(\FF^I_\wedge),n,|I|)$. By the above argument:
\begin{eqnarray}\label{eq34}
\OPT(\FF^I_\wedge)\le N(\G_I)+|I|.
\end{eqnarray}
Let $\G=([n],E)$ be any directed graph. We run the learning algorithm with the target $F_I$ where $I=E$. For any membership query asked by the algorithm, we answer $0$ until the algorithm outputs the hypothesis $G_{\max}=0$. Suppose $A$ is the set of all membership queries that are asked by the algorithm. We now claim that:
\begin{enumerate}[nosep,nolistsep]
\item $|A|=poly(N(\G), n,|E|)$.
\item If $G'$ is a maximal acyclic subgraph of $G$, then there is an assignment $a\in A$ such that $E(G')=\{(i,j)\in E\ | \ a_i>a_j\}$, where $E(G')$ is the set of edges of~$G'$.
\end{enumerate}
Bullet 1 follows since ${\cal B}$ runs in time $poly(\OPT(\FF^I_\wedge),n,|I|)$ and by (\ref{eq34}) this is $poly(N(\G), n,|E|)$. So the number of membership queries cannot be more than $poly(N(\G), n,|E|)$ time.

We now prove bullet 2. There is an assignment $a\in A$ that satisfies $F_{E(G')}(a)=1$, because otherwise the algorithm cannot distinguish between $F_{E(G')}$ and $G_{\max}$, which violates the correctness of the algorithm. Now, since $F_{E(G')}(a)=1$, we must have $E(G')\subseteq \{(i,j)\in E\ | \ a_i>a_j\}$. Since $E(G')$ is maximal (adding another edge will create a cycle), we get $E(G')= \{(i,j)\in E\ | \ a_i>a_j\}$.

The algorithm that enumerates all the maximal acyclic subgraphs of $\G(V,E)$ continues to run as follows: for each $a\in A$, it defines $E_a:=\{(i,j)\in E\ | \ a_i>a_j\}$. If $G_a:=([n],E_a)$ is a maximal cyclic subgraph, then it lists $G_a$.
Testing whether $G_a:=([n],E_a)$ is maximal can be done in polynomial time (e.g., by checking edge-by-edge in $E$). It is easy to verify that the algorithm runs in poly$(N(\G),|V|,|E|)$ time.\qed

\begin{figure}
\centering
\includegraphics[trim = 0 1cm 0 1cm,width=0.9\textwidth,height=8cm]{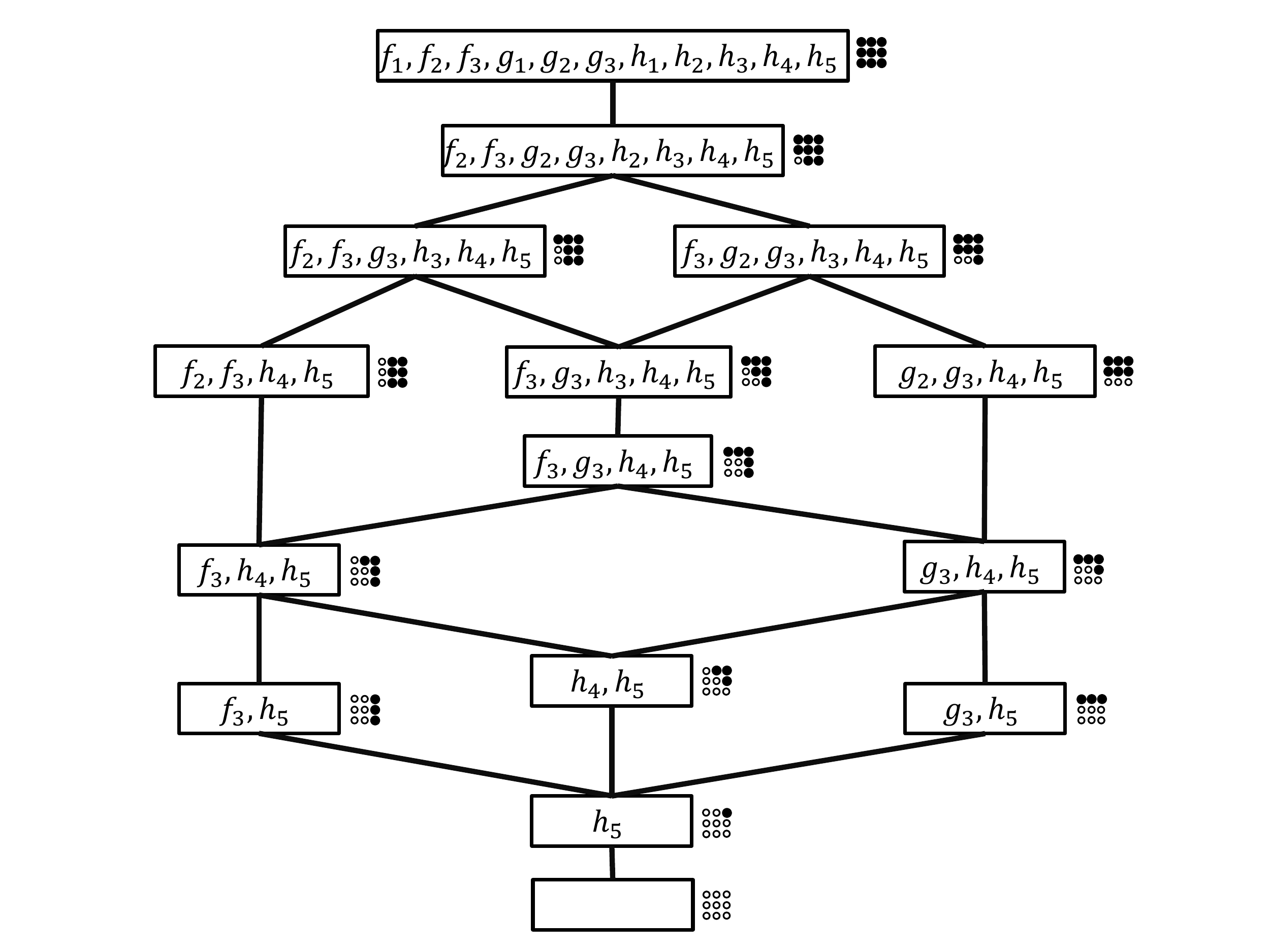}
\caption{Hasse diagram of $\FF$$=$$\{f_1,f_2,f_3,g_1,g_2,g_3,h_1,\ldots,h_5\}$ whose functions are $\{1,2,3\}$$\times$$\{1,2,3\}$$\to$$\{0,1\}$ where $f_i(x_1,y_1)$$=$$[x_1\ge i]$, $g_i(x_1,x_2)$$=$$[x_2\ge i]$ and $h_i(x_1,x_2)$$=$$[x_1+x_2\ge i+1]$.}
\label{RAY23E}
\end{figure}

\begin{figure}
\centering
\includegraphics[trim = 0 1cm 0 1cm,width=0.9\textwidth,height=9cm]{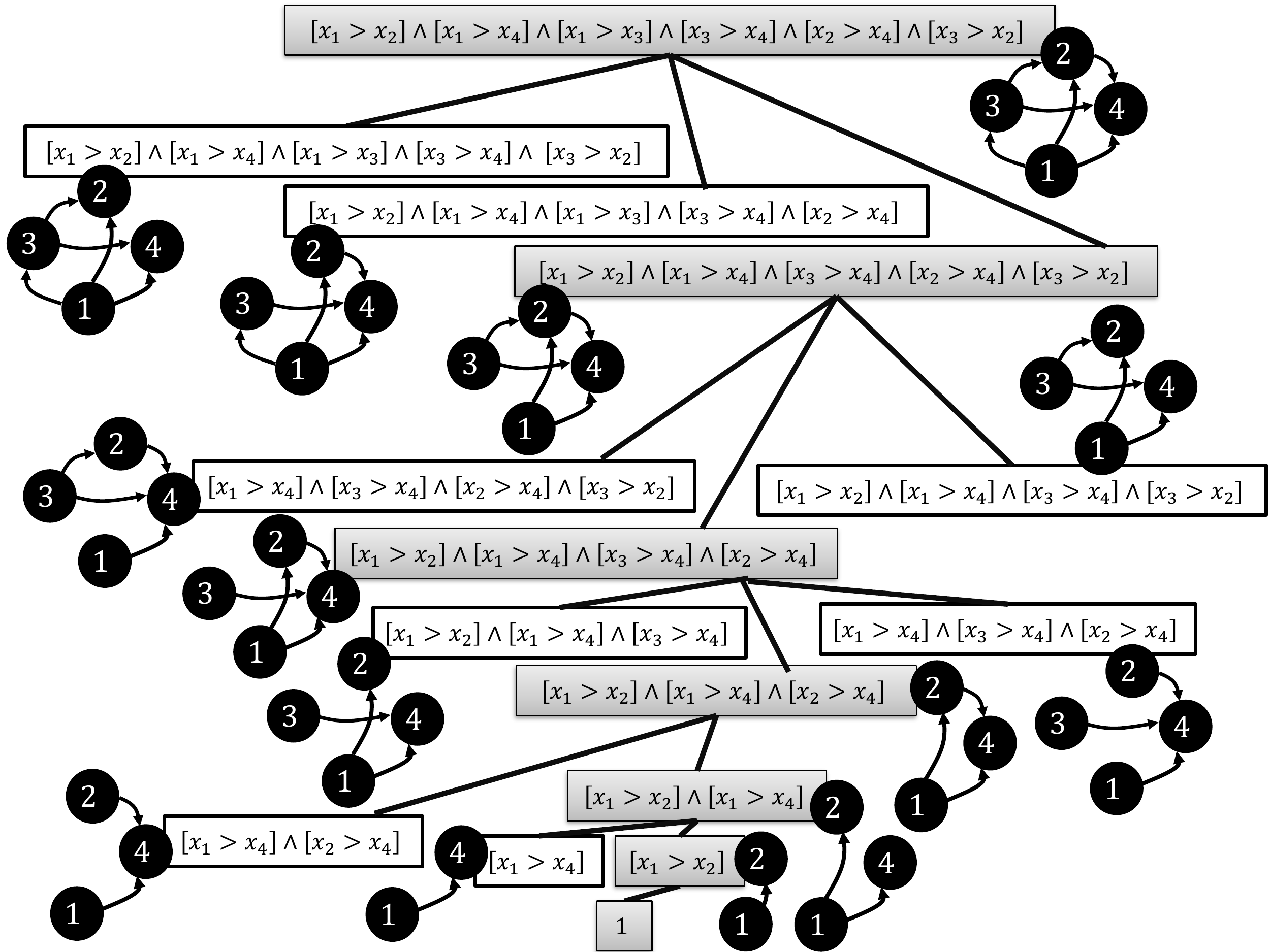}
\caption{A path from $G_{\max}$ to $G_{\min}$ in the inequality predicate diagram. Here $\I(G_{\max})=\{(1,2),(1,4),(1,3),(3,4),(2,4),(3,2)\}$}
\label{Graph}
\end{figure}
\section{Additional Figures}
Here, we provide Figures~\ref{HasseRay22}, \ref{HasseClause32}, \ref{RAY23E}, and \ref{Graph}.
\end{document}